\RequirePackage{fix-cm}
\documentclass{article}

\usepackage{iclr2027_conference,times}

\usepackage[utf8]{inputenc} 
\usepackage[T1]{fontenc}    
\usepackage{hyperref}       
\usepackage{url}            
\usepackage{amsfonts}       
\usepackage{nicefrac}       
\usepackage{microtype}      
\usepackage{xcolor}         
\newcommand{\lossrevision}[1]{#1}
\usepackage{comment}
\usepackage{amsmath, amssymb, amsthm}
\usepackage{algorithm}
\usepackage{algpseudocode}
\usepackage{booktabs}
\usepackage{multirow}
\usepackage{graphicx}
\usepackage{array}
\usepackage{tabularx}
\usepackage{subcaption} 

\usepackage{enumitem}
\makeatletter
\renewcommand{\section}{\@startsection{section}{1}{\z@}
  {-2.0ex plus -0.5ex minus -0.2ex}{1.0ex plus 0.2ex minus 0.2ex}
  {\large\sc\raggedright}}
\renewcommand{\subsection}{\@startsection{subsection}{2}{\z@}
  {-1.8ex plus -0.5ex minus -0.2ex}{0.5ex plus 0.1ex}
  {\normalsize\sc\raggedright}}
\renewcommand{\subsubsection}{\@startsection{subsubsection}{3}{\z@}
  {-1.5ex plus -0.5ex minus -0.2ex}{0.3ex plus 0.1ex}
  {\normalsize\sc\raggedright}}
\makeatother
\newtheorem{lemma}{Lemma}
\newtheorem{proposition}{Proposition}
\newtheorem*{outerproprestated}{Proposition~\ref{prop:outer_supergrad_main}}
\newtheorem{theorem}{Theorem}
\newtheorem{corollary}{Corollary}

\newcommand{\ie}{{\it i.e.}}

\newcommand{\bc}{\begin{center}}
	\newcommand{\ec}{\end{center}}
\newcommand{\be}{\begin{equation}}
	\newcommand{\ee}{\end{equation}}
\newcommand{\ba}{\begin{array}}
	\newcommand{\ea}{\end{array}}
\newcommand{\bean}{\setlength\arraycolsep{1pt}\begin{eqnarray*}}
	\newcommand{\eean}{\end{eqnarray*}}
\newcommand{\bea}{\setlength\arraycolsep{1pt}\begin{eqnarray}}
	\newcommand{\eea}{\end{eqnarray}}
\newcommand{\ben}{\begin{enumerate}}
	\newcommand{\een}{\end{enumerate}}
\newcommand{\bed}{\begin{itemize}}
	\newcommand{\eed}{\end{itemize}}

\DeclareMathOperator*{\argmin}{argmin}

\graphicspath{ {./images/} }
\usepackage{silence}
\title{Distributionally Robust Survival Models \\
under Subpopulation Shift and Outlier \\
Contamination}

\author{Seonghwi Kim \\
Pohang University of Science and Technology \\
\texttt{kshwi@postech.ac.kr}
\And
Sung Ho Jo \\
Pohang University of Science and Technology \\
\texttt{tjdgh1813@postech.ac.kr}
\And
Minwoo Chae\thanks{Corresponding author.} \\
Pohang University of Science and Technology \\
\texttt{mchae@postech.ac.kr}
}

\iclrfinalcopy
\hypersetup{
  hidelinks,
  pdftitle={Distributionally Robust Survival Models under Subpopulation Shift and Outlier Contamination},
  pdfauthor={Seonghwi Kim, Sung Ho Jo, Minwoo Chae}
}

\begin{document}

\maketitle
\lhead{Preprint}

\begin{abstract}
Learning robust survival models under distribution shift is an important but challenging problem in many applications. In heterogeneous populations, a model that performs well on average may still perform poorly on certain subpopulations, and this issue becomes even more severe when the training data are contaminated by outliers. In this paper, we propose a novel distributionally robust framework for survival analysis that jointly addresses latent subpopulation shift and outlier contamination. The proposed method combines an outer minimization that selects a refined nominal distribution by reducing the influence of contaminated samples and an inner maximization that focuses on the most challenging subpopulation. This formulation directly accommodates non-decomposable survival losses while preserving interactions across samples, including the risk-set structure of the Cox negative partial log-likelihood. We develop an alternating gradient-based algorithm with outer updates derived from the KKT conditions of the inner maximization. Experiments on simulated data and two survival benchmarks demonstrate that the proposed method remains robust when subpopulation shift and outlier contamination occur simultaneously. It stabilizes training in contaminated settings and substantially improves worst-group performance across both linear and nonlinear survival models, while maintaining competitive and sometimes superior overall performance.
\end{abstract}

\section{Introduction}
Survival analysis is a fundamental problem in statistics and machine learning, with broad applications in healthcare, reliability analysis, and customer retention. In many applications, the population comprises subpopulations with distinct survival patterns. Such heterogeneity may arise across demographic groups such as age, sex, and race, as well as across hospitals, clinical environments, or latent patient subtypes~\citep{hu2024fairness, rahman2022fair, fan2024stable}. When the mixture proportions of these subpopulations differ between training and test distributions, a model with strong average performance can still perform poorly on some subpopulations~\citep{sagawa2019distributionally, yang2023change, hu2024fairness}.

To address this challenge, distributionally robust learning, or distributionally robust optimization\footnote{While DRO was originally studied in the context of optimization, we use the term more broadly to refer to learning methods for robustness under distribution shift.} (DRO), has been widely studied as a principled framework for improving robustness under distributional shift~\citep{mohajerin2018data, duchi2021learning}.
Applying DRO to survival analysis, however, is not straightforward. \lossrevision{Many commonly used survival losses are non-decomposable, involving interactions across samples.} For example, the Cox negative partial log-likelihood depends on risk-set interactions, whereas ranking-based survival losses are defined through pairwise comparisons across samples~\citep{cox1972regression, steck2007ranking, lee2018deephit}. \lossrevision{In contrast, many existing DRO methods assume additive losses, which decompose into a sum of terms that each depend on a single sample.} \lossrevision{This structural difference prevents these methods from being directly applied to such survival losses}~\citep{jin2025distributionally, hu2022distributionally, hu2024fairness}.

Recently, \citet{hu2024fairness} proposed an $f$-divergence based DRO ($f$-DRO) framework for survival models, focusing on the $\chi^2$-divergence, and introduced a sample-splitting strategy to handle the interaction structure of survival losses. Their results show that $f$-DRO can improve robustness to latent subpopulation shift in survival analysis. However, this approach is highly sensitive to outliers or contaminated observations, since it improves robustness by assigning larger weights to high-loss samples. DORO~\citep{zhai2021doro} addresses both subpopulation shift and outlier contamination, but is designed for additive losses and does not directly apply to non-decomposable survival losses with cross-sample dependence.

To address these limitations, we propose a coupled distributionally robust framework for survival analysis that jointly addresses latent subpopulation shift and outlier contamination. The proposed formulation is based on a mixture-based ambiguity set with different roles in a coupled minimax problem.
The outer minimization refines the nominal distribution to reduce the influence of potentially contaminated samples. Based on this refined distribution, the inner maximization concentrates on the most challenging subpopulation. In this way, the proposed method iteratively refines the nominal distribution by reducing the influence of contaminated samples and reweights the most challenging subpopulation within the refined distribution, providing robustness to both outlier contamination and subpopulation shift.

For differentiable non-decomposable survival losses, we develop an efficient alternating gradient-based algorithm with outer updates derived from the KKT conditions of the inner maximization. We instantiate the framework with the Cox negative partial log-likelihood, preserving its risk-set interactions. We further show that, for additive losses, the formulation admits a two-stage structure: the outer minimization performs $\epsilon$-trimming, and the inner maximization reweights the highest-loss $\alpha$-fraction of the remaining probability mass.

Through simulation studies and experiments on two widely used survival benchmarks, METABRIC and FLC, we show that the proposed method remains robust under subpopulation shift and outlier contamination. In simulations, it exhibits stable training trajectories even in contaminated settings, whereas $f$-DRO exhibits unstable trajectories. On real benchmarks, it yields substantially improved worst-group performance across both linear and nonlinear survival models. It also achieves competitive overall performance, often even outperforming existing baselines in contaminated settings.

Our main contributions can be summarized as follows.
\begin{itemize}[left=1pt]
\item We propose a novel coupled distributionally robust framework for survival analysis that jointly addresses two practically important challenges: latent subpopulation shift and outlier contamination.
\item We develop an alternating gradient-based algorithm that directly optimizes \lossrevision{non-decomposable survival losses} while preserving interactions across samples. Its outer updates are derived from the KKT conditions of the inner maximization.
\item We demonstrate through simulation studies and real data experiments that the proposed method remains stable under outlier contamination and achieves substantially improved worst-group performance under subpopulation shift.
\end{itemize}

\vspace{-0.5em}
\section{Related Work}
\vspace{-0.2em}
\paragraph{Survival Analysis under Subpopulation Shift}
Survival analysis has traditionally been developed based on classical statistical frameworks, including nonparametric estimators, proportional hazards models, and related semiparametric approaches~\citep{cox1972regression, kaplan1958nonparametric, buckley1979linear, bennett1983analysis, kalbfleisch2002statistical}. More recently, it has been extended to machine learning methods that allow more flexible nonlinear modeling of survival outcomes~\citep{ishwaran2008random}, and further to deep learning based approaches designed for richer representation learning and more flexible risk prediction~\citep{lee2018deephit, ranganath2016deep, katzman2018deepsurv, avati2020countdown, tang2023survival}. However, these models do not explicitly account for subpopulation shift, and therefore often perform poorly on specific groups.

This issue has been studied in survival analysis mainly from two directions: fairness across observed groups and federated survival analysis under institutional heterogeneity. Fairness-based approaches typically impose constraints or regularization to encourage similar predictions or predictive performance across demographic or clinically defined groups~\citep{keya2021equitable, zhang2022longitudinal, zhang2023fairness, rahman2022fair, do2023fair}. Federated survival methods, in contrast, address heterogeneity across hospitals or institutions with different local patient populations and data distributions~\citep{andreux2020federated, archetti2023heterogeneous, seidi2024addressing, archetti2023federated, rahman2023fedpseudo, rahman2025fair, wang2022survmaximin}. However, these approaches generally rely on explicit group information, institution-level partitioning, or other prespecified heterogeneous structure.

\paragraph{Distributionally Robust Optimization}
Distributionally robust learning has emerged as a promising framework for handling various forms of distributional shift~\citep{lin2022distributionally, delage2010distributionally, mohajerin2018data, kuhn2019wasserstein, gao2024wasserstein, sagawa2019distributionally}. In particular, DRO has been widely studied for latent subpopulation shift, where robustness must be achieved without explicit group annotations. A prominent line of work improves worst-case performance by emphasizing high-loss regions of the distribution, as in $f$-DRO and related CVaR-type objectives~\citep{namkoong2016stochastic, duchi2021learning, hashimoto2018fairness, shapiro2021lectures}. Recent work has extended this idea to survival analysis without relying directly on predefined sensitive attributes~\citep{hu2022distributionally, hu2024fairness}. 

However, these methods can be highly vulnerable to outliers and may become unstable under contamination~\citep{zhai2021doro}. 
This issue is particularly important in survival analysis, where outliers are pervasive and can substantially affect model fitting and prediction. 
Existing robust survival methods mitigate outlier effects~\citep{heritier2009robust, sinha2019robust, farcomeni2012overview, luo2022penalized, farcomeni2011robust}, but do not directly address latent subpopulation shift. 
\citet{zhai2021doro} improve outlier robustness in $f$-DRO but focus on \lossrevision{additive losses} and do not cover non-decomposable survival losses with cross-sample dependence.
Thus, jointly addressing outlier contamination and latent subpopulation shift remains an important problem in survival analysis.
Independently of our work, \citet{huang2026nested} propose a nested DRO framework for robust learning with additive losses.
Their formulation does not directly apply to non-decomposable survival losses with cross-sample dependence.

\paragraph{Other Approaches to Subpopulation Shift}
Subpopulation shift has also been studied through latent subgroup discovery and annotation-free group robustness~\citep{liu2021just, hong2023harmonizing, stromberg2024robust}, group-aware priors~\citep{rudner2024mind}, outlier-robust group inference~\citep{zeng2022outlier}, and DRO frameworks with hierarchical or structured ambiguity sets~\citep{jo2025mitigating, kim2026distributionally}.
Our work focuses on jointly addressing subpopulation shift and outlier contamination in survival models with \lossrevision{non-decomposable losses}.

\section{Problem Setup}
In this section, we introduce the notation and formalize the problem setup. Although our primary interest lies in survival analysis, we first present our framework in a general statistical learning setting, and then discuss its application to survival models in Section~\ref{ssec:spec_to_survival}.

Let $Z \in \mathcal{Z}$ denote a random variable. For any distribution $Q$, we denote expectation under $Q$ by $\mathbb{E}_Q[\cdot]$.
For a positive integer $n$, let $[n] := \{1,\ldots,n\}$ denote the index set, and let $\mathcal{S}_{n-1} := \{ p \in \mathbb{R}^n : p_i \ge 0,\ \sum_{i=1}^n p_i = 1 \}$ denote the $(n-1)$-simplex.
Given a model $f_\theta$ parameterized by $\theta$, we denote the corresponding loss function by $\ell(Z, \theta)$.

We consider a learning problem under subpopulation shift with outlier contamination.
Specifically, the target distribution $P_{\rm test}$ is assumed to be a mixture of $K$ latent groups, $P_{\rm test} = \sum_{k=1}^{K} \pi_k P_k$, where $\sum_{k=1}^{K} \pi_k = 1$ and $\pi_k \in (0,1)$.
Here, $P_k$ denotes the distribution of the $k$-th latent group, and $\pi_k$ denotes its mixture proportion.
Let $P_{\mathrm{train}}$ denote the clean training distribution before outlier contamination, which may have different mixing proportions from $P_{\rm test}$.
The observed training distribution follows Huber's $\epsilon$-contamination model~\citep{huber1992robust}, $\widetilde P_{\mathrm{train}} = (1-\epsilon) P_{\mathrm{train}} + \epsilon \widetilde P$, where $\widetilde P$ is an arbitrary outlier distribution and $\epsilon \in [0,1)$ denotes the contamination level.
Let $\mathcal{D} = \{z_i\}_{i=1}^n$ be a dataset drawn i.i.d. from $\widetilde P_{\mathrm{train}}$.

Our goal is to learn a model that is robust to both outlier contamination in the training data and subpopulation shift under the latent mixture distribution $P_{\rm test}$.
A classical baseline is empirical risk minimization (ERM), which minimizes the average loss over the observed training data.
However, ERM does not account for distributional shift between the clean training distribution $P_{\mathrm{train}}$ and the target distribution $P_{\rm test}$, or for outlier contamination in the observed training distribution $\widetilde P_{\mathrm{train}}$.

A promising alternative is distributionally robust learning, which minimizes the worst-case risk over a set of plausible distributions.
Specifically, it solves
$$
\min_{\theta } \max_{Q \in \mathcal{B}_\rho(\bar{P})} \; \mathbb{E}_{Q}\big[\ell(Z, \theta)\big],
$$
where $\bar{P}$ is a nominal distribution and $\mathcal{B}_\rho(\bar{P})$ is an ambiguity set that captures distributional uncertainty around $\bar{P}$.
Two prominent DRO formulations for subpopulation shift are Group DRO \citep{sagawa2019distributionally} and $f$-DRO \citep{namkoong2016stochastic, duchi2021learning}. Group DRO directly optimizes the worst-case group risk but requires access to group labels, which are unavailable in many practical settings, including ours.
In contrast, $f$-DRO defines the ambiguity set via an $f$-divergence and achieves robustness by reweighting high-loss samples, which can effectively address latent subpopulation shift. However, it remains inherently vulnerable to outliers~\citep{zhai2021doro}. Therefore, it may fail to provide robustness when both subpopulation shift and outlier contamination are present.

\section{Proposed Method}
\lossrevision{We first introduce the general coupled DRO formulation and illustrate its structure with additive losses. We then extend the framework to non-decomposable survival losses, the main focus of this work.}

\subsection{Formulation} \label{ssec:formulation}
To motivate the proposed formulation, we first define a mixture ambiguity set. For a given nominal distribution $\bar P$ and a constant $\gamma \in (0,1]$, define
\begin{equation}
\mathcal{Q}_\gamma (\bar P)
=
\left\{
Q : \bar P = \gamma Q + (1-\gamma)\widetilde Q 
\; \textrm{for some} \;
\widetilde Q \in \mathcal{P}(\mathcal{Z})
\right\},
\end{equation}
where $\mathcal{P}(\mathcal{Z})$ denotes the set of all probability distributions on $\mathcal{Z}$. 
This set consists of $\gamma$-mixture components of $\bar P$, each of which is absolutely continuous with respect to $\bar P$, denoted by $Q \ll \bar P$.

This ambiguity set serves different roles in minimization and maximization. For a fixed $\theta$, consider
\begin{equation}
\min_{Q \in \mathcal{Q}_{1-\epsilon}(\bar P)} \mathbb{E}_Q[\ell(Z,\theta)]
\quad \text{and} \quad
\max_{Q \in \mathcal{Q}_\alpha(\bar P)} \mathbb{E}_Q[\ell(Z,\theta)].
\end{equation}
Here, $\epsilon \in [0,1)$ and $\alpha \in (0,1]$.
The minimization selects a distribution $Q \in \mathcal{Q}_{1-\epsilon}(\bar P)$ that minimizes the expected loss by assigning smaller weights to high-loss regions, including potential outliers. The resulting distribution can be viewed as a refined version of $\bar P$, with $\epsilon$ reflecting the degree of potential outlier contamination.
In contrast, the maximization selects a distribution $Q \in \mathcal{Q}_\alpha(\bar P)$ that maximizes the expected loss by assigning larger weights to high-loss regions. This distribution emphasizes regions that are particularly difficult under the current model, with $\alpha$ controlling the size of the subpopulation considered. This maximization is closely related to CVaR-type DRO~\citep{shapiro2021lectures}, which focuses on the high-loss tail of the distribution.

Motivated by these two roles, we propose the following coupled DRO problem:
\begin{equation}
\min_{\theta}
\left[
\min_{Q' \in \mathcal{Q}_{1-\epsilon}(\bar P)}
\max_{Q \in \mathcal{Q}_\alpha(Q')}
\mathbb{E}_{Q} \bigl[\ell(Z,\theta)\bigr]
\right].
\label{eq:nested_dro}
\end{equation}
We refer to the minimization over $Q'$ and the maximization over $Q$ as the outer minimization and the inner maximization, respectively. The formulation couples nominal refinement with adversarial reweighting through the dependence of the inner ambiguity set $\mathcal Q_\alpha(Q')$ on the refined distribution $Q'$.

\subsection{Structural analysis under additive losses}\label{ssec:additive_loss}
To facilitate understanding of the proposed formulation, we consider a class of losses for which the optimization structure becomes explicit. Specifically, given a weight vector 
$q \in \mathcal{S}_{n-1}$ over the dataset $\mathcal{D}$, we view the empirical risk as a function $\mathcal{L} : \mathcal{S}_{n-1} \times \Theta \to \mathbb{R}$. We call this class of losses additive if the empirical risk can be decomposed as $\mathcal{L}(q, \theta) = \sum_{i=1}^n q_i \ell(z_i,\theta)$, \lossrevision{where each loss term depends on a single sample.}
This class includes standard supervised learning losses such as the cross-entropy loss and the squared loss.
We now take the nominal distribution $\bar P$ to be supported on $\mathcal{D}$ with probability mass vector $\bar p \in \mathcal{S}_{n-1}$. Under this choice, each feasible distribution can be represented by its probability mass vector over $\mathcal{D}$. Thus, for additive losses, the proposed formulation in \eqref{eq:nested_dro} admits the following equivalent discrete representation:
\begin{equation}
\min_{\theta}
\left[
\min_{q' \in \mathcal{Q}_{1-\epsilon}(\bar p)}
\max_{q \in \mathcal{Q}_{\alpha}(q')}
\mathcal{L}(q,\theta)
\right].
\label{eq:nested_dro_discrete}
\end{equation}
Here, for a reference weight vector $p \in \mathcal{S}_{n-1}$ and $\gamma \in (0,1]$, the ambiguity set $\mathcal Q_\gamma(p)$ is defined as
\begin{equation}
\mathcal{Q}_{\gamma}(p)
=
\left\{
q \in \mathcal{S}_{n-1}
:
0 \le q_i \le \frac{p_i}{\gamma},\; \forall i \in [n]
\right\}.
\label{eq:discrete_ambiguity_set_general}
\end{equation}
Details of this finite-support representation of the ambiguity set are provided in Lemma~\ref{lem:Q1_equiv}.

The discrete coupled formulation in~\eqref{eq:nested_dro_discrete} admits an explicit two-stage characterization. Specifically, the outer minimization performs $\epsilon$-trimming, and the inner maximization reweights the highest-loss $\alpha$-fraction of the remaining probability mass. A precise characterization and the corresponding procedure are given in Appendix~\ref{appendix:additive_loss} and Algorithm~\ref{algo:additive_loss}. \lossrevision{Thus, our formulation includes CVaR-DORO~\citep{zhai2021doro} as a special case for additive losses.}

However, this structural result relies critically on the additivity of the loss and does not extend to settings where the loss depends on multiple samples simultaneously. In this paper, we extend the proposed framework beyond the additive setting to handle such non-decomposable losses. Survival analysis provides a primary example of this class, which we study in the following section.

\subsection{Non-decomposable survival losses} \label{ssec:spec_to_survival}
In this section, we focus on right-censored survival data and specify the corresponding survival loss, although the framework can be adapted to other censoring or truncation mechanisms. Let $Z = (X, Y, \Delta)$ be a random variable, where $X \in \mathcal{X}$ denotes the covariate, $T$ is the survival time, and $C$ is the censoring time. The observed time is given by $Y = \min\{T, C\}$, and the event indicator is $\Delta = \mathbb{I}(T \le C)$. Each observed sample is denoted by $z_i = (x_i, y_i, \delta_i)$.

\lossrevision{Survival losses} such as the Cox negative partial log-likelihood and ranking-based losses involve cross-sample dependence.
To apply the proposed coupled DRO formulation, we express the empirical survival risk as $\mathcal{L}(q,\theta)$ to make its dependence on the sample weights explicit. 
As a representative example, we consider the Cox proportional hazards model~\citep{cox1972regression}. 
Given $q \in \mathcal{S}_{n-1}$, the empirical risk is defined as the weighted negative partial log-likelihood:
\begin{equation*}
\mathcal{L}(q,\theta) = -\sum_{i=1}^n q_i \delta_i \left[ f_{\theta}(x_i) - \log \left( \sum_{j \in [n]: y_j \ge y_i} q_j \exp(f_{\theta}(x_j)) \right) \right].
\end{equation*}
\lossrevision{We incorporate this loss into the coupled formulation in \eqref{eq:nested_dro_discrete}. Because the sample weights also appear inside the logarithmic risk-set terms, $\mathcal{L}(q,\theta)$ is nonlinear in $q$. Thus, the closed-form trimming and reweighting structure derived for additive losses does not carry over directly.}
\lossrevision{Moreover, standard DRO formulations based on dual reformulations are difficult to apply directly to non-decomposable survival losses, since they typically rely on additive losses}~\citep{hu2024fairness, jin2025distributionally}.
We therefore develop a gradient-based optimization scheme that preserves the original survival loss structure with cross-sample dependence.

\vspace{-0.1em}
\section{Algorithm}
\vspace{-0.1em}
In this section, we present the learning algorithm and its intuition.
Throughout this section, we assume that $\mathcal{L}(q,\theta)$ is continuously differentiable in both $q$ and $\theta$, which enables gradient-based updates.
Our approach treats the optimization as a three-block problem over $(q, q', \theta)$ and adopts an alternating update strategy. 
A key challenge lies in updating $q'$, because the outer objective is defined implicitly through the inner maximization $\Phi_\alpha(q';\theta) := \max_{q \in \mathcal{Q}_\alpha(q')} \mathcal{L}(q,\theta)$.
Thus, its gradient with respect to $q'$ is not directly available. 
The following proposition characterizes a supergradient through the KKT multipliers of the inner problem.

\bigskip
\begin{proposition}[Outer supergradient via KKT multipliers]
\label{prop:outer_supergrad_main}
Fix $\theta$ and let $q^\star$ be a global maximizer of the inner problem.
Under the regularity and local stability assumptions stated in Appendix~\ref{appendix:non-decomposable_loss}, any KKT multiplier vector $\mu^\star$ for $q_i\le q'_i/\alpha$ at $q^\star$ yields a supergradient $\mu^\star/\alpha$ of $\Phi_\alpha(\cdot;\theta)$ at $q'$:
\[
\Phi_\alpha(q'+h;\theta)
\le \Phi_\alpha(q';\theta)
+\left\langle\frac{\mu^\star}{\alpha},h\right\rangle
+o(\|h\|).
\]
Here, $q'+h\in\mathcal S_{n-1}$ and $o(\|h\|)/\|h\|\to0$ as $h\to0$.
\end{proposition}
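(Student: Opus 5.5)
The plan is a perturbation-of-constraints argument in the spirit of sensitivity analysis for nonlinear programs. Write the inner problem as maximizing $\mathcal L(q,\theta)$ over $q$ subject to $\sum_i q_i = 1$, $q_i \ge 0$ and $q_i \le q'_i/\alpha$. The KKT conditions at $q^\star$ read
\[
\nabla_q \mathcal L(q^\star,\theta) = \lambda^\star \mathbf 1 - \nu^\star + \mu^\star, \qquad \mu^\star,\nu^\star \ge 0,
\]
with complementary slackness $\mu^\star_i (q^\star_i - q'_i/\alpha) = 0$ and $\nu^\star_i q^\star_i = 0$. The right-hand side $q'/\alpha$ enters only the upper-bound constraints, so the marginal value of relaxing those constraints should be $\mu^\star_i/\alpha$ per unit change of $q'_i$. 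Since $\Phi_\alpha$ is a value function of a maximization, the natural inequality to aim for is an upper bound of the form stated.

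\textbf{Step 1: first-order expansion at the new maximizer.} Fix $h$ with $q'+h \in \mathcal S_{n-1}$ and let $q^h$ be a global maximizer of the perturbed problem, so that $\Phi_\alpha(q'+h;\theta) = \mathcal L(q^h,\theta)$. Use the local stability assumption of Appendix~\ref{appendix:non-decomposable_loss} to guarantee $\|q^h - q^\star\| = O(\|h\|)$, or at least $q^h \to q^\star$ as $h\to 0$. Since $\mathcal L$ is continuously differentiable, Taylor's theorem gives
\[
\mathcal L(q^h,\theta) = \mathcal L(q^\star,\theta) + \langle \nabla_q \mathcal L(q^\star,\theta), q^h - q^\star\rangle + o(\|q^h-q^\star\|).
\]
With the Lipschitz-type stability estimate, the remainder becomes $o(\|h\|)$.

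\textbf{Step 2: bound the linear term using KKT.} Set $d = q^h - q^\star$ and substitute the stationarity equation.
\begin{itemize}
\item Since both points lie in the simplex, $\langle \mathbf 1, d\rangle = 0$, so the $\lambda^\star$ term vanishes.
\item For the $\nu^\star$ term, $-\langle \nu^\star, d\rangle = -\sum_i \nu^\star_i q^h_i \le 0$, because $\nu^\star_i q^\star_i = 0$, $\nu^\star \ge 0$ and $q^h \ge 0$.
\item For the $\mu^\star$ term, $\mu^\star_i d_i = \mu^\star_i (q^h_i - q'_i/\alpha)$ by complementary slackness. Feasibility $q^h_i \le (q'_i+h_i)/\alpha$ then gives $\mu^\star_i d_i \le \mu^\star_i h_i/\alpha$.
\end{itemize}
Combining these, $\langle\nabla_q\mathcal L(q^\star,\theta), d\rangle \le \langle \mu^\star/\alpha, h\rangle$. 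Inserting this into Step 1 yields the claimed inequality. Because the argument uses only the KKT relations, it applies to any multiplier vector, which explains the phrase ``any KKT multiplier vector''.

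\textbf{Main obstacle.} The delicate step is Step 1's control of $\|q^h - q^\star\|$. The loss $\mathcal L$ is nonconcave in $q$ (the Cox log-risk-set terms), so maximizers need not be unique or continuous in $q'$. This is exactly where the local stability assumption must be invoked: either a uniform $O(\|h\|)$ bound on the distance from $q^h$ to $q^\star$, or a selection of maximizers converging to $q^\star$.

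If only convergence (not a rate) is available, I would refine the remainder estimate using the mean value theorem on the segment $[q^\star, q^h]$. Uniform continuity of $\nabla_q \mathcal L$ on the compact simplex gives a remainder $\omega(\|d\|)\,\|d\|$. One still needs $\|d\| = O(\|h\|)$, so I expect the appendix assumption to be precisely such a Lipschitz stability of the maximizer (e.g. via a second-order sufficient condition plus LICQ, which holds here since the simplex-with-box constraints have linearly independent active gradients whenever the box is nondegenerate). I would state that regularity explicitly and cite it at this point.
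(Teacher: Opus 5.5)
Your proposal is correct and follows the paper's proof essentially step for step. The paper also Taylor-expands $\mathcal L$ at $q^\star$ toward a perturbed global maximizer and bounds the linear term by $\langle\mu^\star/\alpha,h\rangle$ using stationarity, $\langle \mathbf 1,\Delta q\rangle=0$, complementary slackness and feasibility (it works in $b=q'/\alpha$ and rescales at the end); it then uses an assumed Lipschitz selection $\|q^\star(b+\Delta b)-q^\star(b)\|\le C\|\Delta b\|$ to make the remainder $o(\|h\|)$, and because that hypothesis is a selection statement, take $q^h$ to be the selected maximizer, which is harmless since every global maximizer attains $\Phi_\alpha(q'+h;\theta)$.
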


Based on Proposition~\ref{prop:outer_supergrad_main}, we can derive a supergradient with respect to $q'$ from the KKT multipliers $\mu^\star$ of the inner problem.
The detailed characterization of these multipliers and the derivation of this result are provided in Appendix~\ref{appendix:non-decomposable_loss}.

Combining the $q'$ update based on this supergradient with gradient steps for $q$ and $\theta$ yields the alternating procedure in Algorithm~\ref{alg:alt_nested_dro}.
We detail the full-batch updates below for clarity; stochastic mini-batch implementations are also possible~\citep{kvamme2019time, zeng2026mini}.

\begin{algorithm}[t]
\small
\caption{Learning algorithm for the coupled DRO objective}
\label{alg:alt_nested_dro}
\begin{algorithmic}[1]
\State \textbf{Input:} dataset $\mathcal D$; step sizes $\eta_q, \eta_{q'}, \eta_\theta$; update steps $T$, $K$; initial values $\theta^{(0)}, q^{(0)}, q'^{(0)}$

\For{$k = 0,1,2,\ldots, K-1$ }
    \State Set $q^{(k,0)} \leftarrow q^{(k)}$, $q'^{(k,0)} \leftarrow q'^{(k)}$
    
    \Comment{Update $(q, q')$}
    \For{$t = 0,\dots,T-1$}
        \State Update $q^{(k,t+1)}$ using the exponentiated gradient ascent and projection in \eqref{eq:q_update}
        \State Update $q'^{(k,t+1)}$ using the exponentiated gradient descent and projection in \eqref{eq:q'_update}
        
    \EndFor
    
    \State Set $q^{(k+1)} \leftarrow q^{(k,T)}$, $q'^{(k+1)} \leftarrow q'^{(k,T)}$
    
    \Comment{Update $\theta$}
    \State Update $\theta^{(k+1)}$ using the gradient descent step in \eqref{eq:theta_update}
    
\EndFor

\end{algorithmic}
\end{algorithm}

\paragraph{Update of $q$.}
Given $\theta$ and $q'$, we update $q$ to $\hat q$ using the projected exponentiated gradient ascent update~\citep{sagawa2019distributionally}
\begin{equation}
\begin{aligned}
\tilde{q}_i 
&\leftarrow 
\frac{q_i \exp \left( \eta_q [\nabla_q \mathcal{L}(q, \theta)]_i \right)}
{\sum_{j=1}^n q_j \exp \left( \eta_q [\nabla_q \mathcal{L}(q, \theta)]_j \right)}, 
\quad i = 1, \dots, n, \\
\hat{q} 
&\leftarrow 
\Pi_{\mathcal{Q}_{\alpha}(q')} (\tilde{q}),
\end{aligned}
\label{eq:q_update}
\end{equation}
where $\nabla_q \mathcal{L}(q,\theta)$ denotes the gradient of the map $q \mapsto \mathcal{L}(q,\theta)$, $\eta_q > 0$ is a step size, $\Pi_{\mathcal A}(\cdot)$ denotes projection onto the set $\mathcal A$, and $[x]_j$ denotes the $j$-th component of a vector $x$.

\paragraph{Update of $q'$.}
Given $\theta$ and $\hat q$, we update $q'$ to $\hat q'$ using projected exponentiated descent with a supergradient estimate $s$:
\begin{equation}
\begin{aligned}
\tilde q'_i 
&\leftarrow 
\frac{q'_i \exp \left( -\eta_{q'} s_i \right)}
{\sum_{j=1}^n q'_j \exp \left( -\eta_{q'} s_j \right)}, 
\quad i = 1, \dots, n, \\
\hat q' 
&\leftarrow 
\Pi_{\mathcal{Q}_{1-\epsilon}(\bar p)} (\tilde q'),
\end{aligned}
\label{eq:q'_update}
\end{equation}
where $\eta_{q'} > 0$ is a step size. For a nonempty interior set $\mathcal I$, we compute $s$ from $\hat q$ as follows:
$$
\mathcal{I}
:=
\left\{
i \in [n] : 0 < \hat q_i < \tfrac{1}{\alpha} q'_i
\right\},
\quad
\nu := \frac{1}{|\mathcal{I}|} \sum_{i\in\mathcal{I}} [\nabla_{q} \mathcal{L}(\hat q,\theta)]_i,
$$
$$
s_i =
\begin{cases}
\frac{1}{\alpha}\big([\nabla_{q} \mathcal{L}(\hat q,\theta)]_i - \nu \big)_+,
& \text{if } 0< \hat q_i = \tfrac{1}{\alpha} q'_i,\\[4pt]
0,
& \text{otherwise}.
\end{cases}
$$
If $\mathcal I=\emptyset$, we leave $q'$ unchanged, as in Algorithm~\ref{alg:alternating_nested_dro}.
\begin{figure}[!t]
\centering
\includegraphics[width=0.329\linewidth]{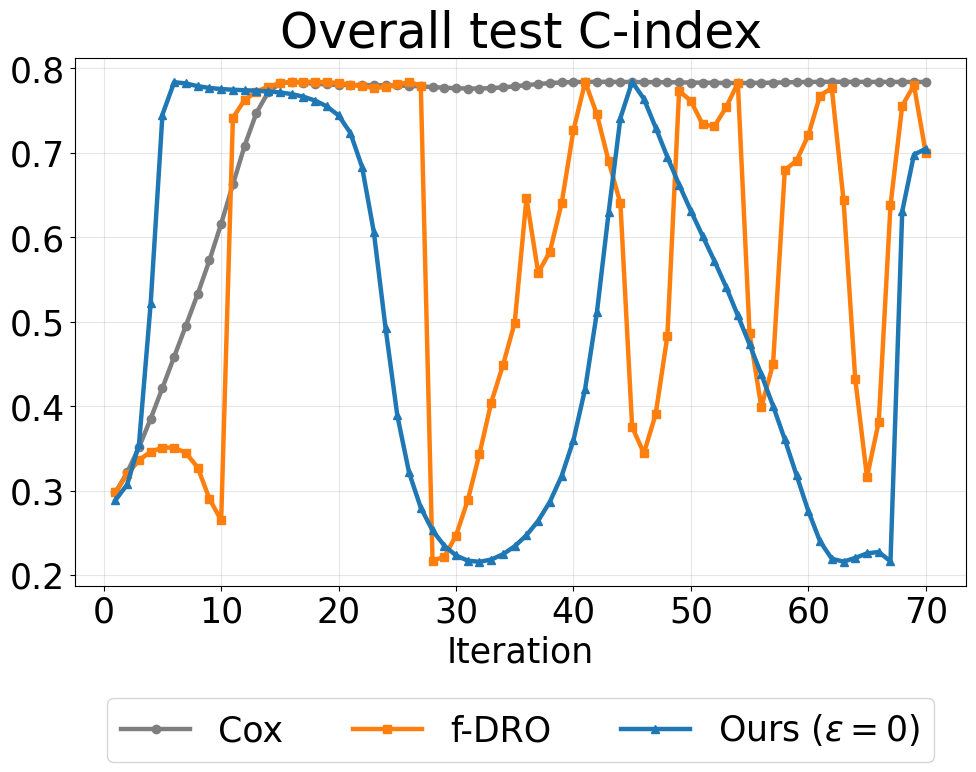}
\includegraphics[width=0.329\linewidth]{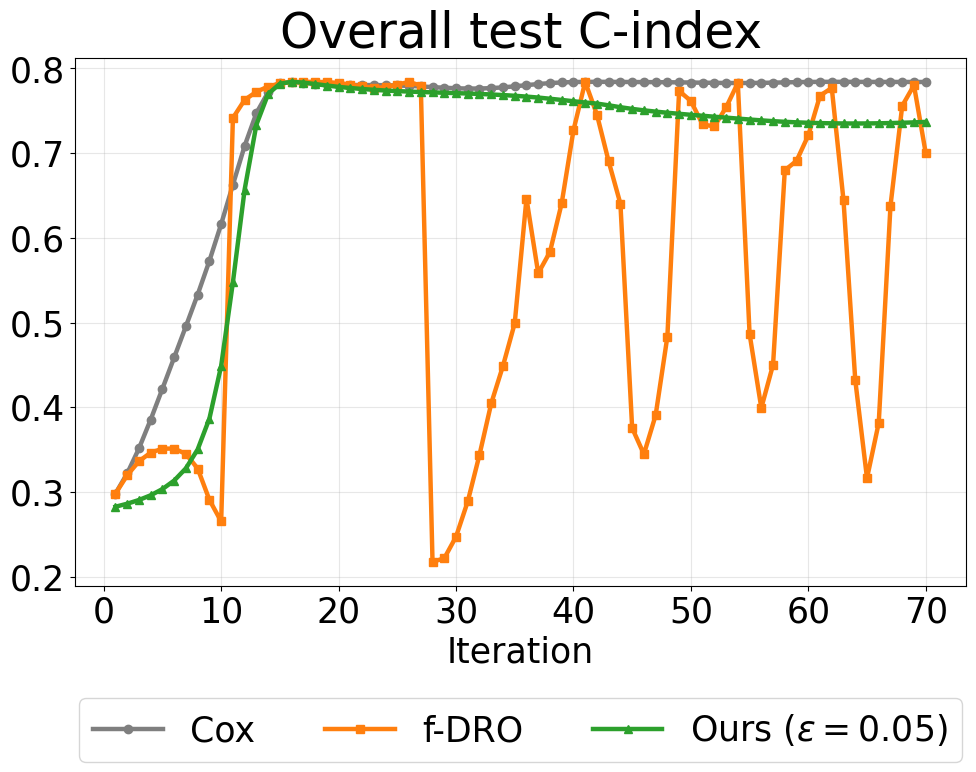}
\includegraphics[width=0.329\linewidth]{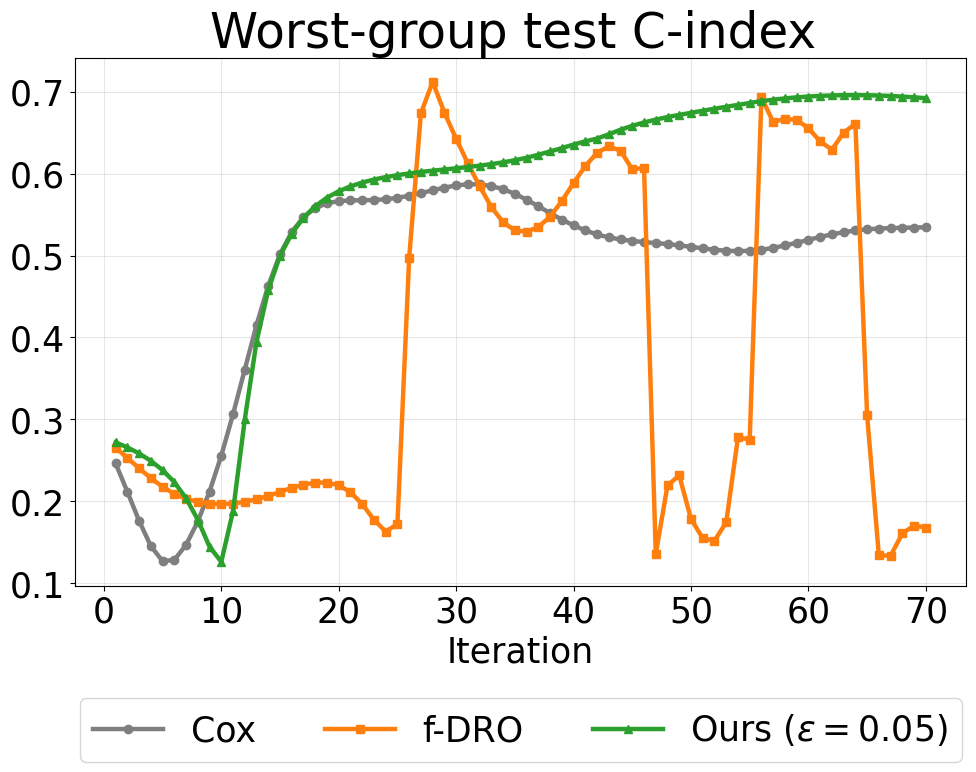}
\caption{
Test C-index trajectories under subpopulation shift with outlier contamination.
(Left) Overall test C-index for the Cox model, $f$-DRO, and the proposed method with $(\alpha, \epsilon) = (0.1, 0)$.
(Middle) Overall test C-index for the proposed method with $(\alpha, \epsilon) = (0.1, 0.05)$.
(Right) Worst-group test C-index for the proposed method with $(\alpha, \epsilon) = (0.1, 0.05)$.
For the proposed method, only $\epsilon$ is varied from $0$ to $0.05$, while all other training configurations are kept fixed.
}
\label{fig:subpopulation_outlier}
\end{figure}

\paragraph{Update of $\theta$.}
After alternating the updates of $q$ and $q'$ for several iterations, we update $\theta$ to $\hat \theta$ using the resulting $\hat q$ via a gradient descent step.
\begin{equation}
\hat \theta \leftarrow
\theta - \eta_\theta \nabla_\theta \mathcal{L}(\hat q,\theta),
\label{eq:theta_update}
\end{equation}
where $\eta_\theta > 0$ is a step size.

To interpret the overall procedure, recall the finite-support representation of our ambiguity set:
\begin{align*}
\mathcal{Q}_{1-\epsilon}(\bar p)
&=
\left\{
q' \in \mathcal{S}_{n-1}
:
0 \le q'_i \le \frac{\bar p_i}{1-\epsilon},\; \forall i\in[n]
\right\}, \\ 
\mathcal{Q}_{\alpha}(q')
&=
\left\{
q \in \mathcal{S}_{n-1}
:
0 \le q_i \le \frac{q'_i}{\alpha},\; \forall i\in[n]
\right\}.
\end{align*}
\lossrevision{The bound $q_i'/\alpha$ determines the maximum weight that can be assigned to sample $i$. Within these bounds, the inner update uses the gradient of the full survival loss to increase the loss. The outer update uses KKT multipliers from the inner problem to update $q'$, thereby adjusting the upper bounds $q_i'/\alpha$ on the sample weights. Both updates account for how changes in sample weights affect the full loss, including other samples' risk-set terms in the Cox model. Thus, $q$ and $q'$ can be updated while preserving the cross-sample interactions in the survival loss.}

Computational complexity is analyzed in Appendix~\ref{appendix:computational_complexity}.
A local convergence analysis is provided in Appendix~\ref{appendix:convergence}.

\vspace{-0.3em}
\section{Experiments}
\vspace{-0.3em}
\subsection{Simulation} \label{ssec:simulation}
We evaluate robustness under subpopulation shift with outlier contamination; results without outliers are provided in Appendix~\ref{appendix:sim_subpopulation_shift}.
For $f$-DRO, we use the $\chi^2$-divergence and sample splitting following~\citet{hu2024fairness}.

The training distribution mixes two groups with an outlier component:
$\widetilde P_{\rm train} = \pi^{\mathrm{tr}}_1 P_1 + \pi^{\mathrm{tr}}_2 P_2 + \tilde{\pi}^{\mathrm{tr}} \widetilde{P}$, 
with $(\pi^{\mathrm{tr}}_1, \pi^{\mathrm{tr}}_2, \tilde{\pi}^{\mathrm{tr}}) = (0.9, 0.08, 0.02)$ and $N_{\text{train}} = 3{,}000$. 
The test distribution is clean and is given by 
$P_{\rm test} = \pi^{\mathrm{te}}_1 P_1 + \pi^{\mathrm{te}}_2 P_2$, 
with $(\pi^{\mathrm{te}}_1, \pi^{\mathrm{te}}_2) = (0.8, 0.2)$ and $N_{\text{test}} = 5{,}000$. 
For each group $g \in \{1,2\}$, covariates are generated as $X \sim \mathcal{N}(0,2I_2)$, and survival times follow an exponential hazard model 
$h_g(t \mid x)=h_0\exp(x^\top\beta_g)$.
Censoring times are drawn from an exponential distribution, yielding 10\% censoring.
We set $\beta_1 = (1,1)$ and $\beta_2 = (1,-1)$ for the two groups, and use $\tilde{\beta} = (-2,-2)$ for the outlier component.

Figure~\ref{fig:subpopulation_outlier} compares overall and worst-group test C-index trajectories, where the latter is the minimum across the two groups~\citep{sagawa2019distributionally}.
Both $f$-DRO and the proposed method with $(\alpha,\epsilon)=(0.1,0)$ exhibit large fluctuations, consistent with their emphasis on high-loss observations under contamination.
The standard Cox model remains relatively stable but has a worst-group C-index of approximately $0.55$.
Setting $\epsilon=0.05$ stabilizes the proposed method's overall and worst-group trajectories and yields a worst-group C-index near $0.70$, illustrating the benefit of controlling outlier influence while protecting minority-group performance.

\lossrevision{Further simulation analyses, including nonlinear risk functions, higher-dimensional covariates, smaller training samples, heavier censoring, and IBS trajectories, are provided in Appendices~\ref{appendix:nonlinear_simulation}--\ref{appendix:ibs_trajectories}.}
Sensitivity to the robustness parameters $\alpha$ and $\epsilon$ is examined in Appendix~\ref{appendix:robustness_parameters}.

\begin{table*}[t]
\caption{Results on METABRIC under subpopulation shift with outlier contamination. The 10\% and 5\% settings specify the validation tail loss for Robust-Cox and the DRO-based methods; Elastic-Cox and Cox-NN use validation log-likelihood. Best results are in bold, and second-best results are underlined. Values are mean (standard deviation) over 10 independent runs.}
\label{tab:outlier_shift_metabric}
\centering
\fontsize{6.5}{7}\selectfont
\setlength{\tabcolsep}{5.25pt}
\renewcommand{\arraystretch}{0.95}
\begin{tabular}{llccc|cc}
\toprule
\toprule
\textbf{Model} & \textbf{Method}
& \textbf{IBS} $\downarrow$
& \textbf{Test log-likelihood} $\uparrow$
& \textbf{Overall C-index} $\uparrow$
& \textbf{Worst-group C-index} $\uparrow$
& \textbf{Test tail loss} $\downarrow$ \\
\midrule
\multirow{5}{*}{Linear (10\%)}
& Elastic-Cox
& \underline{0.176} (0.006)
& \underline{-5.061} (0.035)
& 0.618 (0.022)
& 0.501 (0.058)
& 6.305 (0.111) \\
& Robust-Cox
& \textbf{0.167} (0.005)
& \textbf{-5.016} (0.028)
& 0.665 (0.017)
& 0.562 (0.042)
& 6.229 (0.048) \\
& $f$-DRO
& 0.183 (0.005)
& -5.121 (0.033)
& \underline{0.667} (0.015)
& \underline{0.574} (0.030)
& \textbf{5.825} (0.015) \\
& Exact-Cox
& 0.184 (0.006)
& -5.121 (0.035)
& 0.583 (0.028)
& 0.459 (0.057)
& 6.179 (0.164) \\
& Ours
& 0.184 (0.005)
& -5.125 (0.034)
& \textbf{0.684} (0.015)
& \textbf{0.613} (0.028)
& \underline{5.835} (0.011) \\
\midrule
\multirow{5}{*}{Linear (5\%)}
& Elastic-Cox
& \underline{0.176} (0.006)
& \underline{-5.061} (0.035)
& 0.618 (0.022)
& 0.501 (0.058)
& 6.305 (0.111) \\
& Robust-Cox
& \textbf{0.168} (0.006)
& \textbf{-5.020} (0.026)
& \underline{0.658} (0.022)
& 0.547 (0.049)
& 6.240 (0.056) \\
& $f$-DRO
& 0.186 (0.003)
& -5.141 (0.018)
& 0.652 (0.033)
& \underline{0.558} (0.046)
& \textbf{5.824} (0.014) \\
& Exact-Cox
& 0.184 (0.006)
& -5.121 (0.035)
& 0.583 (0.028)
& 0.459 (0.057)
& 6.179 (0.164) \\
& Ours
& 0.187 (0.004)
& -5.145 (0.026)
& \textbf{0.680} (0.015)
& \textbf{0.605} (0.030)
& \underline{5.831} (0.007) \\
\midrule
\multirow{5}{*}{Nonlinear (10\%)}
& Cox-NN
& 0.181 (0.006)
& -5.101 (0.050)
& 0.601 (0.030)
& 0.508 (0.091)
& 6.561 (0.310) \\
& Robust-Cox
& \textbf{0.175} (0.005)
& \textbf{-5.063} (0.032)
& 0.635 (0.028)
& \underline{0.545} (0.049)
& 6.295 (0.161) \\
& $f$-DRO
& 0.182 (0.004)
& -5.119 (0.027)
& \underline{0.644} (0.036)
& 0.533 (0.047)
& \textbf{5.839} (0.041) \\
& Exact-Cox
& 0.197 (0.007)
& -5.210 (0.049)
& 0.456 (0.091)
& 0.395 (0.078)
& 6.226 (0.178) \\
& Ours
& \underline{0.176} (0.004)
& \underline{-5.074} (0.027)
& \textbf{0.661} (0.020)
& \textbf{0.585} (0.040)
& \underline{5.902} (0.014) \\
\midrule
\multirow{5}{*}{Nonlinear (5\%)}
& Cox-NN
& 0.181 (0.006)
& -5.101 (0.050)
& 0.601 (0.030)
& 0.508 (0.091)
& 6.561 (0.310) \\
& Robust-Cox
& \textbf{0.175} (0.005)
& \textbf{-5.063} (0.032)
& \underline{0.635} (0.028)
& \underline{0.545} (0.049)
& 6.295 (0.161) \\
& $f$-DRO
& 0.186 (0.004)
& -5.147 (0.029)
& 0.627 (0.036)
& 0.511 (0.051)
& \textbf{5.834} (0.021) \\
& Exact-Cox
& 0.197 (0.007)
& -5.210 (0.049)
& 0.456 (0.091)
& 0.395 (0.078)
& 6.226 (0.178) \\
& Ours
& \underline{0.177} (0.005)
& \underline{-5.079} (0.033)
& \textbf{0.656} (0.017)
& \textbf{0.570} (0.036)
& \underline{5.901} (0.026) \\

\bottomrule
\bottomrule
\end{tabular}
\end{table*}

\vspace{-0.2em}
\subsection{Real data analysis}
\vspace{-0.2em}
\begingroup
\makeatletter
\renewcommand{\paragraph}{\@startsection{paragraph}{4}{\z@}
  {1.2ex plus 0.4ex minus 0.2ex}{-1em}{\normalsize\bf}}
\makeatother
\paragraph{Baselines.}
We compare the proposed method with standard Cox models, $f$-DRO and its exact Cox variant (Exact-Cox) for latent subpopulation shift, and Robust-Cox~\citep{minder1996robust} for outlier contamination.
Both $f$-DRO and Exact-Cox use the $\chi^2$-divergence following~\citet{hu2024fairness}; $f$-DRO uses sample splitting, whereas Exact-Cox uses the full likelihood without sample splitting.
For nonlinear Robust-Cox, we extend the original estimator to a neural predictor.

\paragraph{Datasets.}
\lossrevision{We evaluate METABRIC~\citep{curtis2012genomic} and FLC~\citep{dispenzieri2012use}, defining four subgroups in each dataset based on clinical covariates following~\citet{hu2024fairness}. Group labels are unavailable during training and validation. Further details, including sample sizes, censoring rates, and subgroup definitions, are provided in Appendix~\ref{appendix:datasets}.}

\begin{table*}[!t]
\caption{Results on FLC under subpopulation shift with outlier contamination. Settings and notation follow Table~\ref{tab:outlier_shift_metabric}.}
\label{tab:outlier_shift}
\centering
\fontsize{6.5}{7}\selectfont
\setlength{\tabcolsep}{5.25pt}
\renewcommand{\arraystretch}{0.95}
\begin{tabular}{llccc|cc}
\toprule
\toprule
\textbf{Model} & \textbf{Method}
& \textbf{IBS} $\downarrow$
& \textbf{Test log-likelihood} $\uparrow$
& \textbf{Overall C-index} $\uparrow$
& \textbf{Worst-group C-index} $\uparrow$
& \textbf{Test tail loss} $\downarrow$ \\
\midrule

\multirow{5}{*}{Linear (10\%)}
& Elastic-Cox
& \underline{0.100} (0.003)
& \underline{-6.573} (0.066)
& 0.793 (0.003)
& 0.618 (0.038)
& 7.650 (0.114) \\
& Robust-Cox
& \textbf{0.096} (0.001)
& \textbf{-6.491} (0.016)
& 0.793 (0.005)
& 0.611 (0.040)
& 7.854 (0.031) \\
& $f$-DRO
& 0.107 (0.003)
& -6.708 (0.031)
& \underline{0.797} (0.004)
& \underline{0.626} (0.037)
& \underline{7.409} (0.035) \\
& Exact-Cox
& 0.103 (0.005)
& -6.634 (0.063)
& 0.789 (0.006)
& 0.609 (0.037)
& 7.581 (0.127) \\
& Ours
& 0.119 (0.002)
& -6.941 (0.009)
& \textbf{0.798} (0.004)
& \textbf{0.631} (0.036)
& \textbf{7.283} (0.006) \\
\midrule
\multirow{5}{*}{Linear (5\%)}
& Elastic-Cox
& \underline{0.100} (0.003)
& \underline{-6.573} (0.066)
& 0.793 (0.003)
& 0.618 (0.038)
& 7.650 (0.114) \\
& Robust-Cox
& \textbf{0.096} (0.001)
& \textbf{-6.491} (0.016)
& 0.793 (0.005)
& 0.611 (0.040)
& 7.854 (0.031) \\
& $f$-DRO
& 0.125 (0.002)
& -7.052 (0.008)
& \underline{0.795} (0.005)
& \underline{0.626} (0.030)
& \underline{7.294} (0.004) \\
& Exact-Cox
& 0.109 (0.010)
& -6.747 (0.192)
& 0.722 (0.140)
& 0.576 (0.085)
& 7.535 (0.072) \\
& Ours
& 0.119 (0.002)
& -6.941 (0.009)
& \textbf{0.798} (0.004)
& \textbf{0.631} (0.036)
& \textbf{7.283} (0.006) \\
\midrule
\multirow{5}{*}{Nonlinear (10\%)}
& Cox-NN
& \textbf{0.104} (0.006)
& \textbf{-6.617} (0.065)
& \underline{0.782} (0.010)
& 0.623 (0.029)
& 7.801 (0.208) \\
& Robust-Cox
& \underline{0.106} (0.007)
& \underline{-6.690} (0.126)
& 0.760 (0.031)
& 0.602 (0.030)
& 8.134 (0.324) \\
& $f$-DRO
& 0.111 (0.004)
& -6.725 (0.033)
& \textbf{0.785} (0.009)
& \underline{0.624} (0.028)
& \underline{7.447} (0.043) \\
& Exact-Cox
& 0.112 (0.008)
& -6.783 (0.155)
& 0.723 (0.125)
& 0.561 (0.081)
& 7.622 (0.188) \\
& Ours
& 0.116 (0.003)
& -6.837 (0.057)
& 0.777 (0.019)
& \textbf{0.630} (0.033)
& \textbf{7.371} (0.067) \\
\midrule
\multirow{5}{*}{Nonlinear (5\%)}
& Cox-NN
& \textbf{0.104} (0.006)
& \textbf{-6.617} (0.065)
& \textbf{0.782} (0.010)
& \underline{0.623} (0.029)
& 7.801 (0.208) \\
& Robust-Cox
& \underline{0.106} (0.007)
& \underline{-6.687} (0.129)
& 0.761 (0.032)
& 0.603 (0.031)
& 8.134 (0.324) \\
& $f$-DRO
& 0.126 (0.002)
& -7.055 (0.009)
& 0.779 (0.013)
& 0.620 (0.023)
& \underline{7.298} (0.004) \\
& Exact-Cox
& 0.119 (0.012)
& -6.948 (0.214)
& 0.625 (0.181)
& 0.508 (0.117)
& 7.579 (0.207) \\
& Ours
& 0.123 (0.004)
& -6.976 (0.043)
& \underline{0.781} (0.017)
& \textbf{0.628} (0.041)
& \textbf{7.284} (0.008) \\
\bottomrule
\bottomrule
\end{tabular}
\par\smallskip
\normalsize
\caption{Performance across hidden-layer sizes $h$ on METABRIC under outlier contamination. Settings and notation follow the 10\% validation setting in Table~\ref{tab:outlier_shift_metabric}, whose nonlinear block corresponds to $h=2$.}
\label{tab:network_width}
\centering
\fontsize{6.5}{7}\selectfont
\setlength{\tabcolsep}{5.25pt}
\renewcommand{\arraystretch}{0.95}
\begin{tabular}{llccc|cc}
\toprule
\toprule
\textbf{$h$} & \textbf{Method}
& \textbf{IBS} $\downarrow$
& \textbf{Test log-likelihood} $\uparrow$
& \textbf{Overall C-index} $\uparrow$
& \textbf{Worst-group C-index} $\uparrow$
& \textbf{Test tail loss} $\downarrow$ \\
\midrule
\multirow{5}{*}{4} & Cox-NN & \underline{0.183} (0.007) & \underline{-5.118} (0.047) & 0.597 (0.012) & 0.474 (0.073) & 6.658 (0.188) \\
 & Robust-Cox & \underline{0.183} (0.009) & -5.139 (0.062) & 0.612 (0.019) & 0.506 (0.065) & 6.914 (0.322) \\
 & $f$-DRO & 0.185 (0.004) & -5.135 (0.021) & \underline{0.627} (0.028) & \underline{0.519} (0.036) & \textbf{5.832} (0.021) \\
 & Exact-Cox & 0.199 (0.006) & -5.225 (0.036) & 0.452 (0.060) & 0.391 (0.054) & 6.245 (0.079) \\
 & Ours & \textbf{0.177} (0.004) & \textbf{-5.077} (0.023) & \textbf{0.646} (0.027) & \textbf{0.546} (0.042) & \underline{5.946} (0.045) \\
\midrule
\multirow{5}{*}{8} & Cox-NN & 0.189 (0.005) & -5.184 (0.068) & 0.581 (0.029) & 0.482 (0.043) & 6.600 (0.662) \\
 & Robust-Cox & 0.186 (0.007) & -5.179 (0.066) & \underline{0.611} (0.017) & \underline{0.504} (0.061) & 7.106 (0.295) \\
 & $f$-DRO & \underline{0.185} (0.005) & \underline{-5.138} (0.037) & \underline{0.611} (0.012) & 0.492 (0.055) & \textbf{5.870} (0.053) \\
 & Exact-Cox & 0.201 (0.011) & -5.243 (0.070) & 0.439 (0.072) & 0.387 (0.073) & 6.344 (0.178) \\
 & Ours & \textbf{0.175} (0.005) & \textbf{-5.068} (0.037) & \textbf{0.654} (0.027) & \textbf{0.582} (0.045) & \underline{5.971} (0.084) \\
\midrule
\multirow{5}{*}{16} & Cox-NN & 0.191 (0.006) & -5.191 (0.070) & 0.571 (0.032) & 0.467 (0.074) & 6.313 (0.731) \\
 & Robust-Cox & 0.190 (0.009) & -5.231 (0.076) & \underline{0.608} (0.021) & 0.477 (0.051) & 7.320 (0.374) \\
 & $f$-DRO & \underline{0.186} (0.004) & \underline{-5.140} (0.028) & 0.600 (0.020) & \underline{0.494} (0.056) & \textbf{5.888} (0.049) \\
 & Exact-Cox & 0.201 (0.010) & -5.250 (0.062) & 0.465 (0.064) & 0.394 (0.045) & 6.367 (0.207) \\
 & Ours & \textbf{0.175} (0.003) & \textbf{-5.065} (0.028) & \textbf{0.653} (0.017) & \textbf{0.568} (0.057) & \underline{5.980} (0.089) \\
\bottomrule
\bottomrule
\end{tabular}
\end{table*}

\paragraph{Experimental Setup.}
Although the framework applies to general survival models, we consider linear Cox models and neural Cox models (Cox-NN) with a single hidden layer of $h=2$ units.
\lossrevision{Motivated by \citet{duchi2021learning}, we select hyperparameters for Robust-Cox and the DRO-based methods by minimizing validation tail loss without using group labels.}
We evaluate overall performance using the integrated Brier score (IBS), test log-likelihood, and C-index, and robustness using worst-group C-index and 5\% test tail loss.
Details of the datasets, data splits, baselines, training, and hyperparameter selection are provided in Appendices~\ref{appendix:datasets}--\ref{appendix:training_selection}.

\paragraph{Outlier Construction.}
We evaluate subpopulation shift with outlier contamination; results without outliers appear in Appendix~\ref{appendix:real_subpopulation_shift}.
We generate new survival times by inverting the fitted Cox risks of the highest-risk training samples: the top 2\% for linear models and 5\% for nonlinear models.
These settings are motivated by survival outliers whose event times are unusually long or short relative to predictions based on their covariates~\citep{luo2022penalized}.
We use risk inversion to introduce such departures in a small fraction of the training samples, with effects that may vary across datasets and model classes.
Construction details and additional contamination experiments are provided in Appendices~\ref{appendix:datasets}, \ref{appendix:additional_outliers}, and~\ref{appendix:contamination_sensitivity}.

\paragraph{Results under outlier contamination.}
\lossrevision{On METABRIC (Table~\ref{tab:outlier_shift_metabric}), the proposed method outperforms all baselines in both overall and worst-group C-index in every setting, while achieving the second-lowest test tail loss.
On FLC (Table~\ref{tab:outlier_shift}), the results show a trade-off between average predictive performance and robustness: the proposed method achieves the highest worst-group C-index and the lowest test tail loss in every setting, but has higher IBS and lower test log-likelihood than the standard Cox models and Robust-Cox.
Additionally, we conduct experiments with appended outliers on FLC using linear models and with 2\% and 5\% contamination across all dataset--model combinations (Appendices~\ref{appendix:additional_outliers} and~\ref{appendix:contamination_sensitivity}).
Across these experiments, the proposed method consistently achieves the lowest or second-lowest test tail loss.}
Runtime comparisons are provided in Appendix~\ref{appendix:runtime}.

\paragraph{Ablation across network configurations.}
We evaluate larger networks on METABRIC using hidden-layer sizes $h=4,8,16$ under the same contamination and model-selection settings.
As shown in Table~\ref{tab:network_width}, the proposed method achieves the highest overall and worst-group C-index, the lowest IBS, and the highest test log-likelihood in all three configurations.
Its test tail loss is consistently second lowest, with $f$-DRO attaining the lowest values.
These results show that the gains in overall and worst-group discrimination persist across the larger networks evaluated here.
\par
\endgroup

\vspace{-0.5em}
\section{Conclusion}
\vspace{-0.5em}
\lossrevision{We proposed a coupled DRO framework for survival analysis that jointly addresses subpopulation shift and outlier contamination. The formulation couples nominal refinement with adversarial reweighting while preserving cross-sample interactions in non-decomposable survival losses. We derived outer updates from the inner problem's KKT multipliers, yielding an alternating algorithm that directly optimizes the coupled objective. Experiments demonstrate improved worst-group C-index and consistently low test tail loss under contamination, with competitive overall C-index. Future work could develop ambiguity sets that also capture within-group covariate shifts while retaining outlier robustness.}

\clearpage
\subsection*{AI use statement}
We used generative AI tools to assist with manuscript drafting and editing and implementation details.
The authors reviewed all AI-assisted content and take responsibility for the final manuscript.


\bibliographystyle{iclr2027_conference}
\bibliography{ref}

\appendix
\clearpage
\raggedbottom
\renewcommand{\floatpagefraction}{0.8}
\section{Appendix}
\label{appendix:overview}

\noindent{\large\bfseries Contents of the appendix}\par\medskip
\makeatletter
\begingroup
\small
\hypersetup{hidelinks}
\setcounter{tocdepth}{3}
\@starttoc{apc}
\endgroup
\let\survival@addcontentsline\addcontentsline
\renewcommand{\addcontentsline}[3]{%
  \survival@addcontentsline{#1}{#2}{#3}%
  \def\survival@tocfile{#1}%
  \def\survival@maintoc{toc}%
  \ifx\survival@tocfile\survival@maintoc
    \survival@addcontentsline{apc}{#2}{#3}%
  \fi
}
\makeatother

\clearpage
\subsection{Formulation and algorithm details}\label{appendix:formulation_algorithm}

For additive losses, $\mathcal L(q,\theta)=\sum_{i=1}^n q_i\ell(z_i,\theta)$ is linear in $q$, so the optimal weights follow the ordering of the losses.
We first derive this trimming and reweighting structure, then use KKT multipliers to obtain gradient-based updates for non-decomposable losses such as the weighted Cox loss.
We also analyze computational complexity and local convergence.

For fixed $\theta$, recall the inner value function
\begin{align}
    \Phi_\alpha(q'; \theta)
    :=
    \max_{q \in \mathcal{Q}_\alpha(q')} \mathcal L(q,\theta).
    \label{eq:inner_maximization}
\end{align}
We omit $\theta$ when it is fixed.

\subsubsection{\lossrevision{Additive losses}}\label{appendix:additive_loss}
Write $\ell_i:=\ell(z_i,\theta)$ and $\Phi_\alpha(q'):=\Phi_\alpha(q';\theta)$.
We first express the ambiguity set as bounds on sample weights, then derive $\epsilon$-trimming for $\alpha=1$ and the two-stage solution for general $\alpha$.
The uniform-weight case gives the explicit allocation in Corollary~\ref{cor:exact_trimming_then_exact_alpha}.
\begin{lemma}[Equivalent discrete representation of the ambiguity set]
	\label{lem:Q1_equiv}
	Let $\bar P$ be the nominal distribution supported on $\mathcal D$, with corresponding weight vector $\bar p \in \mathcal{S}_{n-1}$. 
	For $\gamma \in (0,1]$, the set $\mathcal Q_{\gamma}(\bar P)$ admits the equivalent finite-support representation~\eqref{eq:discrete_ambiguity_set_general}, \ie,
	\begin{equation*}
	\mathcal{Q}_{\gamma}(\bar p)
	=
	\left\{
	q \in \mathcal{S}_{n-1}
	:
	0 \le q_i \le \frac{\bar p_i}{\gamma},\; \forall i \in [n]
	\right\}.
	\end{equation*}
	Moreover, every such $q$ can be reparameterized as
	\begin{align}
	q_i = \frac{\bar p_i}{\gamma} w_i,
	\qquad
	w \in [0,1]^n, 
	\qquad
	\sum_{i=1}^n \bar p_i w_i = \gamma.
	\label{eq:q_w_general}
	\end{align}
\end{lemma}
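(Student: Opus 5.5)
We prove the two inclusions between $\mathcal Q_\gamma(\bar P)$ and the discrete set, identifying each distribution supported on $\mathcal D$ with its mass vector. Throughout we assume $z_1,\dots,z_n$ are distinct, or else treat repeated points index-wise with multiplicity; the argument is purely pointwise in the atoms.

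\emph{Forward inclusion.} Take $Q\in\mathcal Q_\gamma(\bar P)$, so $\bar P=\gamma Q+(1-\gamma)\widetilde Q$ for some $\widetilde Q\in\mathcal P(\mathcal Z)$. Since $\gamma>0$ and $\widetilde Q\ge 0$, we get $\gamma Q(A)\le \bar P(A)$ for every measurable $A$. Taking $A=\mathcal Z\setminus\mathcal D$ gives $Q(A)=0$, so $Q$ is supported on $\mathcal D$ and has a mass vector $q\in\mathcal S_{n-1}$. Taking $A=\{z_i\}$ gives $\gamma q_i\le \bar p_i$, that is, $0\le q_i\le \bar p_i/\gamma$.

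\emph{Reverse inclusion.} Take $q\in\mathcal S_{n-1}$ with $0\le q_i\le \bar p_i/\gamma$ for all $i$, and let $Q$ be the distribution with mass vector $q$. If $\gamma=1$, then $\sum_i q_i=1=\sum_i\bar p_i$ together with $q_i\le\bar p_i$ forces $q=\bar p$, and any $\widetilde Q$ works. If $\gamma<1$, define $\widetilde Q$ by the masses
\[
\tilde q_i=\frac{\bar p_i-\gamma q_i}{1-\gamma}.
\]
Each $\tilde q_i$ is nonnegative by the bound on $q_i$, and
\[
\sum_i \tilde q_i=\frac{1-\gamma}{1-\gamma}=1,
\]
so $\widetilde Q$ is a probability distribution. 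By construction $\bar P=\gamma Q+(1-\gamma)\widetilde Q$, hence $Q\in\mathcal Q_\gamma(\bar P)$.

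\emph{Reparameterization.} Set $w_i=\gamma q_i/\bar p_i$ when $\bar p_i>0$. The bounds $0\le q_i\le\bar p_i/\gamma$ give $w_i\in[0,1]$. When $\bar p_i=0$, the constraint forces $q_i=0$; choose any $w_i\in[0,1]$, say $w_i=0$, and the identity $q_i=\bar p_i w_i/\gamma$ still holds. Then
\[
\sum_i \bar p_i w_i=\gamma\sum_i q_i=\gamma.
\]
Conversely, any $w\in[0,1]^n$ with $\sum_i\bar p_iw_i=\gamma$ defines $q_i=\bar p_iw_i/\gamma$ satisfying $q\ge 0$, $\sum_i q_i=1$, and $q_i\le \bar p_i/\gamma$.

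The only delicate points are the support argument in the forward inclusion, which shows that $Q\ll\bar P$ forces support on $\mathcal D$, and the edge cases $\gamma=1$ and $\bar p_i=0$. None of these is a real obstacle.
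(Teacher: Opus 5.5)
Your proof is correct and follows the paper's argument: the forward inclusion compares masses atom by atom (your setwise bound $\gamma Q(A)\le\bar P(A)$ just makes the paper's support claim explicit and also covers $\gamma=1$), the reverse inclusion uses $\tilde q_i=(\bar p_i-\gamma q_i)/(1-\gamma)$, and the reparameterization sets $w_i=\gamma q_i/\bar p_i$. One small caveat: the repeated-points case is not ``purely pointwise,'' because with tied atoms you must split $Q(\{z\})$ among the tied indices in proportion to $\bar p_i$ to satisfy the individual caps $q_i\le\bar p_i/\gamma$; the paper tacitly assumes distinct points anyway.
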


\begin{proof}
	We first consider the case $\gamma \in (0,1)$.

	($\Rightarrow$)
	Suppose that $Q \in \mathcal{Q}_{\gamma}(\bar P)$. 
	Since $\bar P$ is supported on the dataset $\mathcal D = \{z_i\}_{i=1}^n$, there exists a weight vector $\bar p \in \mathcal{S}_{n-1}$ such that
	\[
	\bar P = \sum_{i=1}^n \bar p_i \delta_{z_i},
	\]
	where $\delta_{z_i}$ denotes the Dirac measure at $z_i$. 
	By the definition of $\mathcal{Q}_{\gamma}(\bar P)$, there exists a probability distribution $\widetilde Q \in \mathcal{P}(\mathcal Z)$ such that
	\[
	\bar P = \gamma Q + (1-\gamma)\widetilde Q.
	\]
	Since $\bar P$ is supported on $\mathcal D$, the above identity implies that $Q$ and $\widetilde Q$ are also supported on $\mathcal D$. 
	Hence there exist weight vectors $q, \tilde q \in \mathcal{S}_{n-1}$ such that
	\[
	Q = \sum_{i=1}^n q_i \delta_{z_i},
	\qquad 
	\widetilde Q = \sum_{i=1}^n \tilde q_i \delta_{z_i}.
	\]
	By comparing the coefficients of the Dirac measures at each $z_i$, we obtain
	\[
	\bar p = \gamma q + (1-\gamma)\tilde q.
	\]
	Taking the $i$-th coordinate and using $\tilde q_i \ge 0$, we have
	\[
	\bar p_i = \gamma q_i + (1-\gamma)\tilde q_i \ge \gamma q_i,
	\]
	which implies
	\[
	q_i \le \frac{\bar p_i}{\gamma}.
	\]
	Since $q \in \mathcal{S}_{n-1}$, we also have $q_i \ge 0$ for all $i$ and $\sum_{i=1}^n q_i = 1$. 
	Hence $q \in \mathcal{Q}_{\gamma}(\bar p)$ in \eqref{eq:discrete_ambiguity_set_general}.

	($\Leftarrow$)
	Conversely, suppose that $q \in \mathcal{S}_{n-1}$ satisfies
	\[
	0 \le q_i \le \frac{\bar p_i}{\gamma},
	\qquad \forall i \in [n].
	\]
	Define a vector $\tilde q \in \mathbb{R}^n$ by
	\[
	\tilde q_i
	:=
	\frac{\bar p_i - \gamma q_i}{1-\gamma},
	\qquad i \in [n].
	\]
	Since $q_i \le \bar p_i/\gamma$, we have $\tilde q_i \ge 0$ for all $i \in [n]$. 
	Moreover,
	\[
	\sum_{i=1}^n \tilde q_i
	=
	\frac{1}{1-\gamma}
	\sum_{i=1}^n
	\bigl(\bar p_i - \gamma q_i\bigr)
	=
	\frac{1}{1-\gamma}
	\bigl(1-\gamma \cdot 1\bigr)
	=
	1.
	\]
	Hence $\tilde q \in \mathcal{S}_{n-1}$.

	Now define probability distributions
	\[
	Q = \sum_{i=1}^n q_i \delta_{z_i},
	\qquad
	\widetilde Q = \sum_{i=1}^n \tilde q_i \delta_{z_i}.
	\]
	Then
	\[
	\gamma Q + (1-\gamma)\widetilde Q
	=
	\sum_{i=1}^n \bigl(\gamma q_i + (1-\gamma)\tilde q_i\bigr)\delta_{z_i}
	=
	\sum_{i=1}^n \bar p_i \delta_{z_i}
	=
	\bar P.
	\]
	Therefore, $Q \in \mathcal{Q}_{\gamma}(\bar P)$.
    
	We now show the reparameterization in \eqref{eq:q_w_general}. 
	Suppose that $q$ satisfies \eqref{eq:discrete_ambiguity_set_general}. 
	For each $i \in [n]$ with $\bar p_i > 0$, define
	\[
	w_i := \frac{\gamma q_i}{\bar p_i}.
	\]
	If $\bar p_i = 0$, then \eqref{eq:discrete_ambiguity_set_general} implies $q_i = 0$, and we define $w_i := 0$.

	Then $w_i \ge 0$ for all $i$. 
	Moreover, from $q_i \le \bar p_i/\gamma$, it follows that $w_i \le 1$ whenever $\bar p_i > 0$. 
	Hence $w \in [0,1]^n$. 
	By construction,
	\[
	q_i = \frac{\bar p_i}{\gamma} w_i,
	\qquad \forall i \in [n].
	\]
	Finally,
	\[
	\sum_{i=1}^n \bar p_i w_i
	=
	\sum_{i:\bar p_i>0} \bar p_i \frac{\gamma q_i}{\bar p_i}
	=
	\gamma \sum_{i=1}^n q_i
	=
	\gamma,
	\]
	since $q \in \mathcal{S}_{n-1}$.

	If $\gamma=1$, then the condition
	\[
	\bar P = Q
	\]
	implies $q=\bar p$. 
	The discrete constraint becomes $0 \le q_i \le \bar p_i$ with $q \in \mathcal{S}_{n-1}$, which also implies $q=\bar p$. 
	The reparameterization holds with $w_i=1$ for $\bar p_i>0$ and $w_i=0$ for $\bar p_i=0$. 
	This completes the proof.
\end{proof}

\bigskip
Using the representation in Lemma~\ref{lem:Q1_equiv}, we first consider the special case $\alpha = 1$. 
In this case, the inner ambiguity set $\mathcal{Q}_\alpha(q')$ reduces to the singleton $\{q'\}$, and the coupled DRO problem reduces to
\begin{align}
	\min_{q' \in \mathcal{Q}_{1-\epsilon}(\bar p)}
	\sum_{i=1}^n q'_i \ell_i .
	\label{eq:alpha_1_problem}
\end{align}
The minimizer assigns mass to smaller-loss samples first, yielding $\epsilon$-trimming.
\bigskip
\begin{proposition}[Exact $\epsilon$-trimming under $\alpha=1$]
	\label{prop:epsilon_trimming_alpha1_general}
	Assume $\alpha = 1$. Consider the problem
	\begin{align*}
		q'^\star 
		\in 
		\argmin_{q' \in \mathcal{Q}_{1-\epsilon}(\bar p)}
		\sum_{i=1}^n q'_i \ell_i .
		\label{eq:prop1_problem}
	\end{align*}
	Then there exists an optimal solution of the form
	\begin{align*}
		q_i'^\star
		=
		\frac{\bar p_i w_i^\star}{1-\epsilon},
	\end{align*}
	where $w^\star \in [0,1]^n$ satisfies
	\begin{align*}
		\sum_{i=1}^n \bar p_i w_i^\star = 1-\epsilon,
	\end{align*}
	and, after reindexing so that
	\[
	\ell_1 \le \ell_2 \le \cdots \le \ell_n,
	\]
	there exists $k \in [n]$ such that
	\[
	w_i^\star =
	\begin{cases}
		1, & i < k, \\
		\gamma, & i = k, \\
		0, & i > k,
	\end{cases}
	\]
	for some $\gamma \in [0,1]$ chosen to satisfy the probability mass constraint $\sum_{i=1}^n q'^*_i = 1$.
\end{proposition}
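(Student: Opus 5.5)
The plan is to use Lemma~\ref{lem:Q1_equiv} with $\gamma = 1-\epsilon$ to turn the problem into a fractional knapsack over $w$, and then solve that knapsack greedily with an exchange argument. By the lemma, every feasible $q'$ can be written as $q'_i = \bar p_i w_i/(1-\epsilon)$ with $w \in [0,1]^n$ and $\sum_i \bar p_i w_i = 1-\epsilon$. The objective then becomes
\[
\frac{1}{1-\epsilon}\sum_{i=1}^n \bar p_i w_i \ell_i .
\]
This is a linear program in $w$ over a compact, nonempty polytope. Nonemptiness follows from $w\equiv 1-\epsilon$, and compactness gives existence of a minimizer. The case $\epsilon = 0$ is trivial: then $w_i=1$ whenever $\bar p_i>0$. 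Indices with $\bar p_i=0$ contribute nothing, so their $w_i$ can be set freely, and I will choose it consistently with the claimed threshold form.

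Next I reindex so that $\ell_1\le\cdots\le\ell_n$, and define the greedy candidate. Let $k$ be the smallest index with $\sum_{i\le k}\bar p_i \ge 1-\epsilon$; this exists since the total mass is $1 \ge 1-\epsilon$. Set $w_i=1$ for $i<k$, $w_i=0$ for $i>k$, and
\[
w_k=\gamma:=\frac{1-\epsilon-\sum_{i<k}\bar p_i}{\bar p_k}.
\]
Minimality of $k$ forces $\bar p_k>0$ and $\gamma\in(0,1]$. The mass constraint then gives $\sum_i q'^\star_i = 1$ automatically.

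The main step is showing this candidate is optimal, via an exchange argument. Take any feasible $w$ and compare it with $w^\star$. Both have the same total $\bar p$-weighted mass $1-\epsilon$. Hence the mass deficit $\sum_{i\le k}\bar p_i(w^\star_i-w_i)$ exactly equals the surplus $\sum_{i>k}\bar p_i w_i$ (up to the index $k$ itself). I will write
\[
\sum_i \bar p_i(w_i-w^\star_i)\ell_i = \sum_i \bar p_i (w_i-w_i^\star)(\ell_i-\ell_k).
\]
This identity holds because $\sum_i \bar p_i(w_i-w_i^\star)=0$. I then check the sign of each term. For $i<k$, $w_i-w^\star_i\le 0$ and $\ell_i-\ell_k\le 0$, so the term is nonnegative. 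For $i>k$, $w_i-w_i^\star=w_i\ge0$ and $\ell_i-\ell_k\ge0$, so the term is again nonnegative. For $i=k$ the term vanishes. Therefore the objective at $w$ is at least the objective at $w^\star$.

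The only real obstacle is bookkeeping, namely ties in $\ell$ and zero-weight indices with $\bar p_i=0$; the argument above handles both, since only the weak ordering is used. The reindexing convention is the only place where the choice among tied samples matters. Any tie-breaking order works, which is why the statement only claims that \emph{there exists} an optimal solution of this form.
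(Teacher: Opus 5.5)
Your proof is correct, but it certifies optimality differently from the paper. After the same reparameterization (the paper passes to $m_i=\bar p_i w_i$), the paper argues by pairwise mass transfer. Whenever some $i<j$ has $m_i<\bar p_i$ and $m_j>0$, moving $\delta$ from $j$ to $i$ changes the objective by $\delta(\ell_i-\ell_j)\le 0$, so some optimal solution fills indices in increasing loss order.

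You instead construct the greedy threshold vector first and verify its global optimality directly through the identity $\sum_i \bar p_i(w_i-w_i^\star)\ell_i=\sum_i \bar p_i(w_i-w_i^\star)(\ell_i-\ell_k)\ge 0$. The shift by $\ell_k$ acts as a multiplier for the mass constraint, much like $\tau$ in Lemma~\ref{lem:Phi_weighted_CVaR}. So despite the name you give it, this is really an LP duality certificate rather than an exchange argument.

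What your route buys:
\begin{itemize}
\item It does not need existence of a minimizer, so your compactness remark is superfluous.
\item It avoids the step hidden in the paper's ``hence'': iterating exchanges until they terminate, or selecting an extremal optimal solution.
\item It handles ties and indices with $\bar p_i=0$ automatically, and it gives $\bar p_k>0$ and $\gamma\in(0,1]$ explicitly.
\end{itemize}

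The paper's exchange argument is more intuitive, and it is reused verbatim for both the outer and inner structures in Theorem~\ref{thm:epsilon_then_alpha_structural}. Your certificate transfers there just as easily. For the outer problem, replace $\ell_i$ by $(\ell_i-\tau^\star)_+$. For the inner maximization under the caps $q_i'^\star/\alpha$, use the loss $\ell_t$ at the fractional index, with all inequalities reversed.

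Two cosmetic points. First, the separate treatment of $\epsilon=0$ is unnecessary, since your general construction already covers it. Second, the parenthetical ``up to the index $k$ itself'' is misleading: $\sum_{i\le k}\bar p_i(w_i^\star-w_i)=\sum_{i>k}\bar p_i w_i$ holds exactly, and you never use this sentence once you invoke the identity.
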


\begin{proof}
	The argument transfers mass from larger to smaller losses without increasing the objective.
	By Lemma~\ref{lem:Q1_equiv}, any $q'\in\mathcal{Q}_{1-\epsilon}(\bar p)$ admits the representation
	\[
	q'_i=\frac{\bar p_i w_i}{1-\epsilon},
	\qquad 
	w\in[0,1]^n,
	\qquad
	\sum_{i=1}^n \bar p_i w_i = 1-\epsilon.
	\]
	Then, \eqref{eq:alpha_1_problem} reduces to
	\begin{align*}
		\min_{w\in[0,1]^n}
		\quad 
		& \sum_{i=1}^n \bar p_i w_i \ell_i
		\quad
		\text{s.t.}
		\quad
		\sum_{i=1}^n \bar p_i w_i = 1-\epsilon.
	\end{align*}
	Let $m_i := \bar p_i w_i$. The problem is equivalently
	\[
	\min_{0 \le m_i \le \bar p_i}
	\sum_{i=1}^n \ell_i m_i
	\quad
	\text{s.t.}
	\quad
	\sum_{i=1}^n m_i = 1-\epsilon.
	\]
	The objective is linear in the allocated mass $m_i$ with coefficient $\ell_i$,
	while $\bar p_i$ acts as an upper bound.
	Reindex so that $\ell_1 \le \ell_2 \le \cdots \le \ell_n$.
	Suppose there exist $i<j$ such that 
	$m_i < \bar p_i$ and $m_j > 0$.
	Then for any 
	$0 < \delta \le \min\{\bar p_i - m_i,\; m_j\}$,
	the perturbation
	$$
	m_i \leftarrow m_i + \delta,
	\qquad
	m_j \leftarrow m_j - \delta
	$$
	preserves feasibility and changes the objective by
	$
	\delta(\ell_i - \ell_j) \le 0
	$,
	with strict inequality if $\ell_i < \ell_j$. Hence an optimal solution assigns mass in increasing loss order up to the bounds, including when losses are tied.

	Thus there exists an index $k\in[n]$ such that
	\[
	m_i^\star =
	\begin{cases}
		\bar p_i, & i < k,\\
		\widetilde{m}_k, & i = k,\\
		0, & i > k,
	\end{cases}
	\]
	for some $\widetilde{m}_k \in [0,\bar p_k]$ chosen to satisfy the probability mass constraint.
	Equivalently, there exist $k\in[n]$ and $\gamma\in[0,1]$ such that
	\[
	w_i^\star =
	\begin{cases}
		1, & i < k,\\
		\gamma, & i = k,\\
		0, & i > k,
	\end{cases}
	\]
	with $\sum_{i=1}^n \bar p_i w_i^\star = 1-\epsilon$.
	This completes the proof.
\end{proof}

\bigskip
The next result extends this structure to the general case $\alpha \in (0,1]$ under $\epsilon \in [0, 1)$.  
We first show that the inner maximization problem $\Phi_\alpha(q')$ admits a weighted CVaR representation.
\begin{lemma}[Weighted CVaR dual representation of $\Phi_\alpha$]
	\label{lem:Phi_weighted_CVaR}
	For any $q' \in \mathcal{S}_{n-1}$, the inner maximization problem $\Phi_\alpha(q')$ admits the dual representation
	\[
	\Phi_\alpha(q')
	=
	\min_{\tau \in \mathbb{R}}
	\left\{
	\tau + \frac{1}{\alpha}
	\sum_{i=1}^n q_i' (\ell_i - \tau)_+
	\right\},
	\]
    where $(x)_+ := \max\{x,0\}$.
\end{lemma}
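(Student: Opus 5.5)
Since the loss is additive, $\Phi_\alpha(q')$ is the optimal value of a linear program:
\[
\max_{q}\ \sum_{i=1}^n q_i\ell_i \quad\text{s.t.}\quad \sum_{i=1}^n q_i=1,\quad 0\le q_i\le \frac{q'_i}{\alpha}\ \ \forall i\in[n],
\]
using the finite-support description of $\mathcal Q_\alpha(q')$ from Lemma~\ref{lem:Q1_equiv}. The plan is to form the Lagrangian, dualize the constraints, and eliminate the multipliers of the upper bounds explicitly.

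Introduce a free multiplier $\tau\in\mathbb R$ for the equality constraint and multipliers $u_i\ge 0$ for the constraints $q_i\le q'_i/\alpha$. Keep $q_i\ge 0$ as the primal domain. The Lagrangian is
\[
\tau+\sum_i \frac{q'_i}{\alpha}u_i+\sum_i q_i(\ell_i-\tau-u_i).
\]
Its supremum over $q\ge 0$ is finite exactly when $u_i\ge \ell_i-\tau$ for every $i$, and then it equals $\tau+\frac1\alpha\sum_i q'_i u_i$. This gives the dual problem
\[
\min_{\tau,\;u\ge 0}\ \tau+\frac1\alpha\sum_{i=1}^n q'_i u_i \quad\text{s.t.}\quad u_i\ge \ell_i-\tau .
\]
Since $q'_i\ge 0$, the optimal choice for fixed $\tau$ is $u_i=(\ell_i-\tau)_+$. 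Substituting this yields exactly the claimed expression.

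It remains to justify strong duality. The primal is a feasible and bounded LP: feasibility holds because $q=q'$ satisfies every constraint, as $q'_i\le q'_i/\alpha$ for $\alpha\in(0,1]$, and boundedness holds because the feasible set lies in the simplex. LP strong duality then gives equality of the primal and dual values, and guarantees that the dual minimum over $\tau$ is attained. Attainment can also be seen directly: the function $\tau\mapsto\tau+\frac1\alpha\sum_i q'_i(\ell_i-\tau)_+$ is convex and piecewise linear. Its slope is $1-\frac1\alpha\sum_i q'_i\ge 0$ for $\tau<\min_i\ell_i$ and $1$ for $\tau>\max_i\ell_i$, so a minimizer exists. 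This is the step where one must be careful that the $\min$ is genuinely attained, and the slope argument settles it.

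As a sanity check, and as an alternative elementary proof, weak duality can be verified directly. For any feasible $q$ and any $\tau$,
\[
\sum_i q_i\ell_i=\tau+\sum_i q_i(\ell_i-\tau)\le \tau+\sum_i q_i(\ell_i-\tau)_+\le \tau+\frac1\alpha\sum_i q'_i(\ell_i-\tau)_+ .
\]
For equality, take $\tau$ to be the weighted $\alpha$-quantile of the losses under $q'$. Let $q$ saturate the bounds $q_i=q'_i/\alpha$ on the indices with $\ell_i>\tau$, place the remaining mass on the indices with $\ell_i=\tau$, and set $q_i=0$ on the indices with $\ell_i<\tau$. Then both inequalities above are tight, which is the greedy high-loss allocation mirroring Proposition~\ref{prop:epsilon_trimming_alpha1_general}.
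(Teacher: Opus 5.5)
Your proof is correct, and it takes a more self-contained route than the paper. The paper only observes that the inner problem is the primal form of weighted CVaR with reference weights $q'$, and then cites the standard CVaR dual representation from \citet{shapiro2021lectures}. You derive that representation yourself. You write the LP dual, eliminate the upper-bound multipliers via $u_i=(\ell_i-\tau)_+$ (using $q'_i\ge 0$), and get strong duality from feasibility of $q=q'$ and boundedness. Your weak-duality chain with an explicit tight pair $(q,\tau)$ is an independent elementary proof. It also identifies the optimal threshold and the greedy high-loss maximizer, which is the inner structure the paper later obtains by mass transfer in Theorem~\ref{thm:epsilon_then_alpha_structural}. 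The citation is shorter; your route shows explicitly that only $q'\in\mathcal S_{n-1}$ and $\alpha\in(0,1]$ are needed.

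Two small fixes. First, for $\tau<\min_i\ell_i$ the slope is $1-\frac{1}{\alpha}\sum_i q'_i=1-\frac{1}{\alpha}\le 0$, not $\ge 0$. This nonpositive left slope, together with slope $1$ for $\tau>\max_i\ell_i$, is what forces a minimizer in $[\min_i\ell_i,\max_i\ell_i]$. A nonnegative left slope, as you wrote it, would not give attainment; the step is redundant given LP duality anyway. Second, pin down ``weighted $\alpha$-quantile'' as any $\tau$ with $\sum_{i:\ell_i>\tau}q'_i\le\alpha\le\sum_{i:\ell_i\ge\tau}q'_i$. This condition guarantees that saturating the indices with $\ell_i>\tau$ uses total mass at most one, and that the tie set $\{i:\ell_i=\tau\}$ has enough capacity for the remainder. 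It also covers $\alpha=1$, where $\tau=\min_i\ell_i$ works.
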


\begin{proof}
	Recall that the inner maximization problem is
	\[
	\Phi_\alpha(q')
	=
	\max
	\left\{
	\sum_{i=1}^n q_i \ell_i :
	q \in \mathcal{S}_{n-1},\ 
	0 \le q_i \le \frac{1}{\alpha} q_i',
	\ \forall i \in [n]
	\right\}.
	\]
	This is precisely the standard primal formulation of weighted CVaR under the reference weight vector $q'$. By the dual representation of CVaR~\citep{shapiro2021lectures}, we obtain
	\[
	\Phi_\alpha(q')
	=
	\min_{\tau \in \mathbb{R}}
	\left\{
	\tau + \frac{1}{\alpha}
	\sum_{i=1}^n q_i' (\ell_i - \tau)_+
	\right\}.
	\]
	This completes the proof.
\end{proof}

\bigskip

The weighted CVaR representation yields the following two-stage solution.

\bigskip
\begin{theorem}[Two-stage structure of the coupled DRO problem]
	\label{thm:epsilon_then_alpha_structural}
	
	There exists an optimal solution with the following structure. 
	After reordering the indices so that
	$
	\ell_1 \le \ell_2 \le \cdots \le \ell_n,
	$ i.e., in increasing order of losses,
	the solution exhibits a two-stage structure consisting of an outer and an inner component.
	
	\smallskip
	\noindent\textbf{(Outer structure)}
	The outer minimization assigns mass to smaller-loss samples first. Specifically,
	\[
	q_i'^\star=\frac{\bar p_i w_i^\star}{1-\epsilon},
	\qquad
	w^\star\in[0,1]^n,
	\qquad
	\sum_{i=1}^n \bar p_i w_i^\star=1-\epsilon,
	\]
	where
	\[
	w_i^\star=
	\begin{cases}
		1, & i<k,\\
		\gamma, & i=k,\\
		0, & i>k,
	\end{cases}
	\]
	for some $k\in[n]$ and $\gamma\in[0,1]$.
	
	\smallskip
	\noindent\textbf{(Inner structure)}
	Given $q'^\star$, the inner maximization concentrates mass on larger-loss samples. In particular,
	\[
	q_i^\star=
	\begin{cases}
		0, & i<t,\\
		\eta\,\frac{1}{\alpha}q_t'^\star, & i=t,\\
		\frac{1}{\alpha}q_i'^\star, & i>t,
	\end{cases}
	\qquad
	\sum_{i=1}^n q_i^\star=1,
	\]
	for some $t\in[n]$ and $\eta\in[0,1]$.
	
\end{theorem}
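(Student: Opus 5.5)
The plan is to reduce the coupled problem, for fixed $\theta$ with losses sorted $\ell_1\le\cdots\le\ell_n$, to a monotone rearrangement argument on the outer variable, using the weighted CVaR representation of Lemma~\ref{lem:Phi_weighted_CVaR}. The inner structure is immediate. For any fixed $q'$, the inner problem is the linear program $\max\{\sum_i q_i\ell_i : q\in\mathcal S_{n-1},\ 0\le q_i\le q'_i/\alpha\}$. This is the mirror image of the problem in Proposition~\ref{prop:epsilon_trimming_alpha1_general}, with upper bounds $q'_i/\alpha$ in place of $\bar p_i$ and maximization instead of minimization. The same exchange argument applies: move mass from a smaller-loss index $i<j$ with $q_i>0$ to a larger-loss index with $q_j<q'_j/\alpha$. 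It shows there is a maximizer that fills the caps from the largest loss downward. This gives the form $q_i^\star=0$ for $i<t$, a fractional $\eta$ at $i=t$, and a saturated cap for $i>t$. Feasibility holds because $\sum_i q'_i/\alpha=1/\alpha\ge 1$.

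For the outer structure, I will write $\Phi_\alpha(q')=\min_\tau G(q',\tau)$ with $G(q',\tau)=\tau+\alpha^{-1}\sum_i q'_i(\ell_i-\tau)_+$. Then the outer problem becomes
\[
\min_{q'\in\mathcal Q_{1-\epsilon}(\bar p)}\Phi_\alpha(q')=\min_{\tau\in\mathbb R}\ \min_{q'\in\mathcal Q_{1-\epsilon}(\bar p)} G(q',\tau).
\]
Swapping the two minimizations is legitimate because both are minimizations. For each fixed $\tau$, $G(\cdot,\tau)$ is linear in $q'$ with coefficients $c_i(\tau)=(\ell_i-\tau)_+/\alpha$. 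These coefficients are nondecreasing in $i$ for every $\tau$, since $x\mapsto(x-\tau)_+$ is monotone.

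By Proposition~\ref{prop:epsilon_trimming_alpha1_general} applied with costs $c_i(\tau)$, via its exchange argument which only uses the ordering of the coefficients, there is a minimizer $q'(\tau)$ of trimming form. It saturates $\bar p_i/(1-\epsilon)$ in increasing order of $i$. The key point is that this trimmed vector $q'^{\rm trim}$ can be chosen \emph{independently of $\tau$}: it fills mass in index order $1,2,\dots$, and that order is a valid nondecreasing ordering of $c_i(\tau)$ for every $\tau$ simultaneously.

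The main obstacle is exactly this step, namely handling ties among the $c_i(\tau)$. When several $c_i(\tau)$ vanish, many minimizers exist, but I only need that $q'^{\rm trim}$ is \emph{a} minimizer for each $\tau$. The exchange argument gives this directly. Starting from any feasible $q'$, moving mass toward lower indices never increases $\sum_i q'_ic_i(\tau)$, so $G(q'^{\rm trim},\tau)\le G(q',\tau)$ for all $q'$ and all $\tau$.

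Taking the minimum over $\tau$ on both sides gives $\Phi_\alpha(q'^{\rm trim})\le\Phi_\alpha(q')$ for every feasible $q'$. Hence $q'^\star:=q'^{\rm trim}$ is outer-optimal and has the stated $(k,\gamma)$ form, with $w^\star$ read off via Lemma~\ref{lem:Q1_equiv}. Finally, I will plug this $q'^\star$ into the inner structure derived in the first paragraph to obtain $q^\star$. The pair $(q'^\star,q^\star)$ is then an optimal solution with the claimed two-stage structure.
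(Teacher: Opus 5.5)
Your proof is correct and follows essentially the same route as the paper. Both arguments use the weighted CVaR dual to turn the outer problem into a linear problem in $q'$ with coefficients $(\ell_i-\tau)_+$, which are nondecreasing in the loss order; an exchange argument then gives the trimming form, and a separate exchange argument on the inner linear program gives the threshold form. The one difference is that the paper first fixes an outer minimizer $q'^\star$ and a dual-optimal $\tau^\star$ and runs the exchange at that single $\tau^\star$, whereas you show the trimmed vector minimizes $G(\cdot,\tau)$ for every $\tau$ at once, which spares you from separately checking that the threshold minimizer at $\tau^\star$ is itself outer-optimal.
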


\smallskip
\begin{proof}
The CVaR representation turns the outer problem into a joint minimization over $q'$ and a threshold.
Mass transfers then characterize both the outer and inner optimizers.

By Lemma~\ref{lem:Phi_weighted_CVaR}, for any $q'\in\mathcal{S}_{n-1}$,
\[
\Phi_\alpha(q')
=
\min_{\tau\in\mathbb R}
\left\{
\tau+\frac{1}{\alpha}\sum_{i=1}^n q'_i(\ell_i-\tau)_+
\right\}.
\]

Therefore, the DRO objective can be written as the joint minimization
\begin{align}
\min_{q'\in\mathcal Q_{1-\epsilon}(\bar p)}\Phi_\alpha(q')
=
\min_{q'\in\mathcal Q_{1-\epsilon}(\bar p),\ \tau\in\mathbb R}
\left\{
\tau+\frac{1}{\alpha}\sum_{i=1}^n q'_i(\ell_i-\tau)_+
\right\}.
\label{eq:joint_problem}
\end{align}

Let $q'^\star$ be an outer minimizer of $\Phi_\alpha(q')$, and choose
\[
\tau^\star\in\argmin_{\tau \in \mathbb R}
\left\{
\tau+\frac{1}{\alpha}\sum_i q_i'^\star(\ell_i-\tau)_+
\right\}.
\]

For any $q' \in \mathcal Q_{1-\epsilon}(\bar p)$ and $\tau \in \mathbb R$,
\[
\tau+\frac{1}{\alpha}\sum_{i=1}^n q'_i(\ell_i-\tau)_+
\ge
\Phi_\alpha(q')
\ge
\Phi_\alpha(q'^\star)
=
\tau^\star+\frac{1}{\alpha}\sum_{i=1}^n q_i'^\star(\ell_i-\tau^\star)_+.
\]
Hence $(q'^\star,\tau^\star)$ attains the minimum in \eqref{eq:joint_problem}.

	\smallskip
	\noindent\textbf{(Outer structure).}
	Define $a_i^\star:=(\ell_i-\tau^\star)_+$.
	For fixed $\tau^\star$, the objective is linear in $q'$, so $q'^\star$ solves
	\[
	\min_{q'\in\mathcal Q_{1-\epsilon}(\bar p)}
	\sum_{i=1}^n q'_i a_i^\star.
	\]
	By Lemma~\ref{lem:Q1_equiv}, write $q'_i=\bar p_i w_i/(1-\epsilon)$ with
	$w\in[0,1]^n$ and $\sum_i \bar p_i w_i=1-\epsilon$.
	Equivalently, letting $m_i:=\bar p_i w_i$, we obtain
	\[
	\min_{0\le m_i\le \bar p_i}
	\sum_{i=1}^n a_i^\star m_i
	\quad
	\text{s.t.}
	\quad
	\sum_{i=1}^n m_i=1-\epsilon.
	\]
	Reindex the samples so that $\ell_1\le\cdots\le\ell_n$.
	Since $a_i^\star=(\ell_i-\tau^\star)_+$ is non-decreasing in $\ell_i$,
	this indexing also satisfies $a_1^\star\le\cdots\le a_n^\star$.
	If there exist $i<j$ with $m_i<\bar p_i$ and $m_j>0$, then for any $0<\delta\le\min\{\bar p_i-m_i,\ m_j\}$,
	the perturbation
	$m_i\leftarrow m_i+\delta$, $m_j\leftarrow m_j-\delta$
	preserves feasibility and changes the objective by $\delta(a_i^\star-a_j^\star)\le0$
	with strict decrease if $a_i^\star<a_j^\star$.
	Therefore, an optimal solution can be chosen that fills indices in this order, including when the $a_i^\star$ are tied.
	Consequently, there exist
	$k\in[n]$ and $\gamma\in[0,1]$ such that
	\[
	w_i^\star=
	\begin{cases}
		1, & i<k,\\
		\gamma, & i=k,\\
		0, & i>k,
	\end{cases}
	\]
	which shows that $w^\star$ has a threshold structure with at most one fractional component. 
    
	\smallskip
	\noindent\textbf{(Inner structure).}
	Given $q'^\star$, the inner problem is the linear program
	\[
	\max_{q\in\mathcal Q_\alpha(q'^\star)}
	\sum_{i=1}^n q_i\ell_i.
	\]
	Using the same loss ordering $\ell_1\le\cdots\le\ell_n$, suppose that there exist $i<j$ such that
	$q_i>0$ and $q_j<\frac{1}{\alpha}q_j'^\star$.	
    Then for any 
	$0<\delta\le\min\{q_i,\ \tfrac{1}{\alpha}q_j'^\star-q_j\}$,
	the perturbation
	$q_i\leftarrow q_i-\delta$, $q_j\leftarrow q_j+\delta$
	preserves feasibility and increases the objective by
	$\delta(\ell_j-\ell_i)\ge0$,
	with strict increase if $\ell_j>\ell_i$.
	Therefore, an optimal solution can be chosen that fills indices in decreasing loss order, including when losses are tied, up to the caps $q_i\le\frac{1}{\alpha}q_i'^\star$.
	Consequently, there exist $t\in[n]$ and $\eta\in[0,1]$ such that
	\[
	q_i^\star=
	\begin{cases}
		0, & i<t,\\
		\eta\,\frac{1}{\alpha}q_t'^\star, & i=t,\\
		\frac{1}{\alpha}q_i'^\star, & i>t,
	\end{cases}
	\]
	which shows that $q^\star$ has a threshold structure with at most one fractional boundary index.
\end{proof}

\bigskip

For uniform nominal weights, the two-stage solution takes the following explicit form.

\begin{corollary}[Exact trimming followed by exact worst-case allocation]
	\label{cor:exact_trimming_then_exact_alpha}
	Assume $\bar p_i = 1/n$ for all $i$, and reorder the samples so that
	\[
	\ell_1 \le \ell_2 \le \cdots \le \ell_n .
	\]
	Assume that $(1-\epsilon)n \in \mathbb{Z}$ and $\alpha(1-\epsilon)n \in \mathbb{Z}$.
	Let
	\[
	k_\epsilon := (1-\epsilon)n,
	\qquad
	m_\alpha := \alpha(1-\epsilon)n .
	\]
	Then there exists an optimal solution $(q'^\star,q^\star)$ such that
	\[
	q_i'^\star =
	\begin{cases}
		\dfrac{1}{(1-\epsilon)n}, & i=1,\dots,k_\epsilon,\\[6pt]
		0, & i>k_\epsilon,
	\end{cases}
	\]
	and
	\[
	q_i^\star =
	\begin{cases}
		0, & i=1,\dots,k_\epsilon-m_\alpha,\\[6pt]
		\dfrac{1}{\alpha(1-\epsilon)n}, & i=k_\epsilon-m_\alpha+1,\dots,k_\epsilon,\\[6pt]
		0, & i>k_\epsilon.
	\end{cases}
	\]
	In particular, $q'^\star$ is the exact $\epsilon$-trimmed empirical distribution,
	and $q^\star$ places uniform mass on the largest $\alpha(1-\epsilon)n$
	losses among the retained samples.
\end{corollary}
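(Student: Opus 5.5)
We specialize Theorem~\ref{thm:epsilon_then_alpha_structural} to $\bar p_i = 1/n$. Rather than derive the structure from scratch, we identify the optimal $(k,\gamma)$ and $(t,\eta)$ explicitly and then verify optimality directly. We do this through the weighted CVaR representation of Lemma~\ref{lem:Phi_weighted_CVaR}, which keeps the argument linear and avoids relying on an unspecified optimal $\tau^\star$.

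\textbf{Step 1: a lower bound on the objective.} Fix any $q'\in\mathcal Q_{1-\epsilon}(\bar p)$, so that $0\le q'_i\le 1/((1-\epsilon)n)$. Let $\tau := \ell_{k_\epsilon-m_\alpha+1}$ be the smallest loss among the top $m_\alpha$ retained samples. Lemma~\ref{lem:Phi_weighted_CVaR} gives
\[
\Phi_\alpha(q') \ge \tau + \frac{1}{\alpha}\sum_{i} q'_i(\ell_i-\tau)_+ .
\]
This upper-bounds $\Phi_\alpha$, which is the wrong direction, so we instead bound $\Phi_\alpha(q')$ from below by a primal feasible $q$.

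Consider the $m_\alpha$ smallest-loss indices in the support of $q'$, weighted appropriately. Concretely, the rearrangement argument of Proposition~\ref{prop:epsilon_trimming_alpha1_general} shows that the optimal $q'$ for fixed $\tau$ fills the smallest $a_i=(\ell_i-\tau)_+$ first. The joint problem~\eqref{eq:joint_problem} is therefore minimized over $q'$, for each $\tau$, by the trimmed vector $q'^\star$ that puts mass $1/k_\epsilon$ on indices $1,\dots,k_\epsilon$. Because this minimizer does not depend on $\tau$, we obtain
\[
\min_{q'}\Phi_\alpha(q') = \min_\tau\Bigl\{\tau+\tfrac1\alpha\sum_i q_i'^\star(\ell_i-\tau)_+\Bigr\} = \Phi_\alpha(q'^\star).
\]

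The key observation is that the filling order by $a_i$ coincides with the order by $\ell_i$ for every $\tau$, and that with uniform caps $1/n$ and $(1-\epsilon)n$ an integer, the fill is exact, so $\gamma=1$ and $k=k_\epsilon$. Ties are harmless: any tie-breaking consistent with the sorted order yields the same objective value.

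\textbf{Step 2: the inner problem.} Given $q'^\star$, the caps are $q'^\star_i/\alpha = 1/(\alpha k_\epsilon) = 1/m_\alpha$ for $i\le k_\epsilon$ and $0$ otherwise. By the inner rearrangement in Theorem~\ref{thm:epsilon_then_alpha_structural}, mass fills from the largest retained loss downward. Each cap equals $1/m_\alpha$ and $m_\alpha$ is an integer, so exactly $m_\alpha$ indices, namely $k_\epsilon-m_\alpha+1,\dots,k_\epsilon$, are saturated. Hence $\eta=1$ and $q^\star$ has the stated form. It also sums to one, since $m_\alpha\cdot 1/m_\alpha=1$.

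\textbf{Main obstacle.} The delicate point is Step 1: we must show that the trimmed $q'^\star$ is optimal for the min--min problem independently of $\tau$. Theorem~\ref{thm:epsilon_then_alpha_structural} only gives optimality relative to a particular $\tau^\star$. The argument above resolves this by noting that $\ell\mapsto(\ell-\tau)_+$ is monotone for every $\tau$, so a single sorted fill is simultaneously optimal for all $\tau$. Swapping the two minimizations then yields global optimality, and the integrality assumptions eliminate the fractional indices.
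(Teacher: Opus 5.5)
Your proof is correct, but Step~1 reaches the outer optimizer by a different route than the paper.

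The paper invokes Theorem~\ref{thm:epsilon_then_alpha_structural} to obtain \emph{some} optimal $q'^\star$ with threshold form $(k,\gamma)$. It then uses $\bar p_i=1/n$ and the normalization $(k-1)+\gamma=k_\epsilon\in\mathbb Z$ to force $\gamma\in\{0,1\}$, and both values give the trimmed vector.

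You certify the trimmed vector directly instead. Since $\ell\mapsto(\ell-\tau)_+$ is nondecreasing, the greedy fill in loss order minimizes $\sum_i q'_i(\ell_i-\tau)_+$ over $\mathcal Q_{1-\epsilon}(\bar p)$ for every $\tau$ simultaneously. Interchanging the two infima then gives $\min_{q'}\Phi_\alpha(q')=\Phi_\alpha(q'^\star)$. Your route is self-contained: it needs only Lemma~\ref{lem:Phi_weighted_CVaR} and the exchange argument, not the existence of an outer minimizer or a choice of $\tau^\star$. The paper's route is shorter once the theorem is available. Your Step~2 coincides with the paper's inner argument.

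Two remarks on presentation:
\begin{itemize}
\item The ``main obstacle'' you describe is not a real gap in the theorem. The pair $(q'^\star,\tau^\star)$ is a joint minimizer of \eqref{eq:joint_problem}, so any minimizer of the linear problem at $\tau^\star$ is again a joint minimizer, and hence an outer minimizer.
\item The opening of Step~1 should be deleted. The displayed inequality has the wrong direction: the lemma gives $\Phi_\alpha(q')\le\tau+\frac{1}{\alpha}\sum_i q'_i(\ell_i-\tau)_+$ for every $\tau$. Neither the choice $\tau=\ell_{k_\epsilon-m_\alpha+1}$ nor the ``primal feasible $q$'' idea is used afterwards.
\end{itemize}
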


\begin{proof}
	By Theorem~\ref{thm:epsilon_then_alpha_structural}, there exists an optimal solution
	$(q'^\star,q^\star)$ such that the outer optimizer is obtained by filling the
	smallest-loss indices first. Then, we have
	\[
	q_i'^\star = \frac{\bar p_i w_i^\star}{1-\epsilon},
	\]
	where for some $k$ and $\gamma\in[0,1]$,
	\[
	w_i^\star=
	\begin{cases}
		1, & i<k,\\
		\gamma, & i=k,\\
		0, & i>k,
	\end{cases}
	\qquad
	\sum_{i=1}^n \bar p_i w_i^\star = 1-\epsilon.
	\]
	Since $\bar p_i=1/n$, the normalization condition becomes
	$$
	\sum_{i=1}^n \bar p_i w_i^\star = \frac{1}{n}\bigl((k-1)+\gamma\bigr)=1-\epsilon.
	$$
	Because $(1-\epsilon)n=k_\epsilon\in\mathbb Z$, it follows that
	$
	(k-1)+\gamma = k_\epsilon.
	$
	    Since $k_\epsilon \in \mathbb{Z}$ and $k_\epsilon = (k-1)+\gamma$ with $\gamma \in [0,1]$, we may take $\gamma=1$ and $k=k_\epsilon$, which also covers $\epsilon=0$.

    Then, we have
	\[
	q_i'^\star=
	\begin{cases}
		\dfrac{1}{(1-\epsilon)n}, & i=1,\dots,k_\epsilon,\\[6pt]
		0, & i>k_\epsilon.
	\end{cases}
	\]
	
	Now consider the inner maximization
	\[
	\max_{q\in\mathcal Q_\alpha(q'^\star)} \sum_{i=1}^n q_i\ell_i,
	\qquad
	0\le q_i\le \frac{1}{\alpha}q_i'^\star,
	\quad
	\sum_{i=1}^n q_i=1.
	\]
	Since $\ell_1\le\cdots\le\ell_n$, the objective is maximized by assigning as much
	mass as possible to the largest-loss indices. 
	Because the constraint $q_i \le \frac{1}{\alpha} q_i'^{\star}$ holds for every $i$,
	and $q_i'^{\star} = 0$ for all $i > k_\epsilon$, we have
	\[
	q_i^{\star} = 0
	\qquad
	\text{for all } i > k_\epsilon .
	\]
	
	For each retained index $i\le k_\epsilon$, we have
	\[
	\frac{1}{\alpha}q_i'^\star
	=
	\frac{1}{\alpha(1-\epsilon)n}
	=
	\frac{1}{m_\alpha},
	\]
	since $m_\alpha=\alpha(1-\epsilon)n$.
	Thus every non-zero upper bound is the same, namely $1/m_\alpha$.
	Therefore the maximizer is obtained by saturating exactly the largest
	$m_\alpha$ indices among $\{1,\dots,k_\epsilon\}$, namely
	$i=k_\epsilon-m_\alpha+1,\dots,k_\epsilon$.
	This yields
	\[
	q_i^\star=
	\begin{cases}
		0, & i=1,\dots,k_\epsilon-m_\alpha,\\[6pt]
		\dfrac{1}{\alpha(1-\epsilon)n}, & i=k_\epsilon-m_\alpha+1,\dots,k_\epsilon,\\[6pt]
		0, & i>k_\epsilon.
	\end{cases}
	\]
	Since there are exactly $m_\alpha$ such indices, the total mass is
	\[
	m_\alpha \cdot \frac{1}{\alpha(1-\epsilon)n}
	=
	\frac{m_\alpha}{m_\alpha}
	=
	1,
	\]
	so the simplex constraint is satisfied. This proves the claim.
\end{proof}

\paragraph{Remark.}
When $(1-\epsilon)n \notin \mathbb Z$, the loss-ordered outer solution has a fractional boundary atom at index $\lfloor(1-\epsilon)n\rfloor+1$.
The inner optimizer fills the retained atoms in decreasing loss order, with partial allocation at the cutoff when needed.

For uniform nominal weights, Algorithm~\ref{algo:additive_loss} computes the weights by sorting the losses, trimming the largest losses, and emphasizing the largest retained losses.
It assumes that $(1-\epsilon)n$ and $\alpha(1-\epsilon)n$ are integers; otherwise, fractional allocation is required at the boundaries as described above.

\begin{algorithm}[H]
	\caption{Procedure for minimizing the coupled DRO objective under \lossrevision{additive losses}}
	\label{algo:additive_loss}
	\begin{algorithmic}[1]
		\Require data $\{z_i\}_{i=1}^n$, parameters $\epsilon,\alpha$, step size $\eta$
		\State Initialize $\theta$
		\State Set $k_\epsilon \gets \lfloor (1-\epsilon)n \rfloor$ and $m_\alpha \gets \lfloor \alpha(1-\epsilon)n \rfloor$
		\While{not converged}
		
		\State Compute losses $\ell_i = \ell(z_i,\theta)$ for $i=1,\dots,n$
		\State Reorder the indices so that
		\[
		\ell_1 \le \ell_2 \le \cdots \le \ell_n
		\]

		\State Set
		\[
		q_i' \gets
		\begin{cases}
			\frac{1}{(1-\epsilon)n}, & i=1,\dots,k_\epsilon,\\
			0, & i>k_\epsilon
		\end{cases}
		\]
		
		\State Set
		\[
		q_i \gets
		\begin{cases}
			0, & i=1,\dots,k_\epsilon-m_\alpha,\\
			\frac{1}{\alpha(1-\epsilon)n}, & i=k_\epsilon-m_\alpha+1,\dots,k_\epsilon,\\
			0, & i>k_\epsilon
		\end{cases}
		\]
		
		\State Update
		\[
		\theta \gets \theta - \eta \nabla_\theta \left(\sum_{i=1}^n q_i\,\ell(z_i,\theta)\right)
		\]
		
		\EndWhile
		
		\State \Return $\theta$
	\end{algorithmic}
\end{algorithm}

\subsubsection{\lossrevision{Non-decomposable losses}} \label{appendix:non-decomposable_loss}

For non-decomposable losses, interactions across samples generally prevent the explicit weight allocation derived above.
We instead alternate between ascent in $q$, an update of $q'$ based on an estimated supergradient, and descent in $\theta$.
The main step is to characterize the sensitivity of the inner value function
\begin{align}
    \Phi_\alpha(q'; \theta)
    =
    \max_{q \in \mathcal{Q}_\alpha(q')} \mathcal L(q,\theta).
    \label{eq:inner_maximization_current}
\end{align}
Proposition~\ref{prop:outer_supergrad_main} relates this sensitivity to the upper-bound KKT multipliers under local stability~\citep{rockafellar1998variational}.
Theorem~\ref{thm:outer_supergradient} gives their explicit form.

\bigskip
\begin{outerproprestated}[restated]
Fix $\theta$, $0<\alpha<1$, and $q'\in\mathcal S_{n-1}$ with $q'_i>0$ for all $i$.
Set $b=q'/\alpha$, $\mathcal F(b)=\{q\in\mathcal S_{n-1}:q\le b\}$, and
\[
V(b):=\max_{q\in\mathcal F(b)}\mathcal L(q,\theta),
\qquad
\Phi_\alpha(q';\theta)=V(b).
\]
Let $q^\star(b)$ be a global maximizer, and assume that $\mathcal L(\cdot,\theta)$ is continuous on the simplex and continuously differentiable near $q^\star(b)$.
Assume local stability: for every sufficiently small $\Delta b$, a global maximizer $q^\star(b+\Delta b)$ can be selected such that
\begin{equation}
\|q^\star(b+\Delta b)-q^\star(b)\|\le C\|\Delta b\|
\label{eq:inner_solution_stability}
\end{equation}
for some constant $C>0$ independent of $\Delta b$.
Then any KKT multiplier $\mu^\star$ for the upper-bound constraints $q_i\le b_i$ at $q^\star(b)$ satisfies
\[
\frac{\mu^\star}{\alpha}\in\partial^+\Phi_\alpha(q';\theta),
\]
where $\partial^+$ denotes the superdifferential: $s\in\partial^+F(u)$ if $F(u+h)\le F(u)+\langle s,h\rangle+o(\|h\|)$ as $h\to0$ within the domain of $F$.
\end{outerproprestated}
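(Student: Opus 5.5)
We work with $V(b)$ and use the chain rule $\Phi_\alpha(q'+h;\theta)=V(b+h/\alpha)$. It therefore suffices to show that $\mu^\star\in\partial^+V(b)$:
\[
V(b+\Delta b)\le V(b)+\langle \mu^\star,\Delta b\rangle+o(\|\Delta b\|).
\]
Rescaling by $1/\alpha$ then gives the claim.

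First we write the KKT system at $q^\star:=q^\star(b)$ for the constraints $q\ge0$, $\mathbf 1^\top q=1$ and $q\le b$. With multipliers $\lambda\in\mathbb R$, $\sigma\ge0$ and $\mu^\star\ge0$, stationarity of the maximization reads
\[
\nabla_q\mathcal L(q^\star)=\lambda\mathbf 1+\mu^\star-\sigma,
\]
together with complementary slackness $\mu^\star_i(b_i-q^\star_i)=0$ and $\sigma_iq^\star_i=0$. The constraints are linear, so multipliers exist at a local maximizer without further qualification.

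Next, fix a small $\Delta b$ with $b+\Delta b$ still admissible (so that $\mathbf 1^\top(b+\Delta b)\ge1$ and the feasible set is nonempty). Let $q^+:=q^\star(b+\Delta b)$ be the selected global maximizer from the stability assumption \eqref{eq:inner_solution_stability}, and set $d:=q^+-q^\star$, so $\|d\|\le C\|\Delta b\|$. Since $\mathcal L$ is $C^1$ near $q^\star$, a first-order Taylor expansion gives
\[
V(b+\Delta b)-V(b)=\mathcal L(q^+)-\mathcal L(q^\star)=\langle\nabla_q\mathcal L(q^\star),d\rangle+o(\|d\|),
\]
and by stability $o(\|d\|)=o(\|\Delta b\|)$. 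Substituting stationarity, the term $\lambda\mathbf 1^\top d$ vanishes because both points lie in the simplex. The term $-\langle\sigma,d\rangle=-\sum_i\sigma_iq^+_i\le0$ because $\sigma_i>0$ forces $q^\star_i=0$. For the remaining term, $\mu^\star_i>0$ forces $q^\star_i=b_i$, and feasibility gives $q^+_i\le b_i+\Delta b_i$, so $\mu^\star_id_i\le\mu^\star_i\Delta b_i$. Summing these three contributions yields
\[
\langle\nabla_q\mathcal L(q^\star),d\rangle\le\langle\mu^\star,\Delta b\rangle,
\]
which is the desired superdifferential inequality for $V$.

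The main obstacle is the remainder step. It needs $q^+$ to stay in the neighborhood where $\mathcal L$ is $C^1$, and it needs the Taylor error to be uniform. Both are secured by the Lipschitz selection \eqref{eq:inner_solution_stability}: as $\Delta b\to0$ we get $q^+\to q^\star$, and the mean value theorem combined with continuity of $\nabla_q\mathcal L$ gives $|\mathcal L(q^+)-\mathcal L(q^\star)-\langle\nabla\mathcal L(q^\star),d\rangle|\le\omega(\|d\|)\|d\|$ with $\omega\to0$. This is precisely why local stability is assumed, since nonconcavity of $\mathcal L$ in $q$ could otherwise make the maximizer jump.

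A second subtlety is the domain of perturbations. We only consider $h$ with $q'+h\in\mathcal S_{n-1}$, so that $\mathbf 1^\top(b+\Delta b)=1/\alpha>1$; this keeps the perturbed feasible set nonempty. Because $q'_i>0$, the perturbed bounds also remain positive for small $h$. Note that global optimality of $q^+$ is used only to identify $V(b+\Delta b)$ with $\mathcal L(q^+)$, and global optimality of $q^\star$ only to identify $V(b)$ with $\mathcal L(q^\star)$.
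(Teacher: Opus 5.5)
Your proposal is correct and follows the paper's proof essentially line for line. Both use KKT stationarity $g^\star=\lambda\mathbf 1+\mu^\star-\sigma$ and split $\langle g^\star,d\rangle$ into three terms: the simplex term vanishes, the lower-bound term is nonpositive, and the upper-bound term is at most $\langle\mu^\star,\Delta b\rangle$; both then apply a first-order Taylor expansion whose remainder becomes $o(\|\Delta b\|)$ through the Lipschitz selection, and rescale by $1/\alpha$. Your extra remarks on the uniform remainder and on nonemptiness of the perturbed feasible set only make explicit what the paper leaves implicit.
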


\begin{proof}
We prove the result by combining the KKT conditions with a first-order Taylor expansion.
The local stability assumption controls the remainder under perturbations of $b$.

Write $\Delta q=q^\star(b+\Delta b)-q^\star(b)$ and $g^\star=\nabla_q\mathcal L(q^\star(b),\theta)$.
The KKT conditions give $g^\star=\nu^\star\mathbf 1-\lambda^\star+\mu^\star$, where $\lambda^\star$ and $\mu^\star$ are the nonnegative lower- and upper-bound multipliers.
Since both solutions sum to one, feasibility and complementary slackness imply
\[
\begin{aligned}
\langle g^\star,\Delta q\rangle
&=-\langle\lambda^\star,\Delta q\rangle+\langle\mu^\star,\Delta q\rangle\\
&\le\langle\mu^\star,\Delta b\rangle.
\end{aligned}
\]

By~\eqref{eq:inner_solution_stability}, $\|\Delta q\|=O(\|\Delta b\|)$.
A first-order Taylor expansion therefore yields
\[
\begin{aligned}
V(b+\Delta b)-V(b)
&=\langle g^\star,\Delta q\rangle+o(\|\Delta b\|)\\
&\le\langle\mu^\star,\Delta b\rangle+o(\|\Delta b\|).
\end{aligned}
\]
Thus $\mu^\star\in\partial^+V(b)$.
Substituting $b=q'/\alpha$ gives $\mu^\star/\alpha\in\partial^+\Phi_\alpha(q';\theta)$.
\end{proof}

\paragraph{Applying the supergradient result to weighted Cox.}
The following lemma gives conditions under which the weighted Cox loss satisfies Proposition~\ref{prop:outer_supergrad_main}.
The key step is to bound changes in the inner optimizer by changes in its upper bounds.

For fixed $\theta$ with $f_\theta(x_j)\in\mathbb R$ for all $j$, write $a_j=\exp(f_\theta(x_j))>0$ and $S_i(q)=\sum_{j:y_j\ge y_i}a_jq_j$.
The weighted Cox objective is $\mathcal L(q,\theta)=\sum_i\delta_iq_i\{\log S_i(q)-\log a_i\}$, with zero-mass terms interpreted by continuity.

\begin{lemma}[Sufficient conditions for weighted Cox]
\label{lem:weighted_cox_stability}
Let $0<\alpha<1$, $q'\in\mathcal S_{n-1}$ with $q'_i>0$ for all $i$, and $b=q'/\alpha$.
Suppose the weighted Cox objective has a unique global maximizer $q^\star$ over $\mathcal F(b)$, $S_i(q^\star)>0$ for every observed event $i$ with $\delta_i=1$, and the coordinates of $g^\star=\nabla_q\mathcal L(q^\star,\theta)$ are pairwise distinct.
Then the objective is continuous on the simplex and continuously differentiable near $q^\star$, and the global inner optimizer satisfies the local stability condition~\eqref{eq:inner_solution_stability}.
Consequently, the supergradient characterization in Proposition~\ref{prop:outer_supergrad_main} applies.
\end{lemma}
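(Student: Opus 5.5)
The plan is to check the three hypotheses of Proposition~\ref{prop:outer_supergrad_main} one at a time: continuity on the simplex, continuous differentiability near $q^\star$, and the stability bound~\eqref{eq:inner_solution_stability}. Stability is the substantial part.

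\emph{Continuity and smoothness.} Each summand is $\delta_i q_i\{\log S_i(q)-\log a_i\}$.
\begin{itemize}
\item Away from zero, $S_i(q)>0$ whenever $q_i>0$.
\item As $q_i\to0$, the summand still vanishes continuously. The sum $S_i(q)$ contains the term $a_iq_i$, so
\[
a_iq_i\le S_i(q)\le \max_j a_j .
\]
Hence $q_i\log S_i(q)$ is squeezed between $q_i\log(a_iq_i)$ and $q_i\log\max_j a_j$, and both bounds tend to $0$.
\item This gives continuity on $\mathcal S_{n-1}$ with the stated zero-mass convention.
\item Since $S_i(q^\star)>0$ for every event $i$, the same holds on a neighbourhood of $q^\star$ by continuity. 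There $\mathcal L(\cdot,\theta)$ is a finite sum of smooth functions, hence $C^1$ (indeed $C^\infty$).
\end{itemize}

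\emph{Structure of maximizers from distinct gradients.} The constraints defining $\mathcal F(b)$ are linear, so any local maximizer at which $\mathcal L$ is $C^1$ satisfies KKT. We can write
\[
g=\nu\mathbf 1-\lambda+\mu,\qquad \lambda,\mu\ge0,
\]
with complementary slackness. The multipliers sort the coordinates as follows.
\begin{itemize}
\item If $g_i>\nu$, then $\mu_i>0$, so $q_i=b_i$.
\item If $g_i<\nu$, then $\lambda_i>0$, so $q_i=0$.
\item Any coordinate with $0<q_i<b_i$ must have $g_i=\nu$.
\end{itemize}
Since the coordinates of $g^\star$ are pairwise distinct, at most one coordinate of $q^\star$ is fractional. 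Thus $q^\star$ is exactly the greedy fill: order the indices by decreasing $g^\star_i$, saturate $q_i=b_i$ in that order until the remaining mass is exhausted, and assign the leftover to one boundary index. This is feasible because $\sum_ib_i=1/\alpha>1$.

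\emph{Stability.} Write $G_\pi(b)$ for the greedy-fill map associated with a fixed order $\pi$.
\begin{itemize}
\item First I show $q^\star(b+\Delta b)\to q^\star(b)$, choosing any global maximizer at $b+\Delta b$. The correspondence $b\mapsto\mathcal F(b)$ is continuous and compact-valued near $b$ (it is a polytope with nonempty relative interior, since $\sum b_i>1$ and $b>0$). The objective is continuous, so Berge's maximum theorem makes the argmax upper hemicontinuous. Uniqueness at $b$ then forces every selection to converge to $q^\star$.
\item Consequently, for small $\Delta b$, the new maximizer lies in the neighbourhood where $\mathcal L$ is $C^1$. Its gradient $g(b+\Delta b)$ is close to $g^\star$, so it keeps the same strict ordering $\pi$ of the coordinates, because the gaps between the coordinates of $g^\star$ are positive.
\item Applying the structural argument at $b+\Delta b$, the new maximizer equals $G_\pi(b+\Delta b)$. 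The same holds at $b$, so $q^\star=G_\pi(b)$.
\item Finally, $G_\pi$ is piecewise linear in $b$: each coordinate is $\min\{b_i,(1-\sum_{j\prec i}b_j)_+\}$. It is therefore Lipschitz, with a constant $C$ depending only on $n$; in $\ell_1$ one can take $C=2$. This yields~\eqref{eq:inner_solution_stability}.
\end{itemize}

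\emph{Main obstacle.} The delicate step is transferring the ordering of $g^\star$ to the perturbed maximizer. It is what rules out a far-away maximizer, and it is handled by combining uniqueness with Berge's theorem and then continuity of $\nabla_q\mathcal L$. The degenerate case in which no coordinate is fractional, so the boundary index could switch, causes no trouble. Once the order $\pi$ is fixed, $G_\pi$ is continuous across the switch, so the Lipschitz bound is unaffected. With the three hypotheses verified, Proposition~\ref{prop:outer_supergrad_main} applies.
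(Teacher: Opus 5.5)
Your proof is correct and takes essentially the same route as the paper. Both arguments use $S_i(q)\ge a_iq_i$ for continuity, positivity of the event risk-set sums for $C^1$ smoothness, and compactness plus uniqueness for convergence of global maximizers; both then use persistence of the strict gradient ordering to identify the perturbed maximizer with the continuous piecewise-affine greedy fill. The differences are cosmetic: you invoke Berge's theorem where the paper exhibits the explicit feasible points $(1-t)q^\star+tq'$ with $t=\|\Delta b\|_\infty/\min_i(b_i-q'_i)$ (precisely the lower-hemicontinuity ingredient your Slater-point remark supplies), and you read off the greedy structure from KKT sign patterns rather than from the linearized first-order condition.
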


\begin{proof}
We first establish continuity of $\mathcal L$, continuous differentiability near $q^\star$, and convergence of global maximizers as the upper bounds approach $b$.
We then use the strict ordering of the gradient coordinates to derive an explicit representation of these maximizers and establish local Lipschitz stability.

Since $S_i(q)\ge a_iq_i$, the term $q_i\log S_i(q)$ tends to zero whenever $S_i(q)\to0$ on the simplex.
This gives a continuous extension of the objective to the simplex.
Positivity of all event risk-set sums at $q^\star$ ensures continuous differentiability in a neighborhood of $q^\star$.

The point $q'$ is strictly feasible relative to the simplex because $0<q'_i<b_i$.
Let $d=\min_i(b_i-q'_i)>0$.
For sufficiently small $\Delta b$, the point $(1-t)q^\star+tq'$, where $t=\|\Delta b\|_\infty/d$, belongs to $\mathcal F(b+\Delta b)$ and converges to $q^\star$.
By compactness and continuity, any limit of global maximizers for nearby bounds is therefore a global maximizer at $b$.
Uniqueness implies that all such maximizers converge to $q^\star$.

Choose a permutation $\pi$ such that $g^\star_{\pi(1)}>\cdots>g^\star_{\pi(n)}$.
Continuity of the gradient ensures that this ordering persists at every nearby global maximizer $\widehat q$.
The first-order necessary condition on the convex set $\mathcal F(b+\Delta b)$ gives
\[
\langle\nabla_q\mathcal L(\widehat q,\theta),q-\widehat q\rangle\le0,
\qquad q\in\mathcal F(b+\Delta b).
\]
Thus $\widehat q$ maximizes its linearized objective over this set.
With strictly ordered coefficients, this linear problem has the unique solution
\[
\widehat q_{\pi(k)}
=\min\left\{
b_{\pi(k)}+\Delta b_{\pi(k)},\;
\left[1-\sum_{j<k}(b_{\pi(j)}+\Delta b_{\pi(j)})\right]_+
\right\},\qquad k\in[n].
\]
This expression is a continuous piecewise affine function of $b+\Delta b$, and hence is locally Lipschitz.
It equals $q^\star$ at $\Delta b=0$, proving~\eqref{eq:inner_solution_stability}.
\end{proof}

\bigskip
We next express the multipliers in terms of the loss gradient.
\bigskip
\begin{theorem}[Outer supergradient characterization]
\label{thm:outer_supergradient}

Fix $\theta \in \Theta$ and $q' \in \mathcal{S}_{n-1}$. 
Let $\Phi_\alpha(q';\theta)$ be the inner value function
defined in \eqref{eq:inner_maximization_current}, where
$\mathcal L(\cdot,\theta) : \mathcal{S}_{n-1} \to \mathbb{R}$ is continuous.

Assume that the inner maximization problem
\[
\max_q \mathcal{L}(q,\theta)
\quad
\text{s.t.}\quad
\sum_{i=1}^n q_i=1,\;
q_i\ge0,\;
q_i\le \tfrac{1}{\alpha}q'_i
\]
admits an optimal solution $q^\star$, and that a constraint qualification holds so that the KKT conditions are satisfied with multipliers
\[
\nu^\star \in \mathbb{R}, \quad 
\lambda^\star \in \mathbb{R}^n_+, \quad 
\mu^\star \in \mathbb{R}^n_+,
\]
where $\nu^\star$, $\lambda^\star$, and $\mu^\star$ are the Lagrange multipliers associated with the constraints
$\sum_{i=1}^n q_i = 1$, $q_i \ge 0$, and $q_i \le \tfrac{1}{\alpha} q'_i$, respectively.
Assume also that $\mathcal L(\cdot,\theta)$ is continuously differentiable on a neighborhood of $q^\star$.

Let $g^\star := \nabla_q \mathcal L(q^\star, \theta)$. 
Then the following statements hold:

\medskip
\noindent
\textbf{(i) Stationarity and complementary slackness}
\[
g^\star - \nu^\star \mathbf{1} + \lambda^\star - \mu^\star = 0,
\]
\[
\lambda^\star_i q^\star_i = 0,
\qquad
\mu^\star_i \Big(q^\star_i - \tfrac{1}{\alpha} q'_i\Big) = 0,
\qquad
\forall i.
\]

\medskip
\noindent
\textbf{(ii) Interior characterization}
For any $i$ such that $0 < q_i^\star < \tfrac{1}{\alpha} q'_i$,
\[
\nu^\star = g_i^\star.
\]

\medskip
\noindent
\textbf{(iii) Multiplier characterization}
For each nondegenerate coordinate with $q'_i>0$,
\[
\mu^\star_i =
\begin{cases}
(g^\star_i - \nu^\star)_+, & \text{if } q^\star_i = \tfrac{1}{\alpha} q'_i,\\
0, & \text{if } q^\star_i < \tfrac{1}{\alpha} q'_i.
\end{cases}
\]

\medskip
\noindent
\textbf{(iv) Supergradient}
If, in addition, $0<\alpha<1$, $q'_i>0$ for all $i$, and the local stability condition~\eqref{eq:inner_solution_stability} holds at $q^\star=q^\star(b)$ with $b=q'/\alpha$, then
\[
\frac{1}{\alpha}\mu^\star \in \partial^+ \Phi_\alpha(q';\theta).
\]
\end{theorem}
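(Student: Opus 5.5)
The plan is to derive (i)--(iii) directly from the Lagrangian of the inner problem, and to obtain (iv) from the restated Proposition~\ref{prop:outer_supergrad_main}.

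For (i), write the inner problem as maximizing $\mathcal L(q,\theta)$ subject to the affine constraints $\mathbf 1^\top q=1$, $-q\le 0$, and $q-\tfrac1\alpha q'\le 0$. Since $\mathcal L(\cdot,\theta)$ is $C^1$ near $q^\star$ and a constraint qualification is assumed, the KKT conditions hold at $q^\star$ for the Lagrangian
\[
\mathcal L(q,\theta)-\nu(\mathbf 1^\top q-1)+\lambda^\top q-\mu^\top(q-\tfrac1\alpha q').
\]
The constraints are affine, so linearity is in fact a valid constraint qualification and the assumption is automatic. Differentiating in $q$ at $q^\star$ gives $g^\star-\nu^\star\mathbf 1+\lambda^\star-\mu^\star=0$, and complementary slackness supplies the two product identities. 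The main care needed here is the sign convention for maximization: multipliers of $\le$ constraints enter with a minus sign, so that $\lambda^\star,\mu^\star\ge0$.

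For (ii), take an interior coordinate with $0<q_i^\star<\tfrac1\alpha q_i'$. Both slackness relations then force $\lambda_i^\star=\mu_i^\star=0$, and stationarity reduces to $g_i^\star=\nu^\star$.

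For (iii), first suppose $q_i^\star<\tfrac1\alpha q_i'$. Then $\mu_i^\star=0$ directly from slackness. Now suppose the cap binds, $q_i^\star=\tfrac1\alpha q_i'$. Since $q_i'>0$, this gives $q_i^\star>0$, hence $\lambda_i^\star=0$ and $\mu_i^\star=g_i^\star-\nu^\star$. Because $\mu_i^\star\ge 0$, this equals $(g_i^\star-\nu^\star)_+$.

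The assumption $q_i'>0$ ("nondegenerate") is exactly what rules out the case where the lower and upper bounds coincide. In that case $\lambda_i$ and $\mu_i$ would not be separately identified, and this is the only subtle point in (iii).

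For (iv), the hypotheses $0<\alpha<1$, $q'_i>0$ for all $i$, continuity of $\mathcal L$ on the simplex, $C^1$ near $q^\star$, and local stability~\eqref{eq:inner_solution_stability} are precisely those of the restated Proposition~\ref{prop:outer_supergrad_main}. Invoking it with the multiplier $\mu^\star$ from (i) gives $\mu^\star/\alpha\in\partial^+\Phi_\alpha(q';\theta)$.

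If one wanted to be self-contained, the argument would proceed as follows. Use stationarity and $\mathbf 1^\top\Delta q=0$ to write $\langle g^\star,\Delta q\rangle=\langle\mu^\star-\lambda^\star,\Delta q\rangle$. Then apply slackness: $\lambda_i^\star>0$ forces $q_i^\star=0$, hence $\Delta q_i\ge0$; and $\mu_i^\star>0$ forces the cap to bind, hence $\Delta q_i\le\Delta b_i$. Together these bound the right-hand side by $\langle\mu^\star,\Delta b\rangle$. Combining this with a Taylor expansion, whose remainder is controlled by the bound $\|\Delta q\|\le C\|\Delta b\|$, gives the supergradient inequality for $V$. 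The chain rule through $b=q'/\alpha$ then introduces the factor $1/\alpha$.

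I expect no real obstacle. The delicate step is only bookkeeping: checking that the remainder is $o(\|\Delta b\|)$, which requires $C^1$ differentiability on a neighborhood containing the nearby maximizers, and this is guaranteed by the Lipschitz stability.
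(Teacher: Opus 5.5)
Your proposal is correct and follows essentially the same route as the paper: (ii) and (iii) are read off coordinatewise from complementary slackness and stationarity, with $q_i'>0$ excluding the degenerate case $q_i^\star=b_i=0$, and (iv) comes from invoking Proposition~\ref{prop:outer_supergrad_main}, whose proof is exactly the KKT-plus-Taylor argument you sketch. Your remark that the constraints are affine, so the constraint qualification holds automatically, is a correct minor sharpening of a hypothesis the paper simply assumes.
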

\bigskip
\begin{proof}
We first use stationarity and complementary slackness to identify the upper-bound multipliers.
Applying Proposition~\ref{prop:outer_supergrad_main} under the additional assumptions in part (iv) then gives the supergradient characterization.

	Set
	\[
	b := \frac{1}{\alpha} q',
	\qquad
	\mathcal F(b)
	:=
	\left\{
	q \in \mathbb{R}^n :
	\sum_{i=1}^n q_i = 1,\ q \ge 0,\ q \le b
	\right\}.
	\]
	By the definition of $\Phi_\alpha$,
	\[
	\Phi_\alpha(q';\theta)
	=
	\max_{q \in \mathcal F(b)} \mathcal L(q,\theta).
	\]

By assumption, an optimal solution exists and a constraint qualification holds, and hence the KKT conditions are satisfied.

	\paragraph{Step 1: KKT conditions.}
	Let $q^\star$ be an optimal solution of the inner problem, and let
	\[
	\nu^\star \in \mathbb{R},
	\qquad
	\lambda^\star \in \mathbb{R}^n_+,
	\qquad
	\mu^\star \in \mathbb{R}^n_+
	\]
	be the corresponding Lagrange multipliers associated with the constraints
	$\sum_{i=1}^n q_i = 1$, $q_i \ge 0$, and $q_i \le b_i$, respectively.
	Define
	\[
	g^\star := \nabla_q \mathcal L(q^\star,\theta).
	\]
	Then the stationarity condition is
	\[
	g^\star - \nu^\star \mathbf{1} + \lambda^\star - \mu^\star = 0,
	\]
	where $\mathbf{1} \in \mathbb{R}^n$ denotes the vector of all ones.
	The complementary slackness conditions are
	\[
	\lambda_i^\star q_i^\star = 0,
	\qquad
	\mu_i^\star (q_i^\star - b_i) = 0,
	\qquad
	\forall i \in [n].
	\]

\paragraph{Step 2: Identification of the multipliers for the upper-bound constraints.}
Consider an index $i$ such that $0 < q_i^\star < b_i$.
Then both the lower-bound and upper-bound constraints are inactive, so
\[
\lambda_i^\star = 0,
\qquad
\mu_i^\star = 0.
\]
Hence, by the stationarity condition,
\[
g_i^\star = \nu^\star.
\]

If $q_i^\star < b_i$, then complementary slackness gives
\[
\mu_i^\star = 0.
\]
If $0 < q_i^\star = b_i$, then the lower-bound constraint is inactive, so
\[
\lambda_i^\star = 0.
\]
The stationarity condition then gives
\[
\mu_i^\star = g_i^\star - \nu^\star.
\]
Since $\mu_i^\star \ge 0$, we obtain
\[
\mu_i^\star =
\begin{cases}
(g_i^\star - \nu^\star)_+, & \text{if } 0 < q_i^\star = b_i,\\
0, & \text{if } q^\star_i < b_i.
\end{cases}
\]
The degenerate case $q_i^\star=b_i=0$ is not included in this characterization.
In this case, the lower-bound and upper-bound constraints are both active, and the multipliers $\lambda_i^\star$ and $\mu_i^\star$ are not separately identifiable from the stationarity condition alone.

\paragraph{Step 3: Supergradient characterization.}
Under the additional assumptions in part (iv), Proposition~\ref{prop:outer_supergrad_main} gives
\[
\frac{1}{\alpha}\mu^\star\in\partial^+\Phi_\alpha(q';\theta),
\]
which completes the proof.
\end{proof}

\bigskip
\paragraph{Supergradient estimation.}
At an exact inner maximizer, Theorem~\ref{thm:outer_supergradient}(ii) gives $g_i^\star=\nu^\star$ on interior coordinates.
For an approximate maximizer $\hat q$, let $\hat g=\nabla_q\mathcal L(\hat q,\theta)$ and estimate this common value by
\[
\nu := \frac{1}{|\mathcal I|}\sum_{i\in\mathcal I} \hat g_i,
\qquad
\mathcal I
=
\left\{
i \in [n] :
0 < \hat q_i < \tfrac{1}{\alpha} q'_i
\right\}.
\]
Substituting $\hat q$, $\hat g$, and $\nu$ into the multiplier formula gives the supergradient estimate used in the $q'$-update.
The exact characterization requires the assumptions of part (iv) at an inner optimum.
If $\mathcal I$ is empty, $q'$ is left unchanged.
We use small positive lower bounds on sample weights and weighted risk-set sums for numerical stability.

Theorem~\ref{thm:local_convergence} below establishes local convergence of the outer variables $(q',\theta)$ under exact inner maximization and the stated assumptions.
For computational efficiency, we approximate the inner maximizer using a finite number of gradient steps and estimate the KKT multipliers from the resulting iterate for the outer update.

\subsubsection{Computational complexity}
\label{appendix:computational_complexity}
Let $n$ be the sample size, $d$ the covariate dimension, $T$ the number of inner weight updates, and $K$ the number of outer iterations in Algorithm~\ref{alg:alternating_nested_dro}.
For the linear Cox model, evaluating the risk scores and their parameter derivatives requires $O(nd)$ time.
Weighted risk-set sums are computed cumulatively after sorting observations by time.
The two capped-simplex projections can be computed exactly in $O(n\log n)$ time by sorting; our implementation finds the multiplier $\lambda$ by bisection, which costs $O(n)$ per bisection step.
Consequently, the total computational complexity is
\[
O\!\left(KTn(d+\log n)\right).
\]
For fixed $K$, $T$, and $d$, this cost is nearly linear in the sample size.
The additional memory required for the sample weights and cumulative risk-set quantities is $O(n)$, beyond storage for the data and model parameters.
Empirical model-fitting times are reported in Appendix~\ref{appendix:runtime}.

\subsubsection{Local convergence analysis}
\label{appendix:convergence}
We analyze local convergence of the outer variables $q'$ and $\theta$, assuming that before each $q'$-update, we obtain the exact global inner maximizer $q^\star$ given the current $q'$ and $\theta$.
Under the assumptions below, we show that the outer iterates of the specified alternating scheme with KL-projected $q'$-updates and gradient updates of $\theta$ converge R-linearly in the Euclidean norm to a strict local minimizer of the resulting outer value function, for a sufficiently small common step size and sufficiently close feasible initialization.

Write $x=(q',\theta)$, $\mathcal C=\mathcal Q_{1-\epsilon}(\bar p)$, and $F(x)=\Phi_\alpha(q';\theta)$.
Let $q^\star(x)$ denote the global inner maximizer and $\mu^\star(x)$ its upper-bound KKT multipliers.
With $D_{\mathrm{KL}}(u\|v)=\sum_i u_i\log(u_i/v_i)$, the outer weight update takes the entropic mirror-descent form of \citet{beck2003mirror}:
\[
q'^{+}=\arg\min_{u\in\mathcal C}
\left\{\left\langle\frac{\mu^\star(x)}{\alpha},u-q'\right\rangle
+\frac{1}{\eta}D_{\mathrm{KL}}(u\|q')\right\}.
\]
This is equivalent to the exponentiated update~\eqref{eq:q'_update} with KL projection.
We retain the update order of Algorithm~\ref{alg:alt_nested_dro}: $T$ weight steps with $\theta$ fixed, followed by a parameter step using the same inner solution as the last weight step.
For the analysis, set $\eta_{q'}=\eta_\theta=\eta$ and fix $T\ge1$.

\bigskip
\begin{theorem}[Local convergence]
\label{thm:local_convergence}
Let $0<\alpha<1$, $0<\epsilon<1$, and $x^\star=(q'^{\star},\theta^\star)$ be feasible, with $q'^{\star}>0$ and $\theta^\star$ interior to the parameter space.
Assume:
\begin{enumerate}[label=(\roman*),leftmargin=*,nosep]
\item \textbf{Smooth inner solution.}
$\mathcal L$ is twice continuously differentiable near $(q^\star(x^\star),\theta^\star)$.
For every $x$ near $x^\star$, the global inner maximizer $q^\star(x)$ is unique, at least one coordinate satisfies $0<q_i^\star(x)<q'_i/\alpha$, and both $q^\star(x)$ and $\mu^\star(x)$ are continuously differentiable in $x$.
\item \textbf{Outer stationarity.}
The outer KKT conditions hold at $x^\star$, and every active upper bound of $q'$ has a strictly positive outer KKT multiplier.
\item \textbf{Positive local curvature.}
At $x^\star$, the Hessian of $F$ is positive definite along directions that preserve the simplex constraint and keep the active bound coordinates of $q'$ fixed, with $\theta$ free to vary.
\end{enumerate}
Then $(q'^{\star},\theta^\star)$ is a strict local minimizer of the outer value function $F$.
For a sufficiently small fixed $\eta>0$ and feasible initialization sufficiently close to $(q'^{\star},\theta^\star)$, there exist constants $C>0$ and $\rho\in(0,1)$ such that the outer iterates of the specified KL-projected alternating scheme satisfy
\[
\left\|
\begin{pmatrix}
q'^{(k)}-q'^{\star}\\
\theta^{(k)}-\theta^\star
\end{pmatrix}
\right\|_2
\le C\rho^k,
\qquad k=0,1,2,\ldots.
\]
\end{theorem}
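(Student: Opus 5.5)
The plan is to reduce the analysis to a smooth, locally linearly constrained problem and then invoke standard local theory for mirror descent with a common step size. By assumption (i), $q^\star(x)$ and $\mu^\star(x)$ are $C^1$ near $x^\star$. This allows an envelope-type computation showing that $F$ is continuously differentiable near $x^\star$, with
\[
\nabla_{q'}F(x)=\mu^\star(x)/\alpha,\qquad
\nabla_\theta F(x)=\nabla_\theta\mathcal L(q^\star(x),\theta).
\]
The argument follows the proof of Proposition~\ref{prop:outer_supergrad_main}. There, the one-sided bound $\langle g^\star,\Delta q\rangle\le\langle\mu^\star,\Delta b\rangle$ becomes an equality to first order once $q^\star(\cdot)$ is differentiable. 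Uniqueness and the interior index from (i) keep the active sets locally constant, so complementary slackness is preserved along perturbations. For the $\theta$-part I would use the chain rule, in which the $q$-derivative contribution $\langle g^\star,\partial_\theta q^\star\rangle$ vanishes because $\partial_\theta q^\star$ preserves the sum and active-bound structure. Hence $F$ is $C^1$, and in fact $C^2$ near $x^\star$ using (i) and twice differentiability of $\mathcal L$. The algorithm's updates are then exactly a mirror step on $q'$ with gradient $\nabla_{q'}F$ and a Euclidean gradient step on $\theta$.

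Next I would establish strict local minimality from (ii) and (iii) via second-order sufficient conditions on the polyhedron $\mathcal C\times\Theta$. Assumption (ii) gives strict complementarity for the active upper bounds $q'_i=\bar p_i/(1-\epsilon)$. Since $q'^\star>0$, no lower bounds are active. The critical cone is therefore the subspace of directions preserving $\sum_i q'_i=1$ and fixing the active coordinates, with $\theta$ free. Assumption (iii) gives positive definiteness there, hence a strict local minimizer with quadratic growth.

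For convergence, I would first show that the active upper bounds are identified. Near $x^\star$, the strictly positive outer multipliers imply that the KL-projected mirror step keeps the active coordinates at their caps. For small $\eta$ this holds in a neighborhood, by a standard finite identification argument using continuity of $\nabla F$. On the resulting affine manifold, the iteration map $G(x)=(q'^+,\theta^+)$ is $C^1$, since the KL projection onto the face is a smooth normalization when all coordinates are positive, and $x^\star$ is a fixed point. Its Jacobian at $x^\star$ is $I-\eta M H$, where $H$ is the reduced Hessian of $F$ and $M=\mathrm{diag}(\mathrm{diag}(q'^\star)-\text{rank-one correction},\,I)$ is the inverse of the mirror metric restricted to the tangent space. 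The factor $M$ is positive definite on the tangent subspace.

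I expect the main obstacle to be verifying this linearization, in particular handling the $T$ weight steps followed by a $\theta$-step that reuses the last inner solution. I would treat the composite map as $G=G_\theta\circ G_{q'}^{T}$ and linearize each factor to first order in $\eta$. This gives the Jacobian $I-\eta\,\widetilde M H+O(\eta^2)$ with $\widetilde M=\mathrm{diag}(TM_{q'},I)$. Since $\widetilde M H$ is similar to $\widetilde M^{1/2}H\widetilde M^{1/2}\succ0$, its eigenvalues are real and positive. Choosing $\eta$ small therefore makes the spectral radius $\rho<1$. By Ostrowski's theorem, together with equivalence of norms on the subspace, we obtain the bound $\|x^{(k)}-x^\star\|_2\le C\rho^k$ for initializations sufficiently close to $x^\star$.
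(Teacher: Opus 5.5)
Your proposal is correct and follows the paper's proof step for step: the envelope formula $\nabla F=(\mu^\star/\alpha,\nabla_\theta\mathcal L(q^\star(x),\theta))$ giving $F\in C^2$, second-order sufficiency for strict local minimality, identification of the active caps of $q'$ through the strictly positive outer multipliers, and linearization of the cycle map in reduced coordinates with the KL metric $U^\top\operatorname{diag}(1/q'^\star)U$.

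The only genuine difference is the last step. The paper keeps the exact cycle derivative $JR^{T-1}$, with $R=I-\eta DH$ and $J=I-\eta M^{-1}H$, and bounds it in the $H$-norm. Since $H^{1/2}DH^{1/2}\succeq0$ and $H^{1/2}M^{-1}H^{1/2}\succ0$, this gives $\|R\|_H\le1$ and $\|J\|_H<1$ directly. The result is a true norm contraction, with a step-size condition independent of $T$ and no eigenvalue perturbation.

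You instead expand to first order, $I-\eta\widetilde MH+O(\eta^2)$ with $\widetilde M=\mathrm{diag}(TM_{q'},I)$, which is $M^{-1}+(T-1)D$ in the paper's notation, and then argue via the spectrum. This works, but the gap $\eta\lambda_{\min}(\widetilde MH)$ itself vanishes as $\eta\to0$. You therefore need a Bauer--Fike-type estimate to show that the $O(\eta^2)$ term cannot push eigenvalues outside the unit disk. That estimate is available exactly because $\widetilde MH$ is similar to a symmetric positive definite matrix, hence diagonalizable, and the resulting threshold on $\eta$ depends on $T$. Ostrowski's theorem then gives the $C\rho^k$ bound, just as the $H$-norm contraction does in the paper.

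Two minor corrections:
\begin{enumerate}
\item The specified scheme is not $G_\theta\circ G_{q'}^{T}$. The $\theta$-step uses the inner solution at $(q'^{(k,T-1)},\theta^{(k)})$, so the last weight step and the parameter step are simultaneous; this is why the paper obtains $JR^{T-1}$. The two compositions agree to first order in $\eta$, so your $\widetilde M$ and your conclusion are unaffected.
\item Assumption (i) does not make the inner active sets locally constant, but you do not need that. Differentiability of $q^\star(\cdot)$ together with $0\le q^\star_i\le b_i$ forces $\mathrm dq^\star_i=0$ wherever $\lambda^\star_i>0$ and $\mathrm dq^\star_i=\mathrm db_i$ wherever $\mu^\star_i>0$. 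That is all the envelope computation uses.
\end{enumerate}
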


\begin{proof}
We first show that $F$ is smooth and $x^\star$ is a strict local minimizer.
Next, we show that the KL update fixes the weight coordinates whose upper bounds are active at $x^\star$.
With these coordinates fixed, we prove that a full update cycle contracts the error in a weighted norm, yielding the stated Euclidean bound.

\paragraph{Step 1: Smoothness and local minimality.}
By assumption (i), the inner optimizer varies smoothly with $q'$ and $\theta$.
Differentiating the optimized loss and applying the inner KKT conditions gives, for perturbations along the simplex,
\[
\mathrm dF
=\left\langle\frac{\mu^\star(x)}{\alpha},\mathrm dq'\right\rangle
+\left\langle\nabla_\theta\mathcal L(q^\star(x),\theta),\mathrm d\theta\right\rangle.
\]
The derivatives in this expression are continuously differentiable by assumption (i), so $F$ is twice continuously differentiable locally.
Outer stationarity and strict complementarity in (ii), together with the positive curvature in (iii), give the second-order sufficient conditions for $x^\star$ to be a strict local minimizer.

\paragraph{Step 2: Identification of active upper bounds.}
We next show that nearby KL updates place the same coordinates of $q'$ at their upper bounds as at $x^\star$.
We adapt the active-constraint argument of \citet{bertsekas1976goldstein} to the KL update.
At $x^\star$, its subproblem has a positive definite objective Hessian, independent active constraint gradients, and strictly positive active multipliers by (ii).
Its solution therefore varies smoothly with nearby inputs while retaining the same active upper bounds.
These weight coordinates can thus be held fixed when analyzing subsequent nearby updates.

\paragraph{Step 3: Contraction and the Euclidean error bound.}
With the active coordinates fixed, we analyze the remaining weight coordinates jointly with $\theta$.
Let $U$ have orthonormal columns spanning the directions that preserve $\sum_i q'_i=1$ and keep $q'_i$ fixed whenever $q_i'^{\star}=\bar p_i/(1-\epsilon)$.
Write $q'=q'^{\star}+Uz$, so $z$ describes changes in the remaining weight coordinates.
By assumption (iii), the Hessian $H$ of $F(q'^{\star}+Uz,\theta)$ at $(0,\theta^\star)$ is positive definite.
Set
\[
M_q=U^\top\operatorname{diag}(1/q'^{\star})U,
\qquad M=\operatorname{diag}(M_q,I),
\qquad D=\operatorname{diag}(M_q^{-1},0).
\]
Zero-dimensional weight blocks are omitted.
Differentiating the KL optimality equations gives $R=I-\eta DH$ for a weight-only step and $J=I-\eta M^{-1}H$ for the final weight and parameter step.
The derivative of a full cycle is therefore $JR^{T-1}$.

For sufficiently small $\eta>0$, $H\succ0$ and $0\preceq D\preceq M^{-1}$ imply $\|J\|_H<1$ and $\|R\|_H\le1$, where $\|v\|_H^2=v^\top Hv$ and the corresponding matrix norm is induced by this vector norm.
By continuity of the cycle map's derivative, the full cycle is therefore a local contraction in this norm after these upper-bound coordinates have been fixed.
Iterating the contraction gives geometric decay of the error in the $H$-norm.
Norm equivalence and the orthonormality of $U$ yield the stated Euclidean bound $C\rho^k$, with all fixed factors and the finite initial segment absorbed into $C$.
Continuity of $q^\star(x)$ also gives convergence of the exact inner maximizers.
\end{proof}

\clearpage
\subsubsection{Alternating gradient algorithm}
\label{appendix:non_decomposable_algorithm}

Algorithm~\ref{alg:alternating_nested_dro} alternates updates of the adversarial weights $q$ and nominal weights $q'$, followed by a model-parameter update.

\begin{algorithm}[H]
\caption{Alternating gradient algorithm for the coupled DRO problem}
\label{alg:alternating_nested_dro}
\begin{algorithmic}[1]

\State \textbf{Input:} step sizes $\eta_q, \eta_{q'}, \eta_\theta$; update steps $T$, $K$; initial values $\theta^{(0)}, q^{(0)}, q'^{(0)}$

\For{$k = 0,1,2,\ldots, K-1$}
    \State Set $q^{(k,0)} \leftarrow q^{(k)}$, \; $q'^{(k,0)} \leftarrow q'^{(k)}$
    
    \Statex \Comment{Update $(q, q')$}
    \For{$t = 0,\dots,T-1$}
    
        \State Compute $g^{(k,t)} \leftarrow \nabla_q \mathcal{L}(q^{(k,t)}, \theta^{(k)})$
        \State Set
        \[
        \tilde q^{(k,t+1)}_i \leftarrow
        \frac{q^{(k,t)}_i \exp\left(\eta_q g^{(k,t)}_i\right)}
        {\sum_{j=1}^n q^{(k,t)}_j \exp\left(\eta_q g^{(k,t)}_j\right)}
        \]
        \State Project
        \[
        q^{(k,t+1)} \leftarrow \Pi_{\mathcal{Q}_\alpha(q'^{(k,t)})}(\tilde q^{(k,t+1)})
        \]
        
        \State Compute $g^{(k,t+1)} \leftarrow \nabla_q \mathcal{L}(q^{(k,t+1)}, \theta^{(k)})$
        \State Define
        \[
        \mathcal I^{(k,t+1)} \leftarrow
        \left\{ i \in [n] : 0 < q^{(k,t+1)}_i < \tfrac{1}{\alpha} q'^{(k,t)}_i \right\}
        \]
        
        \If{$\mathcal I^{(k,t+1)} = \emptyset$}
            \State Set $q'^{(k,t+1)} \leftarrow q'^{(k,t)}$
        \Else
            \State Compute
            \[
            \nu^{(k,t+1)} \leftarrow
            \frac{1}{|\mathcal I^{(k,t+1)}|}
            \sum_{i\in\mathcal I^{(k,t+1)}} g^{(k,t+1)}_i
            \]
            
            \State Set
            \[
            s^{(k,t+1)}_i \leftarrow
            \begin{cases}
            \frac{1}{\alpha}(g^{(k,t+1)}_i - \nu^{(k,t+1)})_+, 
            & \text{if } 0 < q^{(k,t+1)}_i = \tfrac{1}{\alpha} q'^{(k,t)}_i,\\
            0, & \text{otherwise}.
            \end{cases}
            \]
            
            \State Set
            \[
            \tilde q'^{(k,t+1)}_i \leftarrow
            \frac{q'^{(k,t)}_i \exp\left(-\eta_{q'} s^{(k,t+1)}_i\right)}
            {\sum_{j=1}^n q'^{(k,t)}_j \exp\left(-\eta_{q'} s^{(k,t+1)}_j\right)}
            \]
            \State Project
            \[
            q'^{(k,t+1)} \leftarrow \Pi_{\mathcal{Q}_{1-\epsilon}(\bar p)}(\tilde q'^{(k,t+1)})
            \]
        \EndIf
        
    \EndFor
    
    \State Set $q^{(k+1)} \leftarrow q^{(k,T)}$, \; $q'^{(k+1)} \leftarrow q'^{(k,T)}$
    
    \Statex \Comment{Update $\theta$}
    \State Update
    \[
    \theta^{(k+1)} \leftarrow
    \theta^{(k)} - \eta_\theta \nabla_\theta \mathcal{L}(q^{(k+1)}, \theta^{(k)})
    \]
    
\EndFor

\end{algorithmic}
\end{algorithm}
\clearpage

\subsection{Sensitivity analysis}\label{appendix:sensitivity_analysis}
\subsubsection{Robustness parameters}
\label{appendix:robustness_parameters}

We examine sensitivity to $\alpha$ and $\epsilon$, which control robustness to subpopulation shift and outlier contamination, respectively.
We consider $\alpha \in \{0.05, 0.2, 0.4, 1\}$ and $\epsilon \in \{0, 0.01, 0.05, 0.1\}$. 
When varying $\alpha$, we fix $\epsilon=0$; when varying $\epsilon$, we fix $\alpha=0.1$.

We use the data-generating process in Section~\ref{ssec:simulation} with adjusted mixture proportions and $N_{\text{train}}=N_{\text{test}}=500$.
The training distribution is $P_{\rm train}=0.95P_1+0.05P_2$ for subpopulation shift and $\widetilde P_{\rm train}=0.7P_1+0.2P_2+0.1\widetilde P$ for outlier contamination.
Both settings use the clean test distribution $P_{\rm test}=0.8P_1+0.2P_2$.

\begin{figure}[H]
\centering
\includegraphics[width=0.49\linewidth]{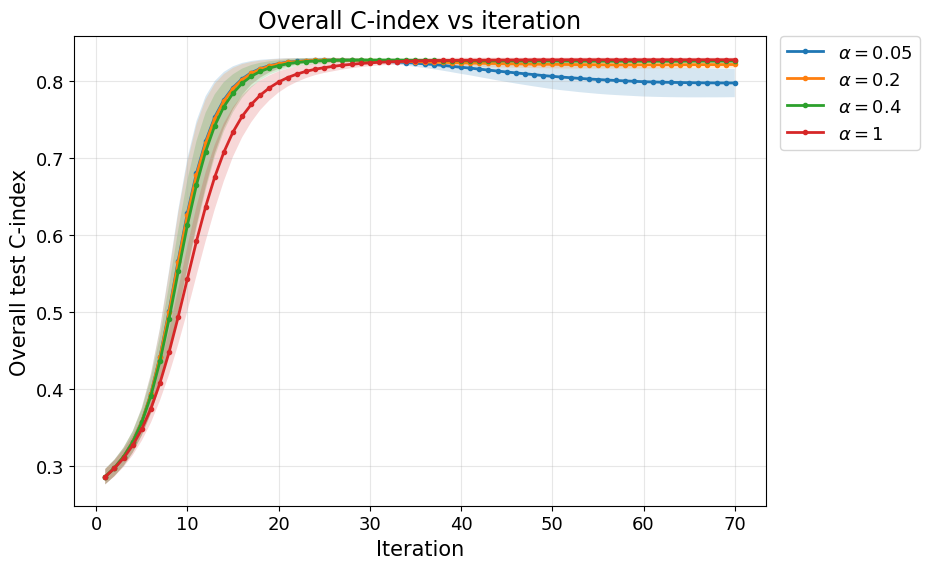}
\includegraphics[width=0.49\linewidth]{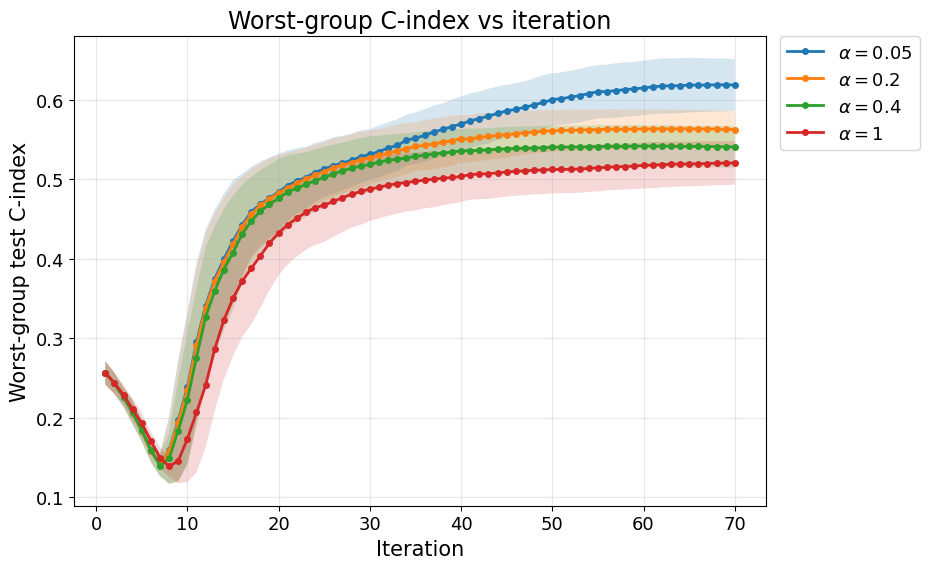}
\includegraphics[width=0.49\linewidth]{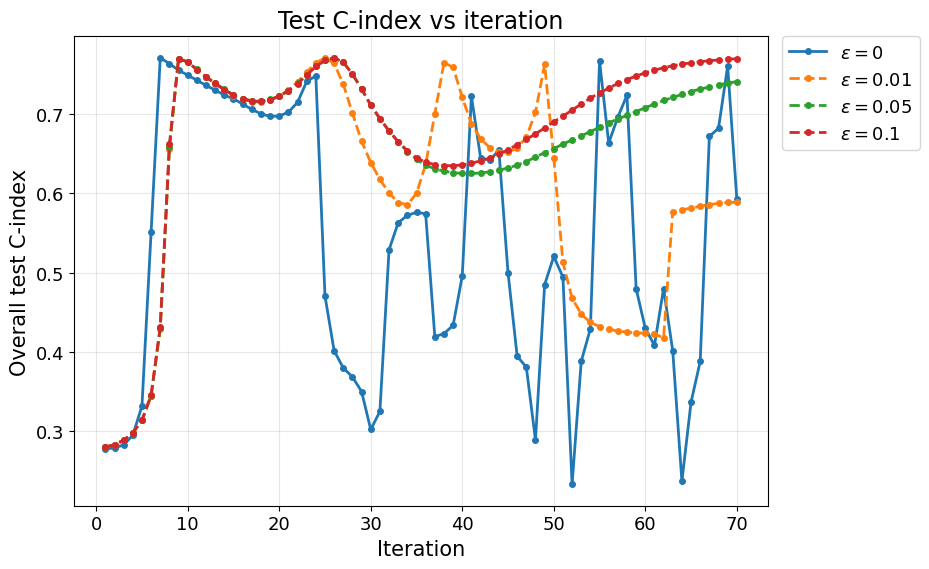}
\caption{Sensitivity to $\alpha$ and $\epsilon$. (Top-left and top-right) Overall and worst-group test C-index for varying $\alpha$ under subpopulation shift. (Bottom) Overall test C-index for varying $\epsilon$ under outlier contamination.}
\label{fig:sensitivity}
\end{figure}

In Figure~\ref{fig:sensitivity}, smaller $\alpha$ improves worst-group C-index, with a slight decrease in overall C-index at $\alpha=0.05$.
Under contamination, positive $\epsilon$ reduces training oscillations, with smoother trajectories at $\epsilon=0.05$ and $0.1$.

\subsection{Additional experiments and experimental details}
\label{appendix:experiments_details}
We examine robustness in additional simulated and real-data settings and describe the experimental procedures.
All experiments were conducted on a server equipped with Intel Xeon Gold 6230R CPUs and NVIDIA GeForce RTX 3090 GPUs.

\subsubsection{Simulation: Subpopulation shift}\label{appendix:sim_subpopulation_shift}
We isolate subpopulation shift by using uncontaminated mixtures of two latent groups.
The training distribution is
$P_{\rm train} = \pi^{\mathrm{tr}}_1 P_1 + \pi^{\mathrm{tr}}_2 P_2$,
with $(\pi^{\mathrm{tr}}_1, \pi^{\mathrm{tr}}_2) = (0.95, 0.05)$ and $N_{\text{train}} = 3{,}000$. 
The test distribution is
$P_{\rm test} = \pi^{\mathrm{te}}_1 P_1 + \pi^{\mathrm{te}}_2 P_2$,
with $(\pi^{\mathrm{te}}_1, \pi^{\mathrm{te}}_2) = (0.8, 0.2)$ and $N_{\text{test}} = 5{,}000$. 
The remaining data-generation settings follow Section~\ref{ssec:simulation}.
\begin{figure}[H]
\centering
\includegraphics[width=0.9\linewidth]{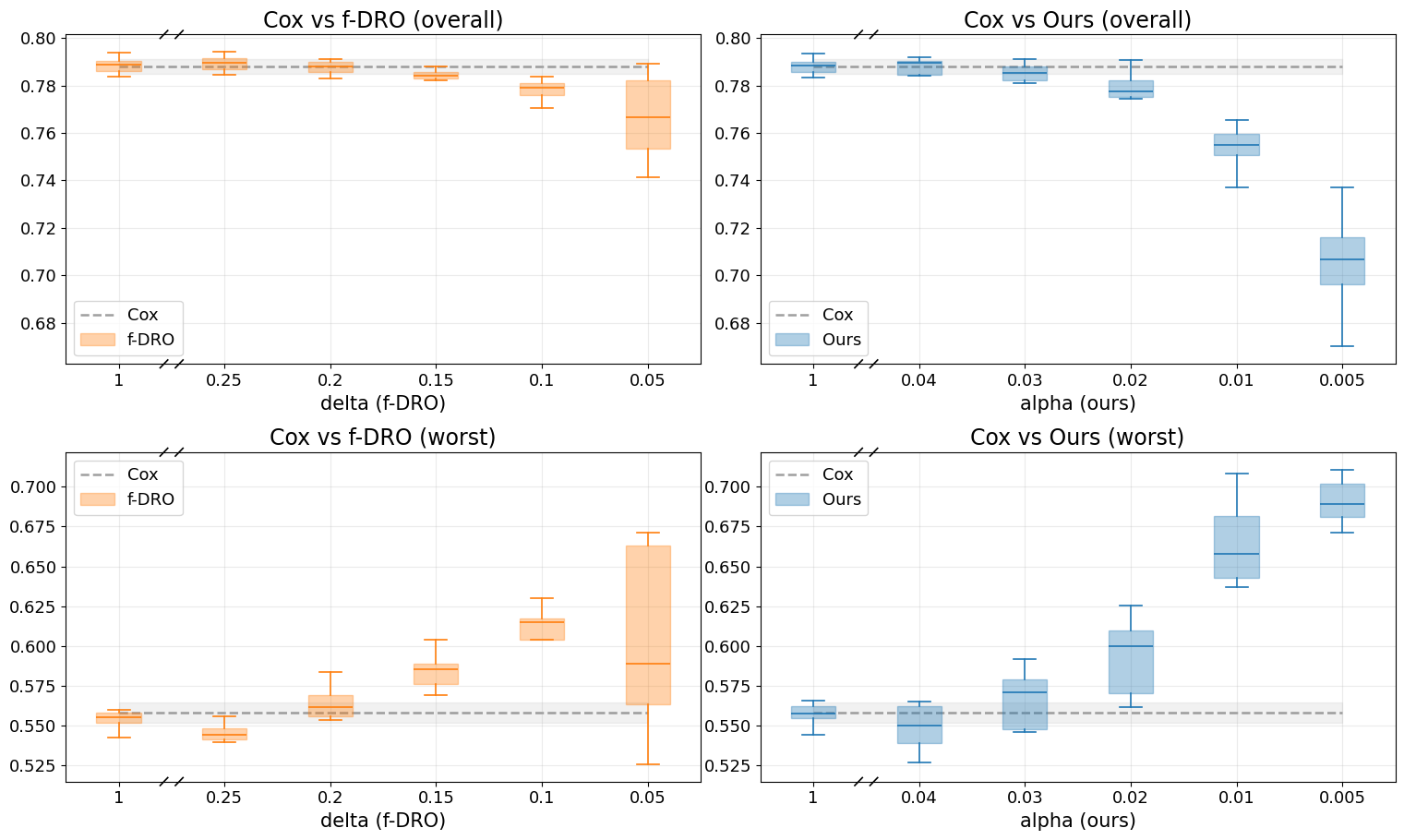}
\caption{Overall and worst-group C-index under subpopulation shift for the Cox model, $f$-DRO (left panels), and the proposed method (right panels), across robustness parameters. Box plots summarize 20 independent runs.}
\label{fig:subpopulation_shift_box_plot}
\end{figure}

Figure~\ref{fig:subpopulation_shift_box_plot} shows that both robust methods can improve worst-group C-index at a cost in overall C-index, depending on the robustness parameters.

\subsubsection{Simulation: Nonlinear risk functions}
\label{appendix:nonlinear_simulation}
We extend the simulation in Section~\ref{ssec:simulation} to nonlinear group-specific risk functions with linear, quadratic, and interaction terms.
We index the majority, minority, and outlier components by $g=0,1,2$, respectively, and define
\[
\begin{aligned}
r_0(x) &= x_1+x_2+0.15x_1^2+0.15x_2^2+0.15x_1x_2,\\
r_1(x) &= x_1-x_2+0.15x_1^2-0.15x_2^2-0.15x_1x_2,\\
r_2(x) &= -3x_1-3x_2-0.45x_1^2-0.45x_2^2-0.45x_1x_2.
\end{aligned}
\]
Survival times follow the group-specific hazard $h_g(t\mid x)=h_0\exp(r_g(x))$.
The training mixture proportions are $(0.90,0.08,0.02)$, and the clean test proportions are $(0.80,0.20,0)$.
We use a single-hidden-layer neural Cox model with width 5; the training and test sample sizes, covariate generation, and censoring setup are otherwise the same as in Section~\ref{ssec:simulation}.

\paragraph{Training trajectories.}
Figure~\ref{fig:nonlinear_cindex_trajectory_summary} summarizes the test C-index trajectories, including the substantial variation of $f$-DRO and the proposed method with $\epsilon=0$.
With $\epsilon=0.05$, the proposed method attains higher late-stage worst-group C-index than Cox-NN and $f$-DRO, with lower overall C-index than Cox-NN.
The corresponding IBS trajectories are provided in Appendix~\ref{appendix:ibs_trajectories}.

\begin{figure}[H]
\centering
\includegraphics[width=\textwidth]{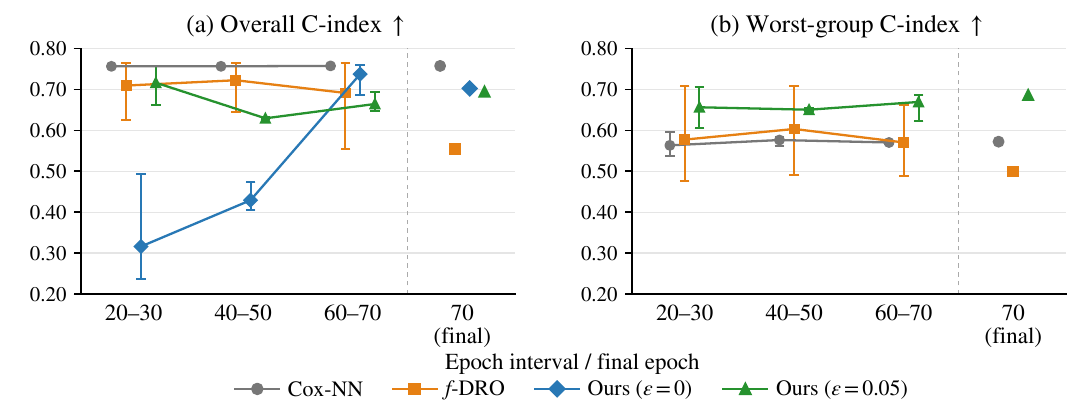}
\caption{Test C-index under nonlinear risks and outlier contamination. Points show interval means, whiskers show minima and maxima over epochs, and lines connect means. Epoch 70 is shown separately.}
\label{fig:nonlinear_cindex_trajectory_summary}
\end{figure}

\paragraph{Repeated-run performance.}
Table~\ref{tab:nonlinear_repeated_results} reports a separate experiment averaged over five independent repetitions.
The proposed method with $\epsilon=0.05$ outperforms $f$-DRO on all four metrics and improves worst-group C-index and IBS relative to Cox-NN, at a cost in overall performance.

\begin{table}[H]
\caption{Test performance under nonlinear risks and outlier contamination, averaged over five independent repetitions. Bold and underlined entries denote the best and second-best results, respectively.}
\label{tab:nonlinear_repeated_results}
\centering
\small
\renewcommand{\arraystretch}{0.95}
\begin{tabularx}{0.85\textwidth}{
  >{\centering\arraybackslash}l
  >{\centering\arraybackslash}X
  >{\centering\arraybackslash}X
  >{\centering\arraybackslash}X
  >{\centering\arraybackslash}X
}
\toprule
\textbf{Method}
& \textbf{Overall C-index} $\uparrow$
& \textbf{Worst-group C-index} $\uparrow$
& \textbf{Overall IBS} $\downarrow$
& \textbf{Worst-group IBS} $\downarrow$
\\
\midrule
Cox-NN
& \textbf{0.753}
& \underline{0.562}
& \textbf{0.125}
& 0.241
\\
$f$-DRO
& 0.551
& 0.522
& 0.161
& \underline{0.217}
\\
Ours ($\epsilon=0.05$)
& \underline{0.698}
& \textbf{0.686}
& \underline{0.152}
& \textbf{0.199}
\\
\bottomrule
\end{tabularx}

\end{table}

\subsubsection{Simulation: Higher dimension, smaller samples, and heavier censoring}
\label{appendix:challenging_simulation}
We evaluate the methods with $d=10$ covariates, training sample sizes $n\in\{500,1000\}$, 5,000 test samples, and 50\% censoring.
Without contamination, the training mixture proportions of the two clean groups are $(0.9,0.1)$; with contamination, the majority, minority, and outlier proportions are $(0.90,0.08,0.02)$.
The clean test proportions remain $(0.80,0.20)$ in both settings.

We define a shared risk component and a component controlling group heterogeneity as
\[
C(x)=0.6x_1+0.5x_5,
\qquad
S(x)=1.2x_2+\frac{0.7}{\sqrt{2}}(x_3+x_4)+0.35x_6.
\]
The majority and minority log-risk functions are
\[
r_0(x)=C(x)+0.7S(x),
\qquad
r_1(x)=C(x)-0.7S(x),
\]
and the outlier log-risk function is $r_2(x)=-2(C(x)+0.7S(x))$.
The covariates $x_7,\ldots,x_{10}$ are noise variables and do not affect the true risk.
The robust subpopulation parameters are fixed at $\delta=\alpha=0.1$ for $f$-DRO and the proposed method, respectively.
For the proposed method, we compare $\epsilon\in\{0,0.01,0.02,0.05\}$ under contamination.

Tables~\ref{tab:challenging_n500} and~\ref{tab:challenging_n1000} report overall and worst-group C-index, IBS, and tail loss.
Without contamination, the robust methods improve worst-group C-index and IBS relative to Cox, with lower overall C-index and higher overall IBS.
Under contamination, increasing $\epsilon$ improves overall C-index and IBS, while intermediate values achieve the best worst-group C-index and IBS among the evaluated settings.

\begin{table}[H]
\caption{Simulation results with $d=10$, $n=500$ training samples, and 50\% censoring. Values are mean (standard deviation) over 10 runs. Bold and underlined entries denote the best and second-best results within each contamination setting, respectively.}
\label{tab:challenging_n500}
\centering
\scriptsize
\renewcommand{\arraystretch}{0.95}
\begin{tabularx}{\textwidth}{
  >{\centering\arraybackslash}l
  >{\centering\arraybackslash}X
  >{\centering\arraybackslash}X
  >{\centering\arraybackslash}X
  >{\centering\arraybackslash}X
  >{\centering\arraybackslash}X
  >{\centering\arraybackslash}X
}
\toprule
\textbf{Method ($n=500$)}
& \textbf{Overall C-index} $\uparrow$
& \textbf{Worst-group C-index} $\uparrow$
& \textbf{Overall IBS} $\downarrow$
& \textbf{Worst-group IBS} $\downarrow$
& \textbf{Overall tail loss} $\downarrow$
& \textbf{Worst-group tail loss} $\downarrow$
\\
\midrule
\multicolumn{7}{c}{\textbf{Without Outliers}} \\
\midrule
Cox
& \textbf{0.787} {\scriptsize (0.007)}
& 0.458 {\scriptsize (0.025)}
& \textbf{0.150} {\scriptsize (0.004)}
& \underline{0.357} {\scriptsize (0.022)}
& 11.315 {\scriptsize (0.373)}
& 11.850 {\scriptsize (0.581)}
\\
$f$-DRO
& \underline{0.716} {\scriptsize (0.043)}
& \textbf{0.537} {\scriptsize (0.063)}
& \underline{0.194} {\scriptsize (0.016)}
& \textbf{0.230} {\scriptsize (0.015)}
& \underline{8.711} {\scriptsize (0.130)}
& \underline{8.309} {\scriptsize (0.059)}
\\
Ours ($\epsilon=0$)
& 0.708 {\scriptsize (0.038)}
& \underline{0.526} {\scriptsize (0.049)}
& 0.207 {\scriptsize (0.012)}
& \textbf{0.230} {\scriptsize (0.008)}
& \textbf{8.568} {\scriptsize (0.068)}
& \textbf{8.245} {\scriptsize (0.019)}
\\
\midrule
\multicolumn{7}{c}{\textbf{With Outliers}} \\
\midrule
Cox
& \textbf{0.787} {\scriptsize (0.005)}
& 0.456 {\scriptsize (0.022)}
& \textbf{0.150} {\scriptsize (0.003)}
& 0.351 {\scriptsize (0.012)}
& 10.957 {\scriptsize (0.164)}
& 11.298 {\scriptsize (0.286)}
\\
$f$-DRO
& 0.628 {\scriptsize (0.044)}
& 0.469 {\scriptsize (0.088)}
& 0.212 {\scriptsize (0.012)}
& 0.246 {\scriptsize (0.016)}
& 9.030 {\scriptsize (0.148)}
& 8.694 {\scriptsize (0.244)}
\\
Ours ($\epsilon=0$)
& 0.599 {\scriptsize (0.045)}
& 0.469 {\scriptsize (0.104)}
& 0.224 {\scriptsize (0.002)}
& \underline{0.242} {\scriptsize (0.008)}
& \underline{8.844} {\scriptsize (0.154)}
& 8.397 {\scriptsize (0.025)}
\\
Ours ($\epsilon=0.01$)
& 0.732 {\scriptsize (0.019)}
& \underline{0.475} {\scriptsize (0.051)}
& 0.189 {\scriptsize (0.005)}
& 0.248 {\scriptsize (0.009)}
& 8.858 {\scriptsize (0.121)}
& \textbf{8.340} {\scriptsize (0.059)}
\\
Ours ($\epsilon=0.02$)
& 0.737 {\scriptsize (0.038)}
& \textbf{0.522} {\scriptsize (0.061)}
& 0.182 {\scriptsize (0.014)}
& \textbf{0.237} {\scriptsize (0.024)}
& \textbf{8.819} {\scriptsize (0.096)}
& \underline{8.359} {\scriptsize (0.078)}
\\
Ours ($\epsilon=0.05$)
& \underline{0.764} {\scriptsize (0.015)}
& 0.458 {\scriptsize (0.050)}
& \underline{0.169} {\scriptsize (0.010)}
& 0.287 {\scriptsize (0.026)}
& 9.275 {\scriptsize (0.248)}
& 8.574 {\scriptsize (0.153)}
\\
\bottomrule
\end{tabularx}

\end{table}

\begin{table}[H]
\caption{Simulation results with $d=10$, $n=1000$ training samples, and 50\% censoring. Values are mean (standard deviation) over 10 runs. Bold and underlined entries denote the best and second-best results within each contamination setting, respectively.}
\label{tab:challenging_n1000}
\centering
\scriptsize
\renewcommand{\arraystretch}{0.95}
\begin{tabularx}{\textwidth}{
  >{\centering\arraybackslash}l
  >{\centering\arraybackslash}X
  >{\centering\arraybackslash}X
  >{\centering\arraybackslash}X
  >{\centering\arraybackslash}X
  >{\centering\arraybackslash}X
  >{\centering\arraybackslash}X
}
\toprule
\textbf{Method ($n=1000$)}
& \textbf{Overall C-index} $\uparrow$
& \textbf{Worst-group C-index} $\uparrow$
& \textbf{Overall IBS} $\downarrow$
& \textbf{Worst-group IBS} $\downarrow$
& \textbf{Overall tail loss} $\downarrow$
& \textbf{Worst-group tail loss} $\downarrow$
\\
\midrule
\multicolumn{7}{c}{\textbf{Without Outliers}} \\
\midrule
Cox
& \textbf{0.787} {\scriptsize (0.007)}
& 0.454 {\scriptsize (0.017)}
& \textbf{0.148} {\scriptsize (0.005)}
& 0.344 {\scriptsize (0.013)}
& 10.877 {\scriptsize (0.169)}
& 11.169 {\scriptsize (0.233)}
\\
$f$-DRO
& \underline{0.701} {\scriptsize (0.061)}
& \underline{0.523} {\scriptsize (0.081)}
& \underline{0.210} {\scriptsize (0.010)}
& \underline{0.228} {\scriptsize (0.011)}
& \underline{8.722} {\scriptsize (0.102)}
& \underline{8.425} {\scriptsize (0.135)}
\\
Ours ($\epsilon=0$)
& 0.693 {\scriptsize (0.047)}
& \textbf{0.525} {\scriptsize (0.089)}
& 0.217 {\scriptsize (0.005)}
& \textbf{0.227} {\scriptsize (0.013)}
& \textbf{8.535} {\scriptsize (0.075)}
& \textbf{8.264} {\scriptsize (0.025)}
\\
\midrule
\multicolumn{7}{c}{\textbf{With Outliers}} \\
\midrule
Cox
& \textbf{0.782} {\scriptsize (0.008)}
& 0.449 {\scriptsize (0.024)}
& \textbf{0.153} {\scriptsize (0.008)}
& 0.340 {\scriptsize (0.028)}
& 10.464 {\scriptsize (0.368)}
& 10.468 {\scriptsize (0.577)}
\\
$f$-DRO
& 0.557 {\scriptsize (0.091)}
& 0.422 {\scriptsize (0.077)}
& 0.224 {\scriptsize (0.019)}
& 0.246 {\scriptsize (0.017)}
& 8.827 {\scriptsize (0.168)}
& 8.545 {\scriptsize (0.227)}
\\
Ours ($\epsilon=0$)
& 0.592 {\scriptsize (0.104)}
& 0.441 {\scriptsize (0.083)}
& 0.223 {\scriptsize (0.008)}
& \underline{0.238} {\scriptsize (0.008)}
& \textbf{8.503} {\scriptsize (0.011)}
& 8.270 {\scriptsize (0.021)}
\\
Ours ($\epsilon=0.01$)
& 0.716 {\scriptsize (0.050)}
& \textbf{0.522} {\scriptsize (0.060)}
& 0.202 {\scriptsize (0.008)}
& \textbf{0.227} {\scriptsize (0.008)}
& \underline{8.593} {\scriptsize (0.013)}
& \underline{8.249} {\scriptsize (0.078)}
\\
Ours ($\epsilon=0.02$)
& 0.746 {\scriptsize (0.025)}
& 0.455 {\scriptsize (0.063)}
& 0.188 {\scriptsize (0.008)}
& 0.244 {\scriptsize (0.016)}
& 8.755 {\scriptsize (0.093)}
& \textbf{8.224} {\scriptsize (0.017)}
\\
Ours ($\epsilon=0.05$)
& \underline{0.774} {\scriptsize (0.009)}
& \underline{0.460} {\scriptsize (0.045)}
& \underline{0.166} {\scriptsize (0.006)}
& 0.271 {\scriptsize (0.020)}
& 9.135 {\scriptsize (0.126)}
& 8.317 {\scriptsize (0.119)}
\\
\bottomrule
\end{tabularx}

\end{table}

\subsubsection{Simulation: Integrated Brier score trajectories}
\label{appendix:ibs_trajectories}
Figures~\ref{fig:linear_ibs_trajectory_summary} and~\ref{fig:nonlinear_ibs_trajectory_summary} report the integrated Brier score (IBS) across training for the linear and nonlinear simulations, respectively.
The settings follow Section~\ref{ssec:simulation} and Appendix~\ref{appendix:nonlinear_simulation}.
Worst-group IBS is the maximum IBS across the two clean test groups.

At epoch 70, Cox and Cox-NN achieve the lowest overall IBS in their respective settings, while the proposed method with $\epsilon=0.05$ achieves the lowest worst-group IBS.

\begin{figure}[H]
\centering
\includegraphics[width=\textwidth]{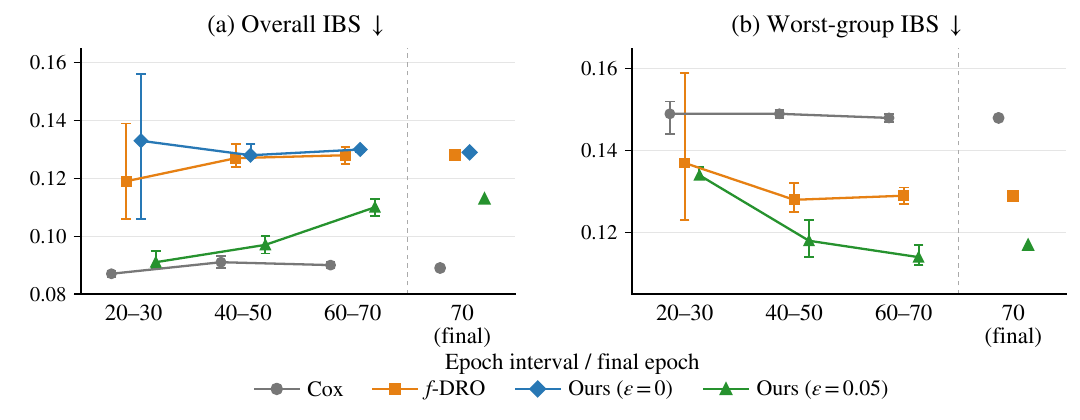}
\caption{Test IBS under linear risks and outlier contamination. Points show interval means, whiskers show minima and maxima over epochs, and lines connect means. Epoch 70 is shown separately.}
\label{fig:linear_ibs_trajectory_summary}
\end{figure}

\begin{figure}[H]
\centering
\includegraphics[width=\textwidth]{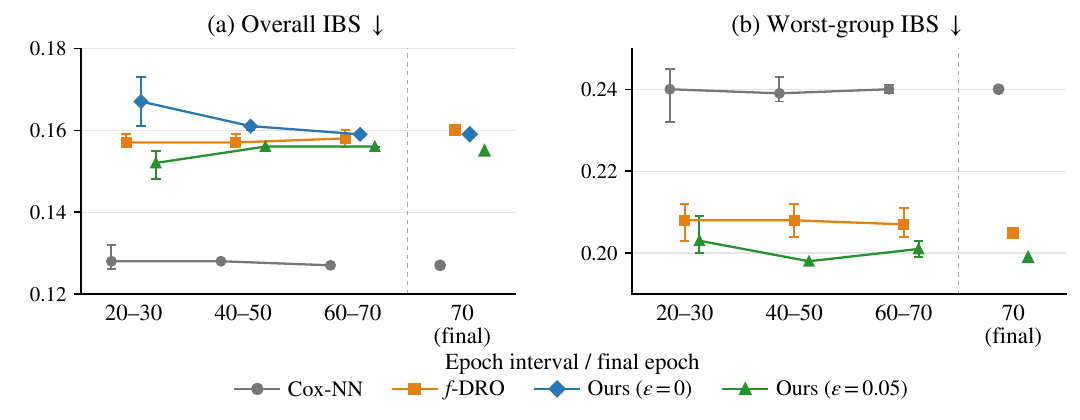}
\caption{Test IBS under nonlinear risks and outlier contamination, using the plotting conventions in Figure~\ref{fig:linear_ibs_trajectory_summary}.}
\label{fig:nonlinear_ibs_trajectory_summary}
\end{figure}

\subsubsection{Benchmark datasets and contamination setup}\label{appendix:datasets}

We describe the benchmark populations, data splits, and contamination settings used in the real-data experiments.
\paragraph{METABRIC} 
The METABRIC dataset contains breast cancer survival data with genomic, transcriptomic, and clinical information~\citep{curtis2012genomic,liu2014breast}.
We use 13 clinical covariates.
The dataset contains $1{,}793$ samples, among which $764$ observations are censored, resulting in a censoring rate of $42.61\%$.

We define four groups by estrogen receptor (ER) status and age at a threshold of $60$ years.
Group $0$ contains ER-negative patients with age $\leq 60$, Group $1$ contains ER-negative patients with age $>60$, Group $2$ contains ER-positive patients with age $\leq 60$, and Group $3$ contains ER-positive patients with age $>60$.
The group sizes are $281$, $134$, $525$, and $853$, respectively.
The corresponding censoring rates are $48.40\%$, $35.82\%$, $62.67\%$, and $29.43\%$.

\paragraph{FLC}
The FLC dataset~\citep{dispenzieri2012use} comes from a community cohort of Olmsted County, Minnesota residents aged $50$ years or older and was used to study serum free light chains and mortality.
We use $6$ covariates following \citet{hu2024fairness}.
The dataset contains $7{,}874$ samples, among which $5{,}705$ observations are censored, resulting in a censoring rate of $72.45\%$.

We define four groups by sex and age at a threshold of $60$ years.
Group $0$ consists of females with age $\leq 60$, Group $1$ consists of males with age $\leq 60$, Group $2$ consists of females with age $>60$, and Group $3$ consists of males with age $>60$.
The group sizes are $1{,}768$, $1{,}625$, $2{,}582$, and $1{,}899$, respectively.
The corresponding censoring rates are $92.82\%$, $90.09\%$, $59.80\%$, and $55.61\%$.

\paragraph{Data split}
Each dataset is split into training and test sets at an 80:20 ratio, stratified by group labels and censoring indicators.
The training set is then split into training and validation subsets at the same ratio, stratified only by censoring indicators.
Group labels are used only for the initial split and worst-group evaluation, not for training or model selection.

\paragraph{Outlier construction}
Outliers are introduced only into the training data; validation and test sets remain clean.

For linear models, we use a risk-based adversarial construction. We first train a standard Cox model on the training data and compute sample-wise risk scores
\begin{equation}
r_i = f_{\hat\theta_{\mathrm{Cox}}}(x_i).
\end{equation}
We then select the top 2\% highest-risk training samples and construct perturbed risks by sign inversion,
\begin{equation}
\widetilde{r}_i = - r_i.
\end{equation}
We generate replacement survival times from the perturbed hazard model
\begin{equation}
\widetilde{T}_i \sim \widetilde{P}(\cdot \mid x_i),
\qquad
\widetilde{h}(t \mid x_i) = h_0(t)\exp(\widetilde{r}_i).
\end{equation}
The generated times replace the selected samples' original times, creating outcomes that conflict with the fitted risk ordering.
The Cox model is fitted with an elastic-net penalty ($0.01$, mixing ratio $0.5$) on standardized covariates, so the risk scores are centered; the ranking includes censored and uncensored samples, and the event indicators of the selected samples are unchanged.
The baseline hazard is constant, $h_0(t)=\lambda_0$ with $\lambda_0=\sum_i\delta_i/\sum_i y_ie^{r_i}$, the maximum-likelihood rate given the fitted scores.

For nonlinear models, we use the same risk-based construction with the top 5\% highest-risk training samples on both datasets.

\subsubsection{Baselines}\label{appendix:baselines}
We compare methods that require no group labels for training or model selection.

\begin{itemize}[left=1pt]
    \item \textbf{Elastic-Cox}: 
    A Cox proportional hazards model with elastic-net regularization. 
    We tune the elastic-net regularization strength over 
    $\{0, 0.1, 0.2, 0.4, 0.8\}$.

    \item \textbf{Cox-NN}: 
    A neural-network Cox model with one hidden layer and hidden dimension 2. 
    We tune the Cox-NN regularization strength over 
    $\{0, 0.1, 0.2, 0.4, 0.8\}$.

    \item \textbf{$f$-DRO} \citep{hu2024fairness}: 
    A survival DRO method based on the $\chi^2$-divergence and sample splitting. 
    We tune the DRO robustness parameter over 
    $\{0.1, 0.2, 0.4, 0.8\}$.

    \item \textbf{Exact-Cox} \citep{hu2024fairness}:
    The exact DRO-Cox variant, which reparameterizes the Cox model with a piecewise-constant baseline hazard so that the full likelihood decomposes across observations, and applies the $\chi^2$-DRO objective to the resulting per-observation losses without sample splitting.
    This construction is specific to the Cox model.
    We tune the DRO robustness parameter over $\{0.1, 0.2, 0.4, 0.8\}$, initialize the baseline-hazard parameters at their maximum-likelihood values for a zero risk score, and use the same training budget and validation criterion as $f$-DRO.

    \item \textbf{Robust-Cox} \citep{minder1996robust}:
    A robust proportional hazards estimator for outlier contamination, applied to linear and nonlinear Cox models. We use the R package \texttt{coxrobust} for linear models.
    For nonlinear models, we implement an extension of the original estimator by replacing its linear predictor with the same one-hidden-layer network. Starting from a standard neural Cox fit, we optimize the truncated-weight partial likelihood over three refinement rounds.
    We tune its truncation parameter over $\{0.90,0.95,0.975,0.99\}$ and hold the remaining settings fixed.
\end{itemize}

\subsubsection{Training and hyperparameter selection}
\label{appendix:training_selection}
\paragraph{Neural-network training.}
Cox-NN, $f$-DRO, Exact-Cox, and the proposed method share a one-hidden-layer network of width 2 (ReLU hidden layer, linear output), except in the network configurations evaluated in Table~\ref{tab:network_width}, and all are trained with Adam for 100 full-batch parameter updates ($K=100$ outer iterations for the proposed method) and evaluated at the final iterate.
They differ only in the step size ($0.01$ for the baselines, $0.1$ for the proposed method) and in regularization: Cox-NN tunes an elastic-net penalty over $\{0,0.1,0.2,0.4,0.8\}$, $f$-DRO and Exact-Cox use none, and the proposed method fixes it at $0.01$.
For linear models, $f$-DRO and Exact-Cox run 100 Adam steps with step size $0.05$, whereas the proposed method runs $K=200$ gradient steps with step size $0.1$ and a fixed elastic-net penalty of $0.1$.
The nonlinear simulation uses the architecture in Appendix~\ref{appendix:nonlinear_simulation}.

\paragraph{Optimization details of the proposed method.}
Algorithm~\ref{alg:alt_nested_dro} uses $T=1$ inner weight update per outer iteration and $K=200$ outer iterations.
Step sizes are $\eta_q=0.01$, $\eta_{q'}=0.1$, and $\eta_\theta=0.1$; the $\theta$-step is plain gradient descent for the linear model and Adam for the nonlinear model.
An elastic-net penalty (strength $0.1$ linear, $0.01$ nonlinear, mixing ratio $0.5$) is added to the weighted objective.
Projections onto $\mathcal Q_{1-\epsilon}(\bar p)$ and $\mathcal Q_{\alpha}(q')$ are KL projections onto the capped simplex, $[\Pi(x)]_i=\min\{c_i,\lambda x_i\}$ with $c$ the coordinate-wise upper bounds and $\lambda$ found by bisection so that the result sums to one.

\paragraph{Validation criteria.}
Validation and test losses are Cox negative partial log-likelihoods.
Elastic-Cox and Cox-NN are selected by maximizing validation log-likelihood; Robust-Cox and the DRO methods by minimizing validation tail loss, the average of the largest 5\% or 10\% of validation losses~\citep{duchi2021learning}.
The grids for the proposed method are $\alpha\in\{0.1,0.2,0.4,0.6,0.8\}$ and $\epsilon\in\{0,0.05,0.1\}$; simulation grids are given in the corresponding subsections.

\paragraph{Practical choice of search ranges.}
Prior knowledge of the smallest meaningful subgroup and of the expected contamination proportion can guide the ranges for $\alpha$ and $\epsilon$; otherwise clustering, anomaly detection, or data-quality diagnostics provide heuristic ranges, and the final pair is selected by validation tail loss.

\paragraph{IBS evaluation.}
We compute IBS from Breslow survival curves with inverse-probability-of-censoring weights estimated by Kaplan--Meier on the training set.
Integration uses 80 equally spaced points from the smallest observed time to the 90th percentile of test times, capped at the maximum training time; test samples beyond training follow-up are excluded only from IBS evaluation.

\subsubsection{Real data analysis: Subpopulation shift}\label{appendix:real_subpopulation_shift}

Table~\ref{tab:subpop_shift} reports results under subpopulation shift without outliers.

\begin{table}[H]
\caption{Results on METABRIC and FLC under subpopulation shift without outliers. The DRO-based methods use the 10\% validation tail loss for model selection; Elastic-Cox and Cox-NN use validation log-likelihood. Values are mean (standard deviation) over 10 independent runs.}
\label{tab:subpop_shift}
\centering
\tiny
\setlength{\tabcolsep}{8.35pt}
\renewcommand{\arraystretch}{1.0}
\begin{tabular}{llccccc}
\toprule
\toprule
\multicolumn{7}{c}{\textbf{METABRIC}} \\
\midrule
\textbf{Model} & \textbf{Method}
& \textbf{IBS} $\downarrow$
& \textbf{Test log-likelihood} $\uparrow$
& \textbf{Overall C-index} $\uparrow$
& \textbf{Worst-group C-index} $\uparrow$
& \textbf{Test tail loss} $\downarrow$ \\
\midrule
\multirow{4}{*}{Linear}
& Elastic-Cox
& \textbf{0.167} (0.004)
& \textbf{-5.003} (0.037)
& 0.664 (0.014)
& 0.575 (0.047)
& 6.432 (0.162) \\
& $f$-DRO
& 0.183 (0.005)
& -5.117 (0.033)
& \underline{0.671} (0.014)
& \underline{0.583} (0.034)
& \textbf{5.829} (0.017) \\
& Exact-Cox
& \underline{0.173} (0.007)
& \underline{-5.044} (0.064)
& 0.656 (0.034)
& 0.550 (0.053)
& \underline{6.109} (0.085) \\
& Ours
& 0.186 (0.004)
& -5.128 (0.034)
& \textbf{0.683} (0.015)
& \textbf{0.607} (0.029)
& \textbf{5.829} (0.009) \\
\midrule
\multirow{4}{*}{Nonlinear}
& Cox-NN
& \textbf{0.167} (0.005)
& \textbf{-5.004} (0.043)
& \textbf{0.670} (0.015)
& \textbf{0.584} (0.027)
& 6.537 (0.103) \\
& $f$-DRO
& 0.182 (0.005)
& -5.106 (0.032)
& \underline{0.666} (0.028)
& 0.574 (0.041)
& \textbf{5.841} (0.054) \\
& Exact-Cox
& \underline{0.179} (0.006)
& -5.102 (0.040)
& 0.627 (0.034)
& 0.525 (0.083)
& 6.056 (0.053) \\
& Ours
& \underline{0.179} (0.004)
& \underline{-5.078} (0.028)
& 0.655 (0.026)
& \underline{0.575} (0.042)
& \underline{5.912} (0.034) \\
\midrule
\midrule
\multicolumn{7}{c}{\textbf{FLC}} \\
\midrule
\textbf{Model} & \textbf{Method}
& \textbf{IBS} $\downarrow$
& \textbf{Test log-likelihood} $\uparrow$
& \textbf{Overall C-index} $\uparrow$
& \textbf{Worst-group C-index} $\uparrow$
& \textbf{Test tail loss} $\downarrow$ \\
\midrule
\multirow{4}{*}{Linear}
& Elastic-Cox
& \textbf{0.093} (0.004)
& \textbf{-6.467} (0.071)
& \underline{0.797} (0.004)
& \underline{0.629} (0.039)
& 8.087 (0.262) \\
& $f$-DRO
& \underline{0.111} (0.012)
& \underline{-6.801} (0.200)
& \textbf{0.798} (0.005)
& \underline{0.629} (0.038)
& 7.400 (0.091) \\
& Exact-Cox
& 0.115 (0.007)
& -6.875 (0.112)
& 0.792 (0.007)
& 0.620 (0.038)
& \underline{7.323} (0.065) \\
& Ours
& 0.120 (0.002)
& -6.956 (0.022)
& \textbf{0.798} (0.004)
& \textbf{0.631} (0.036)
& \textbf{7.283} (0.006) \\
\midrule
\multirow{4}{*}{Nonlinear}
& Cox-NN
& \textbf{0.092} (0.001)
& \textbf{-6.452} (0.028)
& \textbf{0.797} (0.005)
& \underline{0.608} (0.026)
& 8.268 (0.036) \\
& $f$-DRO
& \underline{0.109} (0.011)
& \underline{-6.765} (0.183)
& \textbf{0.797} (0.006)
& \textbf{0.623} (0.032)
& 7.434 (0.097) \\
& Exact-Cox
& 0.116 (0.003)
& -6.890 (0.041)
& 0.774 (0.021)
& 0.556 (0.078)
& \textbf{7.330} (0.024) \\
& Ours
& 0.114 (0.003)
& -6.839 (0.048)
& \underline{0.780} (0.028)
& 0.605 (0.051)
& \underline{7.357} (0.061) \\
\bottomrule
\bottomrule
\end{tabular}
\end{table}

The proposed method reduces test tail loss relative to the standard Cox models in all settings, while these baselines achieve better IBS and test log-likelihood.
It attains the highest overall and worst-group C-index on METABRIC with linear models; on FLC, its worst-group C-index is within $0.003$ of the standard Cox models.

\begin{samepage}
\paragraph{Rationale for controlled contamination.}
\citet{luo2022penalized} model survival outliers through individual changes in hazard rates, allowing survival times to be unusually long or short relative to predictions based on covariates.
Our constructions introduce such departures at a known contamination proportion while preserving the covariates and clinically defined evaluation groups; their effects can differ across datasets and model classes.
All methods use the same contamination construction within each setting.
Appendix~\ref{appendix:additional_outliers} evaluates an alternative that appends perturbed observations while preserving the original training samples.

\subsubsection{Additional outlier setting}\label{appendix:additional_outliers}

We evaluate linear models on FLC after appending perturbed copies of short-survival event samples, preserving the original training data.
\end{samepage}
Let
\begin{equation}
\mathcal{E}
=
\{i : z_i \in \mathcal{D}_{\mathrm{train}},\ \delta_i = 1\}
\label{eq:event_set}
\end{equation}
denote the index set of uncensored event samples in the training data. 
We sort the samples indexed by $\mathcal{E}$ by their observed survival times $y_i$ in ascending order and select the lowest $2\%$.
Specifically, we define
\begin{equation}
\mathcal{O}_{\mathrm{low}}
=
\left\{
i \in \mathcal{E} : y_i \le c_{0.02}
\right\}.
\label{eq:low_event_set}
\end{equation}
Here, $c_{0.02}$ is the empirical 2nd percentile of the observed event times among samples in $\mathcal{E}$.

Each selected sample $i\in\mathcal O_{\mathrm{low}}$ is duplicated with its covariates and event indicator unchanged, and its survival time resampled near the median event time:
\begin{equation}
\widetilde{x}_i = x_i,
\qquad
\widetilde{\delta}_i = \delta_i,
\qquad
\widetilde{y}_i \sim \mathcal{U}(m_E-\tau, m_E+\tau),
\label{eq:appended_outlier_time}
\end{equation}
where $m_E$ is the median observed survival time among event samples and $\tau = 0.1 m_E$. 
The resulting duplicated sample is denoted by
\begin{equation}
\widetilde{z}_i
=
(\widetilde{x}_i, \widetilde{y}_i, \widetilde{\delta}_i).
\label{eq:appended_outlier_sample}
\end{equation}

The contaminated training set is
\begin{equation}
\widetilde{\mathcal{D}}_{\mathrm{train}}
=
\mathcal{D}_{\mathrm{train}}
\cup
\{\widetilde{z}_i : i \in \mathcal{O}_{\mathrm{low}}\}.
\label{eq:appended_contaminated_train}
\end{equation}
The appended samples retain covariates from short-survival cases but have times near the median, creating inconsistencies between covariates and outcomes.

\begin{table}[H]
\caption{Linear-model results on FLC with appended outliers. The 10\% and 5\% settings specify the validation tail loss for the DRO methods; Elastic-Cox uses validation log-likelihood. Values are mean (standard deviation) over 10 independent runs.}
\label{tab:flc_additional_outlier_linear}
\centering
\tiny
\setlength{\tabcolsep}{7.8pt}
\renewcommand{\arraystretch}{1.0}
\begin{tabular}{llccccc}
\toprule
\toprule
\multicolumn{7}{c}{\textbf{FLC}} \\
\midrule
\textbf{Model} & \textbf{Method}
& \textbf{IBS} $\downarrow$
& \textbf{Test log-likelihood} $\uparrow$
& \textbf{Overall C-index} $\uparrow$
& \textbf{Worst-group C-index} $\uparrow$
& \textbf{Test tail loss} $\downarrow$ \\
\midrule
\multirow{3}{*}{Linear (10\%)}
& Elastic-Cox
& \textbf{0.091} (0.001)
& \textbf{-6.430} (0.025)
& \textbf{0.798} (0.004)
& \underline{0.618} (0.030)
& \underline{7.520} (0.017) \\
& $f$-DRO
& 0.115 (0.016)
& -7.377 (1.157)
& 0.728 (0.106)
& 0.579 (0.092)
& 8.933 (2.887) \\
& Ours
& \underline{0.113} (0.002)
& \underline{-6.842} (0.031)
& \underline{0.797} (0.004)
& \textbf{0.644} (0.028)
& \textbf{7.180} (0.009) \\
\midrule
\multirow{3}{*}{Linear (5\%)}
& Elastic-Cox
& \textbf{0.091} (0.001)
& \textbf{-6.430} (0.025)
& \textbf{0.798} (0.004)
& \underline{0.618} (0.030)
& \underline{7.520} (0.017) \\
& $f$-DRO
& 0.129 (0.006)
& -7.629 (0.011)
& 0.688 (0.089)
& 0.563 (0.087)
& 10.177 (4.925) \\
& Ours
& \underline{0.120} (0.002)
& \underline{-6.970} (0.018)
& \underline{0.797} (0.005)
& \textbf{0.646} (0.031)
& \textbf{7.283} (0.004) \\
\bottomrule
\bottomrule
\end{tabular}
\end{table}

In Table~\ref{tab:flc_additional_outlier_linear}, Elastic-Cox achieves the best IBS, test log-likelihood, and overall C-index.
The proposed method achieves the best worst-group C-index and test tail loss under both validation criteria, with overall C-index close to Elastic-Cox.
For $f$-DRO, the 5\% validation criterion yields lower overall C-index and higher test tail loss than the 10\% criterion.

\subsubsection{Contamination sensitivity}
\label{appendix:contamination_sensitivity}
Tables~\ref{tab:outlier_shift_metabric} and~\ref{tab:outlier_shift} use the same risk-inversion construction (Appendix~\ref{appendix:datasets}) on both datasets, with 2\% contamination for linear models and 5\% for nonlinear models.
To examine sensitivity to the contamination rate, Table~\ref{tab:contamination_sensitivity} evaluates all four model--dataset combinations at both rates, using the 10\% validation setting of Table~\ref{tab:outlier_shift_metabric}.
The (linear, 2\%) and (nonlinear, 5\%) entries coincide with the 10\% validation results in Tables~\ref{tab:outlier_shift_metabric} and~\ref{tab:outlier_shift}.
At 5\%, the proposed method attains the highest worst-group C-index in all four settings.
At 2\%, it leads in the linear settings but trails the best nonlinear baselines.
Its test tail loss is the lowest or second lowest in every setting.

\begin{table}[H]
\caption{Worst-group C-index and 5\% test tail loss under the risk-based contamination applied to all model classes and datasets at contamination rates of 2\% and 5\%. Hyperparameters are selected by the 10\% validation tail loss for Robust-Cox, $f$-DRO, Exact-Cox, and the proposed method and by validation log-likelihood for Elastic-Cox and Cox-NN. Values are mean (standard deviation).}
\label{tab:contamination_sensitivity}
\centering
\tiny
\setlength{\tabcolsep}{9pt}
\renewcommand{\arraystretch}{1.0}
\begin{tabular}{llcccc}
\toprule
\toprule
& & \multicolumn{2}{c}{\textbf{2\% contamination}} & \multicolumn{2}{c}{\textbf{5\% contamination}} \\
\cmidrule(lr){3-4} \cmidrule(lr){5-6}
\textbf{Model} & \textbf{Method}
& \textbf{Worst-group C-index} $\uparrow$ & \textbf{Test tail loss} $\downarrow$
& \textbf{Worst-group C-index} $\uparrow$ & \textbf{Test tail loss} $\downarrow$ \\
\midrule
\multicolumn{6}{c}{\textbf{METABRIC}} \\
\midrule
\multirow{5}{*}{Linear}
& Elastic-Cox & 0.501 (0.058) & 6.305 (0.111) & 0.466 (0.064) & 6.195 (0.178) \\
& Robust-Cox & 0.562 (0.042) & 6.229 (0.048) & 0.520 (0.052) & 6.117 (0.091) \\
& $f$-DRO & \underline{0.574} (0.030) & \textbf{5.825} (0.015) & \underline{0.567} (0.031) & \textbf{5.822} (0.015) \\
& Exact-Cox & 0.459 (0.057) & 6.179 (0.164) & 0.393 (0.032) & 6.267 (0.176) \\
& Ours & \textbf{0.613} (0.028) & \underline{5.835} (0.011) & \textbf{0.609} (0.028) & \underline{5.835} (0.008) \\
\midrule
\multirow{5}{*}{Nonlinear}
& Cox-NN & \underline{0.572} (0.038) & 6.404 (0.084) & 0.508 (0.091) & 6.561 (0.310) \\
& Robust-Cox & \textbf{0.586} (0.040) & 6.351 (0.077) & \underline{0.545} (0.049) & 6.295 (0.161) \\
& $f$-DRO & 0.559 (0.043) & \textbf{5.832} (0.049) & 0.533 (0.047) & \textbf{5.839} (0.041) \\
& Exact-Cox & 0.419 (0.105) & 6.139 (0.100) & 0.395 (0.078) & 6.226 (0.178) \\
& Ours & 0.571 (0.039) & \underline{5.901} (0.029) & \textbf{0.585} (0.040) & \underline{5.902} (0.014) \\
\midrule
\midrule
\multicolumn{6}{c}{\textbf{FLC}} \\
\midrule
\multirow{5}{*}{Linear}
& Elastic-Cox & 0.618 (0.038) & 7.650 (0.114) & 0.602 (0.035) & 7.560 (0.128) \\
& Robust-Cox & 0.611 (0.040) & 7.854 (0.031) & 0.594 (0.040) & 7.626 (0.030) \\
& $f$-DRO & \underline{0.626} (0.037) & \underline{7.409} (0.035) & \underline{0.618} (0.035) & \underline{7.384} (0.048) \\
& Exact-Cox & 0.609 (0.037) & 7.581 (0.127) & 0.584 (0.044) & 7.506 (0.173) \\
& Ours & \textbf{0.631} (0.036) & \textbf{7.283} (0.006) & \textbf{0.632} (0.036) & \textbf{7.284} (0.006) \\
\midrule
\multirow{5}{*}{Nonlinear}
& Cox-NN & \underline{0.614} (0.037) & 7.917 (0.183) & 0.623 (0.029) & 7.801 (0.208) \\
& Robust-Cox & \textbf{0.626} (0.041) & 8.071 (0.173) & 0.602 (0.030) & 8.134 (0.324) \\
& $f$-DRO & \textbf{0.626} (0.031) & \underline{7.457} (0.035) & \underline{0.624} (0.028) & \underline{7.447} (0.043) \\
& Exact-Cox & 0.582 (0.057) & 7.603 (0.209) & 0.561 (0.081) & 7.622 (0.188) \\
& Ours & 0.613 (0.040) & \textbf{7.359} (0.064) & \textbf{0.630} (0.033) & \textbf{7.371} (0.067) \\
\bottomrule
\bottomrule
\end{tabular}
\end{table}

\subsubsection{Computational runtime}
\label{appendix:runtime}
Table~\ref{tab:runtime_comparison} compares model-fitting times for linear Cox models on METABRIC and FLC.
For this comparison, all optimization-based methods, including the proposed method, are run for 100 iterations, separately from the training schedules in Appendix~\ref{appendix:training_selection}.
The proposed method is faster than $f$-DRO on both datasets, although standard Cox and Exact-Cox are faster.
With the $K=200$ schedule of Appendix~\ref{appendix:training_selection}, it takes $0.631$ and $1.283$ seconds on METABRIC and FLC, respectively.
The corresponding complexity analysis is given in Appendix~\ref{appendix:computational_complexity}.

\begin{table}[H]
\caption{Model-fitting time in seconds, reported as mean (standard deviation) over 10 repetitions on a single CPU thread. All optimization-based methods are run for 100 iterations in this comparison.}
\label{tab:runtime_comparison}
\centering
\footnotesize
\setlength{\tabcolsep}{4pt}
\renewcommand{\arraystretch}{0.95}
\begin{tabularx}{\textwidth}{
  >{\centering\arraybackslash}l
  >{\centering\arraybackslash}X
  >{\centering\arraybackslash}X
  >{\centering\arraybackslash}X
  >{\centering\arraybackslash}X
  >{\centering\arraybackslash}X
}
\toprule
\textbf{Dataset}
& \textbf{Cox}
& \textbf{Elastic-Cox}
& \textbf{$f$-DRO}
& \textbf{Exact-Cox}
& \textbf{Ours}
\\
\midrule
METABRIC
& 0.104 {\scriptsize (0.005)}
& 1.239 {\scriptsize (0.040)}
& 1.129 {\scriptsize (0.281)}
& 0.195 {\scriptsize (0.004)}
& 0.368 {\scriptsize (0.008)}
\\
FLC
& 0.356 {\scriptsize (0.133)}
& 2.210 {\scriptsize (0.796)}
& 18.122 {\scriptsize (0.531)}
& 0.236 {\scriptsize (0.007)}
& 0.817 {\scriptsize (0.135)}
\\
\bottomrule
\end{tabularx}

\end{table}


\end{document}